\title{Depth-Width Tradeoffs for Transformers on Graph Tasks}
\newcommand{\reals}{\mathbb{R}}
\newcommand{\naturals}{\mathbb{N}}
\newcommand{\vertiii}[1]{{\left\vert\kern-0.25ex\left\vert\kern-0.25ex\left\vert #1 
    \right\vert\kern-0.25ex\right\vert\kern-0.25ex\right\vert}}
\newcommand{\beq}{\begin{eqnarray*}}
\newcommand{\eeq}{\end{eqnarray*}}
\newcommand{\beqn}{\begin{eqnarray}}
\newcommand{\eeqn}{\end{eqnarray}}
\newcommand{\ent}[1][]{%
\ifthenelse{\isempty{#1}}{%
\mathrm{H}
}{
\mathrm{H}^{(#1)}
}}
\newcommand{\loch}[1][]{%
\ifthenelse{\isempty{#1}}{%
\mathrm{h}
}{
\mathrm{h}^{(#1)}
}}
\newcommand{\Ncal}{\mathcal{N}}
\newcommand{\be}{\mathbf{e}}
\newcommand{\bx}{\mathbf{x}}
\newcommand{\bw}{\mathbf{w}}
\newcommand{\bu}{\mathbf{u}}
\newcommand{\bv}{\mathbf{v}}
\newcommand{\bz}{\mathbf{z}}
\newcommand{\by}{\mathbf{y}}
\newcommand{\inner}[1]{\left\langle#1\right\rangle}
\newcommand{\secref}[1]{Sec.~\ref{#1}}
\newcommand{\eqref}[1]{Eq.~(\ref{#1})}
\newcommand{\sm}{\textsc{sm}}
\newcommand{\tin}{\text{in}}
\newcommand{\blue}[1]{\textcolor{black}{#1}}
\newtheorem{theorem}{Theorem}[section]
\newtheorem{lemma}[theorem]{Lemma}
\newtheorem{remark}[theorem]{Remark}
\newtheorem{conjecture}[theorem]{Conjecture}
\newcommand{\amirg}[1]{}
\newcommand{\gilad}[1]{}
\newcommand{\rgb}[1]{}
\author{%
  Gilad Yehudai\thanks{Equal contribution} \\
  Courant Institute of Mathematical Sciences,\\ New York University\\
  \texttt{gy2209@nyu.edu}
  \And 
  Clayton Sanford$^*$\\ Google Research
  \And 
  Maya Bechler-Speicher\\
  Meta AI
  \And 
  Or Fischer \\
  Bar-Ilan University
  \And 
   Ran Gilad-Bachrach \\
   Department of Bio-Medical Engineering, \\ 
   Edmond J. Safra Center for Bioinformatics, Tel-Aviv University\\
   Tel Aviv University
   \And 
   Amir Globerson \\
   Google Research\\
   Tel-Aviv University
  % examples of more authors
  % \And
  % Coauthor \\
  % Affiliation \\
  % Address \\
  % \texttt{email} \\
  % \AND
  % Coauthor \\
  % Affiliation \\
  % Address \\
  % \texttt{email} \\
  % \And
  % Coauthor \\
  % Affiliation \\
  % Address \\
  % \texttt{email} \\
  % \And
  % Coauthor \\
  % Affiliation \\
  % Address \\
  % \texttt{email} \\
}
\begin{document}

\maketitle

\begin{abstract}
Transformers have revolutionized the field of machine learning. 
In particular, they can be used to solve complex algorithmic problems, including graph-based tasks. In such algorithmic tasks a key question is what is the minimal size of a transformer that can implement the task. Recent work has begun to explore this problem for graph-based tasks, showing that for sub-linear embedding dimension (i.e., model width) logarithmic depth suffices. However, an open question, which we address here, is what happens if width is allowed to grow linearly, while depth is kept fixed. Here we analyze this setting, and provide the surprising result that with linear width, constant depth suffices for solving a host of graph-based problems.  This suggests that a moderate increase in width can allow much shallower models, which are advantageous in terms of inference 
and train
time. For other problems, we show that quadratic width is required. Our results demonstrate the complex and intriguing landscape of transformer implementations of graph-based algorithms. 
We empirically investigate these trade-offs between the relative powers of depth and width and find tasks where wider models have the same accuracy as deep models, while having much faster train and inference time due to parallelizable hardware.
\end{abstract}

\section{Introduction}

The transformer architecture \citep{vaswani2017attention},
%which was initially introduced for machine translation \cite{bahdanau2014neural},\amirg{this wasn't exactly self-attention so maybe not state this}
has emerged as the state-of-the-art neural network architecture across many fields, including language \citep{brown2020language}, computer vision \citep{dosovitskiy2021imageworth16x16words} and molecular analysis \citep{jumper2021highly}.
This success prompts a basic question: which algorithms can be implemented using transformers, and how do specific architecture choices affect this capability. 
Understanding these questions could enable practitioners to design more computationally efficient models without compromising expressivity.
%and to grasp the computational properties of key learnable primitives.
% Understanding the trade-offs between architectural choices is important, as it allows practitioners to design models that are more computationally efficient without compromising their expressiveness.\amirg{add something about the practical importancen of this understanding}  
%\gilad{Is it better?}
%In order to explain how transformers emerged as the dominant neural architecture,\amirg{let's put emphasis on future rather than explaining their prior success} it is important to understand the fundamental algorithmic primitives that transformers can solve in an efficient manner and the architectural trade-offs between different design decisions.

Two significant architectural elements of transformers are the \textit{depth} (the number of layers) and \textit{width} (the dimension of their latent representation). 
Increasing either property increases the expressivity of the underlying architecture, but the qualitative differences between ``shallow and wide'' and ``deep and narrow'' architectures are poorly understood.
% Clearly, expressivity can be gained by increasing one or the other. 
% However, the relative advantages of each of these choices is not sufficiently understood. 
Given the widespread use of specialized parallel hardware for training and serving modern language models, increasing the width has a limited effect on model latency, provided sufficient memory.
% One clear advantage of increasing width is that is has little effect on inference time, because  GPU implementation parallelizes these computations, as long as memory is sufficient. 
A wide range of recent theoretical work \citep[see e.g.,][]{merrill2023parallelism, liu2023transformerslearnshortcutsautomata} explores these fundamental expressivity questions for various benchmark tasks and transformer scaling regimes.
% Recent work has begun to explore these expressivity questions \citep[e.g.,][]{merrill2023parallelism, liu2023transformerslearnshortcutsautomata}.
However, a complete understanding of the interplay between architecture and algorithmic capabilities---and in particular the \textit{fine-grained} interplay between depth and width---remains elusive.
In this work, we aim to crystallize the emergent algorithmic capabilities of transformers as a function of model depth and width, and demonstrate occasions when, contrary to conventional wisdom, the benefits of width eclipse those of depth. Specifically, we focus on the following question:
%However, it is not clear what width incerease is reuqired in order to express algorithms of interest. This is precisely the question we ask here. Namely: 
%To that end, this paper addresses the following question.\amirg{this jumps too quickly to the question of width. Maybe before give a few sentences about how a given algorithmic problem can be solved with different architectures, with different width-depth, and it's important to understand this tradeoff, for practical reasons. Especially, given the computational advantage of width.}
\begin{quote}
    % \textit{Which architectural properties are necessary for transformers to solve graph algorithmic tasks?}
\hspace{-1cm} \textit{What are the algorithmic capabilities obtained by increasing the width of a transformer?}
\end{quote}
% In this work, we consider transformers that compute properties such as connectivity or the existence of a subgraph on some input graphs.  
% We use this to develop a hierarchy of algorithmic tasks that proves sharp thresholds in the model width.

We ground this question in the context of graph algorithmic tasks, which provide a compelling testbed for understanding transformer reasoning.
% Graphs provide a compelling testbed for understanding transformer reasoning for several reasons. 
Graph algorithms serve as a natural ``algorithmic playground,'' encompassing a wide range of well-known problems that span computational classes. 
Many of these tasks have already been investigated as benchmarks for language models \citep{fatemi2023talk}.
Furthermore, previous theoretical works have routinely employed these algorithmic tasks to unveil the capabilities and limitations of graph neural networks \citep[GNNs, ][]{gilmer2017neural}.
For example, the theoretical analysis of \citet{loukas2019graph} revealed lower bounds on the depth and width necessary to determine connectivity.
\citet{sanford2024understanding} employed similar analysis to elucidate such trade-offs for transformers.
However, their results pertain primarily to the powers of depth and a specific edge-list representation.
Other scaling regimes---such as transformers with fixed depth and variable width---have yet to be explored in the context of this toolkit.

% Furthermore, transformers have structural similarities with graph neural networks  \citep[GNNs, e.g.][]{gilmer2017neural,kipf2017semisupervised, gat,graphsage}, and graph tasks are natural for distinguishing these model families.
% Graph tasks concisely \clayton{rephrase, reduce; emphasize toolkit, toolkit useful for GNNs; initial applications to transformers that we want to build on} formalize the limitations of GNNs \citep[see e.g.][]{xu2018powerful, loukas2019graph}, and recent work by  \cite{sanford2024understanding} uses similar tasks to contrast the two models. 
% % Second, graph neural network (GNN) architectures \citep{gilmer2017neural,kipf2017semisupervised, gat,graphsage} are a well-established class of models for graph-structured data, but have well-known limitations on their expressive power \citep{xu2018powerful, loukas2019graph}. 
% However, those results are principally concerned with the powers of depth, without considering fine-grained changes to model width.
% This raises the question whether transformers \emph{in the small depth regime} remain a remedy for the structural limitations of GNNs.
% This raises an important question of whether transformers are a remedy for those structural limitations of GNNs.
% \amirg{ref to graph-transformer papers?} \gilad{We deliberately haven't referenced them here, because we don't want to look as a paper about graph transformers which is different than what we study here.}\amirg{yes, but if you mention transformers as an attractive alternative to GNNs, I don't see how you can avoid mentioning graph transformers.}

For such graph reasoning tasks, we are principally concerned with the optimal transformer size scaling as a function of the graph size.
% For graph reasoning tasks, our key question is how transformers should grow as the graph grows.
% Specifically, given that the graph is of size $n$, we ask how the transformer width and depth need to scale with $n$ in order to solve the problem.
Specifically, given a graph with $n$ nodes and an encoding of an algorithmic task, such as determining connectivity or counting the number of triangles, we investigate the width necessary and sufficient to solve the task as a function of $n$.
% Recent works \citep{sanford2024transformers,sanford2024representational} have begun to address this question, but asked how should the depth grow as a function of $n$ if the width is limited. 
% Our focus is on the case of fixed depth, which is arguably of more interest because of the aforementioned advantages of limited depth on inference time.
This fixed-depth, variable-width setting is arguably more pertinent given the aforementioned benefits of width in computational efficiency.
% Our focus is on the fixed-depth, variable-width setting, which is arguably more pertinent given the aforementioned benefits of width in computational efficiency.
Indeed, practical applications of transformers to graphs have  width that is often much larger than the number of vertices in the graph.\footnote{See \Cref{tab:small_graph_data_modified} for a list of commonly used graph datasets (including molecular datasets) where the average graph size in the data is less than $40$.}
%while the works of \citep{sanford2024transformers,sanford2024representational} explicitly studied a regime where the width of the transformer is sub-linear in the number of vertices.\amirg{better tie this to the comment about previous work}
%\footnote{See \Cref{tab:small_graph_data_modified} for a list of commonly used graph datasets (including molecular datasets) where the average graph size in the data is less than $40$. %This is while commonly used network width is above $64$.}

%We focus on graph algorithmic reasoning tasks, and ask how should transformers grow as the graph size grows. 
%We argue that increasing the width of the trained transformers augments fundamental and practical capabilities.
Our theoretical contribution is a novel representational hierarchy that characterizes this depth-width dependence in transformers. Surprisingly, our results show that multiple problems of interest can be solved with fixed depth and linear width with respect to $n$. We provide empirical support for the advantage of these {\em shallow and wide} models.

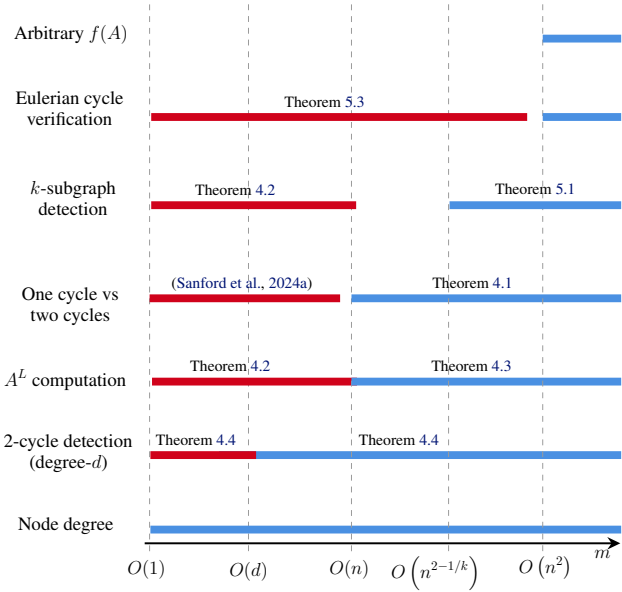
\begin{wrapfigure}{R}{0.5\textwidth}
% \begin{figure}
    % \vspace{-\intextsep} 
    \centering
    \resizebox{0.5\textwidth}{!}{
    \tikzset{every picture/.style={line width=0.75pt}} %set default line width to 0.75pt        

\begin{tikzpicture}[x=0.75pt,y=0.75pt,yscale=-1,xscale=1]
%uncomment if require: \path (0,603); %set diagram left start at 0, and has height of 603

%Straight Lines [id:da05949804455615104] 
\draw [line width=1.5]    (165,550) -- (646,550) ;
\draw [shift={(650,550)}, rotate = 180] [fill={rgb, 255:red, 0; green, 0; blue, 0 }  ][line width=0.08]  [draw opacity=0] (13.4,-6.43) -- (0,0) -- (13.4,6.44) -- (8.9,0) -- cycle    ;
%Straight Lines [id:da1549508802957622] 
\draw [color={rgb, 255:red, 155; green, 155; blue, 155 }  ,draw opacity=1 ][line width=0.75]  [dash pattern={on 4.5pt off 4.5pt}]  (170,0.09) -- (170,560) ;
%Straight Lines [id:da523796640478918] 
\draw [color={rgb, 255:red, 155; green, 155; blue, 155 }  ,draw opacity=1 ][line width=0.75]  [dash pattern={on 4.5pt off 4.5pt}]  (270.85,0) -- (270.85,559.91) ;
%Straight Lines [id:da02968036908105165] 
\draw [color={rgb, 255:red, 74; green, 144; blue, 226 }  ,draw opacity=1 ][line width=6]    (170.83,536) -- (650.85,536) ;
%Straight Lines [id:da19707054349821873] 
\draw [color={rgb, 255:red, 74; green, 144; blue, 226 }  ,draw opacity=1 ][line width=6]    (240.85,460) -- (650.85,460) ;
%Straight Lines [id:da48956982174890173] 
\draw [color={rgb, 255:red, 208; green, 2; blue, 27 }  ,draw opacity=1 ][line width=6]    (170.83,460) -- (278.59,460) ;
%Straight Lines [id:da9130281044496436] 
\draw [color={rgb, 255:red, 208; green, 2; blue, 27 }  ,draw opacity=1 ][line width=6]    (172.49,385) -- (381.45,385) ;
%Straight Lines [id:da07489458494003531] 
\draw [color={rgb, 255:red, 155; green, 155; blue, 155 }  ,draw opacity=1 ][line width=0.75]  [dash pattern={on 4.5pt off 4.5pt}]  (375.85,0.09) -- (375.85,560) ;
%Straight Lines [id:da22070041972477217] 
\draw [color={rgb, 255:red, 74; green, 144; blue, 226 }  ,draw opacity=1 ][line width=6]    (375.85,385) -- (650.85,385) ;
%Straight Lines [id:da2943336214121546] 
\draw [color={rgb, 255:red, 208; green, 2; blue, 27 }  ,draw opacity=1 ][line width=6]    (170,300) -- (364.27,300) ;
%Straight Lines [id:da3917640716886506] 
\draw [color={rgb, 255:red, 74; green, 144; blue, 226 }  ,draw opacity=1 ][line width=6]    (375.85,300) -- (650.85,300) ;
%Straight Lines [id:da9343765009532565] 
\draw [color={rgb, 255:red, 155; green, 155; blue, 155 }  ,draw opacity=1 ][line width=0.75]  [dash pattern={on 4.5pt off 4.5pt}]  (570.85,0) -- (570.85,559.91) ;
%Straight Lines [id:da23952378173305444] 
\draw [color={rgb, 255:red, 155; green, 155; blue, 155 }  ,draw opacity=1 ][line width=0.75]  [dash pattern={on 4.5pt off 4.5pt}]  (474.85,0) -- (474.85,559.91) ;
%Straight Lines [id:da9252220768798591] 
\draw [color={rgb, 255:red, 208; green, 2; blue, 27 }  ,draw opacity=1 ][line width=6]    (171.66,205) -- (380.62,205) ;
%Straight Lines [id:da6700075586606205] 
\draw [color={rgb, 255:red, 74; green, 144; blue, 226 }  ,draw opacity=1 ][line width=6]    (475.85,205) -- (650.85,205) ;
%Straight Lines [id:da8187795313352906] 
\draw [color={rgb, 255:red, 208; green, 2; blue, 27 }  ,draw opacity=1 ][line width=6]    (171.66,115) -- (554.99,115) ;
%Straight Lines [id:da22083744607268074] 
\draw [color={rgb, 255:red, 74; green, 144; blue, 226 }  ,draw opacity=1 ][line width=6]    (570.85,115) -- (650.85,115) ;
%Straight Lines [id:da1740782925087423] 
\draw [color={rgb, 255:red, 74; green, 144; blue, 226 }  ,draw opacity=1 ][line width=6]    (570.85,35) -- (650.85,35) ;

% Text Node
\draw (622,555) node [anchor=north west][inner sep=0.75pt]  [font=\Large] [align=left] { $\displaystyle m$};
% Text Node
\draw (84.5,522) node [anchor=north] [inner sep=0.75pt]  [font=\Large] [align=left] {Node degree};
% Text Node
\draw (146,565) node [anchor=north west][inner sep=0.75pt]  [font=\Large] [align=left] {$\displaystyle O( \log(n))$};
% Text Node
\draw (249.85,567) node [anchor=north west][inner sep=0.75pt]  [font=\Large] [align=left] {$\displaystyle O( d)$};
% Text Node
\draw (87,437) node [anchor=north] [inner sep=0.75pt]  [font=\Large] [align=left] {\begin{minipage}[lt]{112.94pt}\setlength\topsep{0pt}
\begin{center}
2-cycle detection\\(degree-$\displaystyle d$)
\end{center}

\end{minipage}};
% Text Node
\draw (84,371) node [anchor=north] [inner sep=0.75pt]  [font=\Large] [align=left] {\begin{minipage}[lt]{107.76pt}\setlength\topsep{0pt}
\begin{center}
$\displaystyle A^{L}$ computation%\\(depth $\displaystyle L$)$ $
\end{center}

\end{minipage}};
% Text Node
\draw (351.85,565) node [anchor=north west][inner sep=0.75pt]  [font=\Large] [align=left] {$\displaystyle O( n)$};
% Text Node
\draw (414.85,562) node [anchor=north west][inner sep=0.75pt]  [font=\Large] [align=left] {$\displaystyle O\left( n^{2-1/k}\right)$};
% Text Node
\draw (92,179) node [anchor=north] [inner sep=0.75pt]  [font=\Large] [align=left] {\begin{minipage}[lt]{80.73pt}\setlength\topsep{0pt}
\begin{center}
$\displaystyle k$-subgraph \\detection
\end{center}

\end{minipage}};
% Text Node
\draw (89.28,287) node [anchor=north] [inner sep=0.75pt]  [font=\Large] [align=left] {\begin{minipage}[lt]{82.73pt}\setlength\topsep{0pt}
\begin{center}
One cycle vs two cycles
\end{center}

\end{minipage}};
% Text Node
\draw (88.5,87) node [anchor=north] [inner sep=0.75pt]  [font=\Large] [align=left] {\begin{minipage}[lt]{94.16pt}\setlength\topsep{0pt}
\begin{center}
Eulerian cycle\\verification
\end{center}

\end{minipage}};
% Text Node
\draw (90,20) node [anchor=north] [inner sep=0.75pt]  [font=\Large] [align=left] {Arbitrary $\displaystyle f( A)$};
% Text Node
\draw (544.85,562) node [anchor=north west][inner sep=0.75pt]  [font=\Large] [align=left] {$\displaystyle O\left( n^{2}\right)$};
% Text Node
\draw (175,437) node [anchor=north west][inner sep=0.75pt]  [font=\large] [align=left] {\cref{thm:adj-cycle-bounded-degree}};
% Text Node
\draw (381.85,437) node [anchor=north west][inner sep=0.75pt]  [font=\large] [align=left] {\cref{thm:adj-cycle-bounded-degree}};
% Text Node
\draw (210,362) node [anchor=north west][inner sep=0.75pt]  [font=\large] [align=left] {\cref{thm: 2 cycle lower bound}};
% Text Node
\draw (455.85,362) node [anchor=north west][inner sep=0.75pt]  [font=\large] [align=left] {\cref{thm:powers of adj}};
% Text Node
\draw (191,277) node [anchor=north west][inner sep=0.75pt]  [font=\large] [align=left] {\cite{sanford2024understanding} };
% Text Node
\draw (456.85,277) node [anchor=north west][inner sep=0.75pt]  [font=\large] [align=left] {\cref{thm: 1 vs 2}};
% Text Node
\draw (215,182) node [anchor=north west][inner sep=0.75pt]  [font=\large] [align=left] {\cref{thm: 2 cycle lower bound}};%\gilad{what should be here?}\clay{I don't think we know anything about the interval between $n$ and $n^{2-1/k}$}};
% Text Node
\draw (520.85,181) node [anchor=north west][inner sep=0.75pt]  [font=\large] [align=left] {\Cref{thm: subgraphs}};
% Text Node
\draw (306,92) node [anchor=north west][inner sep=0.75pt]  [font=\large] [align=left] {\cref{thm: eulerian cycle}};

\end{tikzpicture}
    }
    \caption{The width complexity hierarchy of graph tasks for transformers with 
    % node-adjacency tokenizations and 
    constant depth. 
    Each row visualizes the width regimes where the task is solvable (blue) or hard (red). }
    \label{fig:hierarchy}
% \end{figure}
\end{wrapfigure}

Our results elucidate a hierarchy concerning the minimum width needed for constant-depth transformers to represent solutions to various graph algorithmic tasks.
% In this paper, we seek to understand which graph algorithms can be implemented in realistic transformer scaling regimes, such as when the width grows linearly in the number of tokens and the depth is constant, independent of the graph size. 
% Concretely, we consider transformers that take as input graphs with adjacency embeddings for each node and place upper and lower bounds on the minimum width needed to solve several tasks.
% We focus on the constant-depth regime, where we do not allow the number of attention layers to scale as a function of the input size.
\Cref{sec:super linear} establishes the relevance of the \textit{linear scaling regime}, where the width of the transformer increases linearly in the number of nodes $n$. Specifically, it considers tasks, such as graph connectivity and detection of fixed length cycles, for which linear width is necessary and sufficient for dense graph input.
% Bounded-degree assumptions reduce the width threshold for the cycle detection task.
\Cref{sec:super linear} demonstrates that more complex tasks, including subgraph counting and Eulerian cycle verification, require super-linear and near-quadratic width respectively.
The resulting hierarchy over transformer widths induced by our collection of positive and negative results is visualized in \Cref{fig:hierarchy}, which ranges from local node-level tasks that can be solved with constant width (such as computing the degree of each node), to arbitrary functions that require quadratic width scaling. Some of the lower bounds are conditional on the well known one vs. two cycles conjecture (see \cref{subsec: 1vs2} for a formal statement).
\amirg{worth stating that some of these bounds are conditional} \gilad{added}

The empirical results of \Cref{sec:experiments} demonstrate
the relevance of our results in practice. We consider various graph problems, and experiment with varying the depth and width of the model under a constraint on the overall parameter count. Our results demonstrate that 
shallower and wider models are significantly faster than narrow and deep models.

%For several graph problems, given a fixed parameter count, varying the depth and width provide a trade-off in both training and inference time. Shallower and wider models are significantly faster than narrow and deep models.

% that the powers of model width are relevant beyond the setting of model expressivity.
% For problems like graph connectivity, we train a range of transformers with fixed parameter counts, but variable widths and depths.
% Contrary to conventional wisdom in deep learning, shallower networks solve the tasks with fewer training steps and reduced computational cost.

% We validate our results empirically in \Cref{sec:experiments}.
% Specifically, we show that transformers with large embedding dimension and low depth achieve similar accuracy to deeper models, while having faster training and inference time, due to GPU parallelization.

\section{Related Works}

\paragraph{Expressive power of transformers}

Transformer with arbitrary depth \citep{wei2021statistically, yun2020transformersuniversalapproximatorssequencetosequence} or arbitrarily many chain-of-thought tokens \citep{malach2023autoregressive} are known to be universal approximators. 
Informally, these results suggest that any algorithm can be simulated with a {\em sufficiently large} transformer. They, however, leave open more fine-grained questions about more compact transformer implementations of a given algorithm, particularly in terms of parameter count and depth.

%While these universality results are fundamental, they motivate more fine-grained results by leaving open questions about the ability of transformers to solve tasks \textit{efficiently}, particularly in terms of parameter count and depth.

Various theoretical techniques have been employed to develop a more precise understanding of the relevant scaling trade-offs. 
For instance, \citet{Merrill_2023} show that constant-depth transformers with polynomial width can be simulated by $\mathsf{TC^0}$ Boolean circuits, implying that they cannot solve problems like graph connectivity.
These results provide essential context for our paper, where we introduce scaling regimes where transformers can and cannot learn such tasks.%; while we prove that transformers with a constant number of attention layers and bounded width can solve such tasks (e.g. in \Cref{thm: 1 vs 2}), this does not contradict their bounds, and together, they imply that a sufficiently large depth or width is necessary \textit{in the MLPs} that follow the self-attention layers of our constructions.
%\amirg{this isn't clear. As written it sounds like there is a contradiction and it's not clear how it's resolved.}\gilad{Is this better?} \clayton{I think this might not be the best location for this point; only a small number of readers will be concerned by this. I modified this a bit to add a footnote to the section about connectivity.}
Similarly, \citet{Hao_2022} identify formal language classes that can and cannot be recognized by hard-attention transformers.
Another way to augment transformer expressivity is to use chain-of-thought reasoning \citep{wei2022chain}. Indeed, it has been shown that sufficiently long chains reasoning can simulate finite-state automata \citep{liu2023transformerslearnshortcutsautomata, li2024chain}.

A different perspective frames transformers in terms of communication complexity. \citet{sanford2024representational, sanford2024transformers} draw an analogy between transformers and the Massively Parallel Computation (\textsc{MPC}) framework \citep{mpc}, similar to prior work linking GNNs to the \textsc{Congest} model in distributed computing \citep{loukas2019graph}. 
\citet{sanford2024understanding} extend this analogy to define a transformer complexity hierarchy for graph tasks. 
% While they focus on an \textit{edge-list} tokenization and depth-based hierarchy, our work considers a width-based hierarchy with a \textit{node-adjacency} tokenization, leading to different complexity insights.

\paragraph{Graph transformers and GNNs}
Research in transformers and graph neural networks (GNNs) reveals a rich overlap, primarily because both are fundamentally message-passing models.
Substantial research effort has been devoted to the study of hybrid architectures to combine GNNs' intrinsic encoding of input graphs and transformers' empirical successes.
A variety of approaches to model hybridization have been explored, including the incorporation of graph structure directly into attention layers~\citep{gat, gatv2}, as subgraph features in positional encodings~\citep{zhang2020graphbertattentionneededlearning}, and as spectral features of the graph Laplacian matrix~\citep{kreuzer2021rethinkinggraphtransformersspectral}.
The aim of this work is to understand transformer architectural trade-offs rather than to introduce a novel architecture.
However, the graph encoding schemes employed in this paper draw upon key insights from this literature.
We primarily utilize a simple adjacency encoding in order to isolate the focus of our study to the impacts of changing the embedding dimension, but we also investigate some theoretical and empirical trade-offs for spectral Laplacian-based encodings (\Cref{sec:alt-tok}).

\section{Problem setting and notations}

\subsection{Transformers}
% \gilad{remove the $m_{\ell -1}$}
We consider the following setting of transformers: the input is a sequence of $N$ tokens $\bx_1,\dots,\bx_N$ where $\bx_i\in\reals^{d_\tin}$. We denote by $X^{(0)}\in\reals^{d_\tin\times N}$ the matrix where the $i$-th column is equal to $\bx_i$.\amirg{not each column. the ith column is xi. Right?}\amirg{worth saying explicitly that we differ from standard setups in having one dimension for input and another for internal dimension.}\amirg{later we use positional embeddings, so it'd be good to mention here.}\amirg{in the proof of 4.1 we need to be clearer about what positional encoding we use.} \gilad{About the dimension, this isn't different than standard settings in theory papers, so I don't think we want to highlight it. About the positional encodings, we only use them in that specific theorem and they are arbitrary (not like in a sentence where there is meaning to the position). I think we can say something later only in the context of that theorem rather than in the setting of the paper, as it may be confusing.}
Each layer of the transformer applies a self-attention mechanism on the inputs, and then an MLP. We denote the input to layer $\ell$ by $X^{(\ell-1)}$.
The self-attention at layer $\ell$ with $H$ heads is parameterized by matrices $K^{(\ell)}_h,Q_h^{(\ell)}\in\reals^{m\times m}$, $V^{(\ell)}_h\in\reals^{m\times m}$.\footnote{Except for the first layer where $K^{(\ell)}_h,Q_h^{(\ell)}\in\reals^{d_\tin\times d_\tin}$, $V^{(\ell)}_h\in\reals^{m\times d_\tin}$} It is defined as:
% \amirg{why do we have different dimension per layer. why not same $m$ throughout?}
% \gilad{We deal with the case where the input dimension is smaller than the intermediate dimensions, so this distinction is necessary.}\clayton{It seems like we make that distinction in the next paragraph by distinguishing $m$ and $d_{in}$. I agree with Amir that I think we can cut the $m_\ell$'s.}
\[
Z^{(\ell)} = \sum_{h=1}^H V^{(\ell)}_h X^{(\ell-1)}\sm(X^{(\ell-1)^\top} K_h^{(\ell)^\top} Q^{(\ell)}_h X^{(\ell-1)})~,
\]
where $\sm$ is row-wise softmax,\amirg{using sm isn't great here because we also have s. Is there a different standard notation for this?}\gilad{I changed to \textsc{sm} so it would look more like a function. We could write softmax, but I don't like this option since it's a bit long.} and $m$ is the hidden embedding dimension. The output is of dimension $Z^{(\ell)}\in \reals^{m\times N}$. This is followed by a residual connection, so that the output of the self-attention layer is $\tilde{X}^{(\ell)} = Z^{(\ell)} + X^{(\ell-1)}$.

Finally, we apply an MLP $\Ncal^{(\ell)}:\reals^{m}\rightarrow\reals^{m}$ with ReLU activations on each token separately (i.e. each column of $\tilde{X}^{(\ell)}$). The output of the MLP is $X^{(\ell)}$, and is the input to the next layer. We let $\sigma:\reals\rightarrow\reals$ denote the ReLU activation. 
Critically, we assume that the MLP layer can compute arbitrary functions of each individual token embedding $X^{(\ell)}_i$.
While involved, this assumption is common in the theoretical literature \citep[see e.g.][]{sanford2024transformers} and enables a theoretical understanding of the modeling limitations imposed by the attention layer. \blue{However, we note that in all our constructions, the size of the MLP is at most polynomial in the number of nodes in the input graph.}
Our MLP layers further incorporate arbitrary positional encodings of the index $i$ of each token embedding $\tilde{X}_i^{(\ell)}$, i.e. 
$ \mathcal{N}^{\ell}(\tilde{X}^{(\ell)}) = (g(\tilde{X}_1^{(\ell)}, 1), \dots, g(\tilde{X}_N^{(\ell)}, N)),$ for some $g$.
This formalism allows standard transformer positional encodings to be implemented in our theoretical model.
% We also often use positional encodings which are commonly used in standard transformer architectures.
% \amirg{how do you denote these?} \gilad{We only reference to them in the appendix where they are clearly defined. We don't have a notation for them in the main part, you think it is necessary to add it?}\clayton{Is this ok? Just changed things to account more for the arbitrary MLP assumptions} 
Normalization layers---which are not explicitly accounted for in our transformer definition---can be similarly incorporated by taking advantage of the arbitrary MLP assumption without changing the results.
% It is also common to add normalization layers. Although we don't use them explicitly, they can be trivially added to our constructions without changing any of the results.
% However, they are not necessary for our results.
% \amirg{Not clear in what sense. Maybe they would result in better upper/lower bounds} \gilad{I rephrased, is it better? We also wrote it this way in the counting paper.} %, and these layers can be trivially added in all of our results without changing the output.

The bit-precision of all our transformers will be $O(\log n)$, where $n$ is the number of input tokens. This is a common assumption in many previous works \cite{sanford2024representational,sanford2024transformers,merrill2023parallelism,merrill2024logic}, and is relatively mild. It is also a necessary requirement for representing a number of size $n$, e.g. when having positional embeddings for $n$ tokens.
% \amirg{Can we say this is a minimal requirement for representing an object of size $n$?} \gilad{Yes, added it.}
We will denote $m=\max(m_0,\dots,m_L)$ the embedding dimension, and $d_\text{in}$ the input dimension.

\paragraph{Graph Inputs}
% \clayton{Edit to start with multiple ways---node-wise vs edge-wise. Then distinguish between node embeddings.}
% \clayton{Can we consider moving this section to the appendix? I don't think this is relevant for most of the statements of our results}
Because the network topology of transformers (unlike GNNs) does not encode the structure of an input graph, graph structure must be encoded explicitly into the input of the transformer. 
Choosing the format of this \textit{tokenization scheme} for graph inputs is a significant modeling decision.
The most fundamental aspect of that choice involves deciding between edge-wise and node-wise schemes.  
\citet{sanford2024understanding} employ an {edge-list tokenization} that converts the graph into a sequence of discrete edge tokens. 
However, this scheme suffers computationally for large graphs; the quadratic computational bottleneck of self-attention can result in an $\Omega(n^4)$ runtime for dense graphs with $n$ nodes. 

In contrast, we consider node-wise encodings, where each node corresponds to exactly one input embedding.
Our primary theoretical results use what
is arguably the simplest such representation: the \textit{node-adjacency tokenization scheme}. In this representation,  each input embedding directly encodes a node's edges as an adjacency vector.
That is, for graph $G$ with $n$ nodes and adjacency matrix $A\in\{0,1\}^{n\times n}$, the $i$th token input $\bx_i\in\reals^n$ to the transformer is defined as the $i$th row of $A$. %Thus, $X\in\reals^{n \times n}$.
% \amirg{probably need some discussion of the effect of representation on our results} \gilad{TODO}
An alternative node-wise scheme is the {Laplacian eigenvector tokenization} \citep{kreuzer2021rethinkinggraphtransformersspectral}, which captures global graph structure node-wise in a lossy manner with the most significant eigenvectors of the Laplacian matrix. 
We empirically contrast the node-adjacency scheme with both alternatives in \Cref{sec:experiments}, and we further explore the representational properties of the Laplacian tokenization in \Cref{sec:alt-tok}.

\blue{The adjacency node embedding is not permutation invariant, similar to the edge-list embedding used in \cite{sanford2024understanding}. Namely, if the tokens contain positional embeddings, then changing the order of the tokens may also change the output. These embeddings are used for technical convenience, especially since spectral embeddings that are permutation invariant are difficult to analyze for combinatorial problems (such as connectivity, subgraph counting, etc.). In \cref{sec:experiments} we provide experiments that justify the use of such embedding on real-world datasets.}

\section{The Expressive Power of a Linear Embedding Dimension}\label{sec:beyond sub-linear}
In this section we focus on two graph reasoning tasks: the $1$ vs. $2$ cycle problem and the cycle detection problem. We show that a fixed-depth transformer can solve these tasks with only linear width. We also show that these results are tight. 
All the proofs for the theorems in this section are in \Cref{appen: proofs from beyond sub-linear}. Finally, in \secref{sec:sublinear_cycle_detection} we show that for cycle detection in bounded degree graphs, sublinear width suffices. \blue{Note that the node degree task as appears in \cref{fig:hierarchy} (namely, calculating the degree of each node) can be easily solved with an $O(\log(n))$ embedding size, as shown e.g., in \cite{sanford2024understanding}}
%Taken together, these results show the surprising result that several graph problems of interest, fixed-depth is a realistic 

%We also show that these result is tight (for the two-cycle case, the lower bound is from ...

%\citep{sanford2024transformers,sanford2024understanding}. We also show new lower bounds that are stricter for transformers with a linear embedding dimension, and a matching upper bound. All the proofs for the theorems in this section are in \Cref{appen: proofs from beyond sub-linear}.

\subsection{A linear width solution for $1$ vs. $2$ cycle detection
\label{subsec: break sub-linear}}

Previous works on the connection between the MPC model and transformers have shown conditional lower bounds for solving certain tasks. One such lower bound includes the \textbf{$1$ vs. $2$ cycle} problem. Namely, determine whether an $n$-node graph is one cycle of length $n$ or two cycles of length $\frac{n}{2}$. 
Conjecture 13 from \citet{sanford2024understanding} (see also \cite{ghaffari2019conditional}) states that any MPC protocol with $n^{1-\epsilon}$ memory per machine for any $\epsilon\in (0,1)$ cannot distinguish between the two cases, unless the number of MPC rounds is $\Omega(\log n)$. 
This condition implies that transformers with an embedding dimension of $O(n^{1-\epsilon})$ cannot solve this task. For a formal definition, see \cref{subsec: 1vs2}.
We demonstrate the tightness of this bound by showing that a transformer with linear embedding dimension \textit{can} solve this task:

\begin{restatable}{theorem}{thmonevstwo}\label{thm: 1 vs 2}
    There exists a transformer with $2$ layers of self-attention, and embedding dimension $O(n)$ that solves the $1$ vs. $2$ cycle problem.
\end{restatable}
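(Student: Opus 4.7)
The key observation is that a $2$-regular graph on $n$ vertices carries only $O(n \log n)$ bits of information, since each adjacency row contains exactly two ones and can be encoded by its two neighbor indices. A transformer of width $O(n)$ with $O(\log n)$-bit precision therefore has room for each token to eventually hold a description of the entire graph, and once that is achieved the per-token MLP---which is unconstrained in this complexity measure---can decide whether the input is one $n$-cycle or two $\frac{n}{2}$-cycles by simulating a walk from vertex $1$ along its two neighbors and counting how many distinct vertices are traversed before returning to $1$. The plan is to implement this ``compress, then gather, then walk'' strategy in two self-attention layers.

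I first augment each token with a one-hot positional encoding $\be_i \in \reals^n$, so the initial embedding lies in $\reals^{2n}$. In the first self-attention layer I configure a single head so that token $i$'s query reads off the adjacency row $A_{i,\cdot}$ and token $j$'s key is $\be_j$; the resulting unnormalized attention scores equal $A_{ij}$, and sharpening the softmax with a large temperature makes each token attend essentially uniformly to its two neighbors. Taking values $V\bx_j = \be_j$, the attention output at token $i$ is $\frac{1}{2}(\be_{j_1(i)} + \be_{j_2(i)})$, a sparse $n$-dimensional indicator of $i$'s neighbor set. The first MLP then decodes this indicator into a $2\log n$-bit compressed pair $(j_1(i), j_2(i))$ written into $O(1)$ dedicated coordinates.

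The second self-attention layer performs a global gather using $n$ heads. For head $h$ I make the query a constant vector $\be_h$ independent of the input token (using a constant coordinate in the embedding), while the key of token $j$ remains $\be_j$; a large temperature then makes every token attend sharply to token $h$. The value matrix of head $h$ copies token $h$'s compressed neighbor pair into a distinct $O(1)$-wide slice of the output so that summing across heads produces no collisions and the total output width stays $O(n)$. After this layer every token's embedding carries the full edge list of the graph, and the second MLP executes the cycle walk described above, emitting the answer at (say) token $1$.

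The main obstacle is making the second-layer multi-head construction work cleanly within the stated constraints: each head's query must be effectively token-independent so that it can target a specific address $h$; the softmax temperature must be large enough to approximate hard attention while remaining within the $O(\log n)$-bit precision convention; and the $n$ head outputs must be routed to disjoint slices so that summing heads preserves each payload. Given these, the final MLP step is routine, since the paper's complexity measure only counts self-attention layers and permits arbitrarily expressive MLPs.
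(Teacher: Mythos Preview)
Your proposal is correct and follows the same high-level ``compress, gather, solve with MLP'' strategy as the paper, but the gather mechanism is different. The paper's construction is simpler: after the first-layer MLP compresses node $i$'s adjacency row to its two neighbor indices, it \emph{also} places this $3$-dimensional payload into a dedicated slot of a $3n$-vector (coordinates $3(i-1)+1$ through $3i$, with zeros elsewhere). The second self-attention layer then uses a \emph{single} head with $K=Q=\pmb{0}$ and $V=nI$, so every token attends uniformly to all others and the output is simply the sum of all tokens---which, because of the pre-slotting, is the full edge list with no collisions. Your construction instead achieves the routing inside the attention itself by spending $n$ heads, one per target token. Both work within the theorem's stated constraints (which bound only depth and embedding dimension, not head count), but the paper's one-head approach avoids the head budget entirely and sidesteps the issues you flag about token-independent queries and per-head output routing.

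A minor observation: your first-layer attention is a no-op in disguise. The adjacency row $A_{i,\cdot}$ already \emph{is} $\be_{j_1(i)}+\be_{j_2(i)}$, so your attention output $\tfrac{1}{2}(\be_{j_1}+\be_{j_2})$ is just $\tfrac{1}{2}A_{i,\cdot}$; all the work is done by the subsequent MLP. The paper makes this explicit by setting $V=\pmb{0}$ in layer one and relying on the residual connection, which is cleaner but functionally identical to what you wrote.
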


The proof intuition is that the graph contains only $n$ edges, we can stack all of them into a single token. After doing so, we can use the MLP to solve it. 
This  demonstrates the brittleness of communication-based transformer lower bounds, since even a slight increase in the embedding dimension breaks them. With that said, this result is not surprising, since the graph in this task is very sparse. 

The connectivity problem for general graphs cannot be solved in this manner. 
The reason is that for graphs with $n$ nodes there are possibly $\Omega(n^2)$ edges, and thus it is not possible to compress the entire graph into a single token with linear embedding dimension. 
However, the connectivity problem can still be solved by increasing the embedding dimension by polylog terms, as we explain below.

\citet{ahn2012analyzing} uses linear sketching to solve the connectivity problem on general graphs with $O(n\log^3(n))$ total memory. The idea is to use linear sketching, which is a linear projection of the adjacency rows into vectors of dimension $O(\text{poly}\log(n))$.\amirg{should it be n*polgylog(n)?} \gilad{No, the projection is from $n$ to polylog(n).} Although this is a lossy compression, it still allows to solve the connectivity problem. 
As a consequence, we can use a similar construction as in \Cref{thm: 1 vs 2} where we first apply the sketching transformation to each token (i.e. row of the adjacency), then embed all the tokens, into a single token and use the MLP to solve the problem using the algorithm from \citet{ahn2012analyzing} (Theorem 3.1). \blue{Note that this construction works only in high probability rather than deterministically, since the linear sketching requires using a random matrix, which compresses successfully the adjacency matrix only with some high probability. Thus, we don't present it here formally, but rather as a proof sketch.}

Our next results include problems that cannot be solved by simply compressing all the information of the graph into a single token, and require a more intricate use of the self-attention layers.

\subsection{
Linear width is necessary and sufficient for $2$-cycle detection}\label{subsec:adj-mat}

% We have shown that transformers with a linear embedding dimension can improve certain lower bounds that apply for sub-linear dimensions. 
This section shows that for the problem of cycle detection, linear width is sufficient for solving the task with fixed-depth, and is also necessary.
%introduces an unconditional negative result for the task of detecting small cycles that applies more broadly than previous one-cycle bounds.
% In this section we will show another lower bound on the task of detecting small cycles. This lower bound is stronger than the one for the one-cycle versus two-cycles in the sense that the depth-embedding dimension trade-off is even more strict.
We consider the 2-cycle detection problem for directed graphs.  The task is to find  nodes $u$ and $v$ such that the edges $(u, v)$ and $(v, u)$ exist.\amirg{rewrote. PTAL} \gilad{looks good}
We begin with the lower bound for this case.
% Consider the task of detecting $2$-cycles in a directed graph. 
% Namely, detecting whether there are nodes $u$ and $v$ with two edges, one directed from $u$ to $v$ and another directed from $v$ to $u$. We can prove the following lower bounds on this task:

\begin{restatable}{theorem}{thmcyclelb}\label{thm: 2 cycle lower bound}
    Let $T$ be a transformer with embedding dimension $m$ depth $L$, bit-precision $p$ and $H$ attention heads in each layer. Also, assume that the input graphs to $T$ are embedded such that each token is equal to a row of the adjacency matrix. Then, if $T$ can detect $2$-cycles on directed graphs, the following must hold: (1) If $T$ has residual connections then $mpHL = \Omega(n)$; or (2) If $T$ doesn't have residual connections then $mpH = \Omega(n)$.
    % \begin{enumerate}
    %     \item If $T$ has residual connections then $mpHL = \Omega(n)$.
    %     \item If $T$ doesn't have residual connections then $mpH = \Omega(n)$.
    % \end{enumerate}
\end{restatable}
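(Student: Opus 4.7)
The plan is to prove both bounds by a two-party communication-complexity reduction from $\textsc{Disj}_N$, set disjointness on a universe of size $N$, whose randomized (and deterministic) two-way complexity is $\Omega(N)$. First I would instantiate the reduction as follows. Partition the vertex set as $V_A \sqcup V_B$ with $|V_A|=|V_B|=n/2$, and restrict to directed graphs whose only candidate $2$-cycles cross the partition (edges within a side are fixed to $0$). Give Alice the rows of $A$ indexed by $V_A$ and Bob the rows indexed by $V_B$. A directed $2$-cycle between $u \in V_A$ and $v \in V_B$ exists iff $A[u,v]=1$ (Alice's bit) and $A[v,u]=1$ (Bob's bit), so the task is exactly $\textsc{Disj}_N$ with $N = |V_A|\cdot|V_B| = n^2/4$, yielding a lower bound of $\Omega(n^2)$ communication bits.

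Next I would show that any transformer $T$ with the stated parameters that solves the $2$-cycle problem induces an Alice--Bob protocol. Because each token is precisely a row of $A$, each party owns its own tokens' embeddings and can compute all intra-party queries, keys, values, and MLP outputs locally. To evaluate the self-attention at an Alice-owned token, Alice only needs Bob's key and value vectors (and vice versa). Sending these across the cut costs $O(mp)$ bits per token, i.e.\ $O(nmp)$ bits per head per layer; the token-wise MLPs add no further communication.

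The two cases then differ only in how many rounds of this exchange are unavoidable. With residual connections, information accumulates in the residual stream and both parties must maintain every intermediate $X^{(\ell)}$ to produce $X^{(\ell+1)}$; the full simulation therefore costs $O(nmpHL)$ cross-cut bits, and together with the $\Omega(n^2)$ lower bound this gives $mpHL = \Omega(n)$. Without residual connections, the per-token state is completely overwritten by each attention block, so the output token's final value $X^{(L)}_{i^*}$ depends on everything earlier only through the $H$ heads of the final layer, a bottleneck of $O(mpH)$ bits at that token. I would use this structural observation to batch the $L$ rounds into a single effective round whose total cross-cut communication is $O(nmpH)$, bit-independent of $L$; combined with the $\Omega(n^2)$ bound this yields $mpH = \Omega(n)$.

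The main obstacle is precisely the no-residual case: formalizing the batching argument that collapses the $L$-layer execution to a single effective round of attention, so that the depth $L$ provides no savings. Concretely, I expect the hard step to be tracking how $X^{(L)}_{i^*}$ factors through the last layer's per-head $m$-dimensional outputs and showing that the parties need not separately synchronize on every intermediate $X^{(\ell)}$ in the absence of a residual stream. The residual case, by contrast, should fall out routinely once the bipartite reduction and the per-layer per-head cost $O(nmp)$ are in place.
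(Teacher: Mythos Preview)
Your bipartite reduction to $\textsc{Disj}_{\Theta(n^2)}$ and your treatment of the residual case are essentially the paper's argument: split the tokens between Alice and Bob by rows of $A$, have the parties exchange per-head key/query/value vectors each layer at cost $O(nmpH)$, and conclude $mpHL=\Omega(n)$ from the $\Omega(n^2)$ disjointness bound.

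The gap is in the no-residual case. Your plan is to exploit a ``final-layer bottleneck'': since $X^{(L)}_{i^*}$ is a function of the $H$ head outputs at layer $L$, you want to ``batch'' the execution into one effective round of $O(nmpH)$ bits. But that bottleneck does not yield a protocol: the $H$ head outputs at token $i^*$ depend on \emph{all} of $X^{(L-1)}$, which in turn depends on all of $X^{(L-2)}$, and so on. Neither party can produce those head outputs without having already synchronized on the intermediate layers, so the batching you describe has no clear implementation. You flagged exactly this step as the obstacle, and as written it does not go through.

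The fix is much simpler than what you propose, and it follows from the very observation you make (``the per-token state is completely overwritten''). In the first layer, have Bob send not just $K_h x_i^{(0)}$ and $V_h x_i^{(0)}$ but also $Q_h x_i^{(0)}$ for all of his tokens and all heads; this is still $O(nmpH)$ bits. Now Alice holds $\{K_h x_i^{(0)}, Q_h x_i^{(0)}, V_h x_i^{(0)}\}$ for \emph{every} token $i$, and since the attention output $Z^{(1)}_j$ is a function of exactly these quantities, she can compute $Z^{(1)}_j$ for every $j$, including Bob's tokens. With no residual term added, $X^{(1)}_j=\mathrm{MLP}(Z^{(1)}_j)$, so Alice now owns the full $X^{(1)}$ and can finish layers $2,\ldots,L$ by herself with zero further communication. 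Total cost $O(nmpH)$, independent of $L$, and the bound $mpH=\Omega(n)$ follows. The contrast with the residual case is precisely that Alice cannot reconstruct Bob's raw $x^{(0)}_j$ from its (compressed) $K,Q,V$ images, so she cannot form $Z^{(1)}_j + x^{(0)}_j$ for Bob's tokens and the round-by-round simulation is unavoidable.
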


The proof uses a communication complexity argument, and specifically a reduction to the set disjointness problem. For a formal definition see \Cref{appen: proofs from beyond sub-linear}. This lower bound is stronger than the lower bounds in \Cref{subsec: break sub-linear} in the sense that they cannot be circumvented by logarithmic depth.  
For example, assume that the embedding dimension is $m = O(n^{1-\epsilon})$ for some $\epsilon\in(0,1)$, and that $p,H = O(\log(n))$ (which is often the case in practice). Then, transformers with residual connection require a depth of $\Omega(n^{\epsilon})$ to solve this task, which is beyond logarithmic, while transformers without residual connections cannot solve it. This lower bound is also unconditional, compared to the lower bounds from \cite{sanford2024understanding} which are conditional on the hardness of the $1$ vs. $2$ cycle problem (Conjecture 13 therein).
The caveat of this lower bound is that it relies on having the input graph specifically embedded as rows of the adjacency matrix. 

We now turn to show an upper bound for this task. We will show an even more general claim than detecting $2$-cycles. Namely, that depth $L$ transformers with $\Omega(n)$ embedding dimension can calculate the $L$-th power of an adjacency matrix of graphs. In particular, the trace of $A^L$ equals the number of cycles of size $L$ in the graph (multiplied by an appropriate constant). Thus, a $2$-layer transformer with linear embedding dimension can already solve the directed $2$-cycle problem.

\begin{restatable}{theorem}{thmpoweradj}\label{thm:powers of adj}
    There exists an $O(L)$-layer transformer with embedding dimension $m = O(n)$ such that, for any graph embedded as rows of an adjacency matrix $A$, the output of the transformer in the $i$-th token is the $i$-th row of $A^L$.%\rgb{we said in the beginings the the bit precision of a token is O(log n) - how can we encode the entire row with O(log n) bits if the degree of the node is not O(1)?}
\end{restatable}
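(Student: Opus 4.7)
The plan is to prove the claim by induction on $\ell$, maintaining the invariant that after $\ell - 1$ self-attention layers the $i$-th token holds the $i$-th row of $A^\ell$ in a designated block of its embedding, alongside the preserved input row $A_{i,\cdot}$ and a one-hot positional identifier $e_i$. Partitioning the $O(n)$-dimensional embedding into a few blocks of size $n$ (plus small scratch space) suffices for all of these. Iterating the transition $L-1$ times yields rows of $A^L$ at each token, so only $O(L)$ attention layers are needed.

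The key step is implementing one transition from $A^\ell$ to $A^{\ell+1}$. Since $(A^{\ell+1})_{i,\cdot} = \sum_{j} A_{i,j}\,(A^\ell)_{j,\cdot}$, I would use a single attention head whose query is $q_i = c \cdot A_{i,\cdot}$ (read off the preserved adjacency block), whose key is $k_j = e_j$ (read off the positional block), and whose value projects the current power $(A^\ell)_{j,\cdot}$ into the ``current power'' block. The attention logit between tokens $i$ and $j$ is then $c \cdot A_{i,j} \in \{0, c\}$, so after the row-wise softmax the attention weight on $j$ is approximately $A_{i,j}/d_i$ for neighbors of $i$ and exponentially small in $c$ for non-neighbors, where $d_i = \sum_j A_{i,j}$. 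The resulting attention output in the designated block is therefore $\frac{1}{d_i}(A^{\ell+1})_{i,\cdot}$ up to the softmax error; the residual connection preserves the other blocks.

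The MLP following this self-attention would (i) read $d_i$ off the adjacency block, (ii) multiply the attention output coordinate-wise by $d_i$, (iii) round to the nearest integer to cancel the softmax approximation error, and (iv) overwrite the ``current power'' block with this result while passing the adjacency and positional blocks through unchanged. The degree-zero case is handled correctly because multiplying by $d_i = 0$ produces the zero vector, which equals the true $(A^{\ell+1})_{i,\cdot}$ when $i$ is isolated. Since the entries of $A^L$ are non-negative integers bounded by $n^{L-1}$, all quantities stay polynomial in $n$ and fit within the $O(\log n)$-bit precision regime assumed by the paper.

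The main obstacle is calibrating the softmax sharpness $c$ so that the rounding step is exact. The per-coordinate softmax error is of order $e^{-c}$, and after multiplication by $d_i \le n$ the error on each target entry is at most $n\,e^{-c}$; requiring this to stay strictly below $1/2$ so that the rounding recovers the correct integer at every one of the $L-1$ layers forces $c = \Theta(L \log n)$, which is still representable with $O(\log n)$-bit precision when $L$ is constant (or $O(\log n)$). A secondary, more pedestrian point is checking that a ReLU MLP of polynomial width can realize the scalar-by-vector multiplication and the rounding exactly within this precision, but this is a standard gadget. Once these pieces are in place the induction closes and delivers the claimed $O(L)$-layer transformer of embedding dimension $O(n)$.
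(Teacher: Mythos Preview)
Your proposal is correct and follows essentially the same construction as the paper: both maintain the adjacency row, a one-hot positional vector, and the current power $A^\ell$ in separate $O(n)$-sized blocks, arrange the query/key so the logit between tokens $i$ and $j$ is $c\,A_{i,j}$ (you set $q_i=c\,A_{i,\cdot}$, $k_j=e_j$; the paper swaps these roles), obtain $\tfrac{1}{d_i}(A^{\ell+1})_{i,\cdot}$ from the softmax, and then use the MLP to multiply by the degree. The only minor differences are that the paper recovers $d_i$ from the normalized row $A\cdot\mathrm{deg}(A)^{-1}$ rather than from the preserved adjacency block, and it is less explicit than you are about the rounding step and the required magnitude of $c$.
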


The constructive proof of this theorem carefully selects key and query parameters to ensure that the output of the softmax matrix approximately equals $A$.\amirg{need to be clearer here where $A^l$ is and where is $A$.} \gilad{I'm not sure that I understand, $A$ is the adjacency matrix. The output contains $n$ tokens and the $i$-th one is the $i$-th row of $A^L$} This enables an inductive argument that encodes $A^\ell$ in the value matrix of the $\ell$th layer, in order to compute $A^{\ell+1}$.

% \amirg{give intuition for proof}
The above theorem shows that having a transformer with embedding dimension proportional to the graph size is representationally powerful. Namely, $A^L_{i,j}$ counts the number of walks of length $L$ form node $i$ to node $j$. This also allows to determine whether a graph is connected, by checking whether $A^n$ doesn't contain any zero entries. Although, there are other algorithm (e.g. those presented in \Cref{subsec: break sub-linear}) that can solve the connectivity task more efficiently.

\subsection{A sublinear width solution for cycle detection in bounded degree graphs \label{sec:sublinear_cycle_detection}}

The results of the previous section establish that \textit{for worst-case graph instance}, transformers with node-adjacency tokenizations require a linear embedding dimension to solve simple graph reasoning tasks, such as 2-cycle detection.
This fact is perhaps unsurprising because each node-adjacency tokenization input is a length-$n$ boolean vector, which must be compressed in a lossy manner if the embedding dimension $m$ and bit precision $p$ satisfy $mp = o(n)$.
It is natural to ask whether such results apply to sparse graphs as well, such as graphs with bounded degree.
Here, we show that requisite embedding dimension to detect 2-cycles scales linearly with the degree of the graph.

\begin{restatable}{theorem}{thmcycledegree}\label{thm:adj-cycle-bounded-degree}
    For any $n \in \naturals$ and $d \leq n$, there exists a single-layer transformer with embedding dimension $O(d \log n)$ that detects 2-cycles in any graph with node degree at most $d$. 
    This embedding dimension is optimal up to logarithmic factors.
\end{restatable}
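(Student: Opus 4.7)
The plan has two parts: a constructive upper bound of embedding dimension $O(d \log n)$, and a nearly matching lower bound via a communication-complexity reduction to sparse set disjointness, adapting the argument from \cref{thm: 2 cycle lower bound}.

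For the upper bound, I would construct a single-layer transformer whose per-token output at $v$ approximates $\one{v \text{ lies in a 2-cycle}}$, so that 2-cycle existence is read off from any flagged token. The token at $v$ stores its identifier $e_v$ along with a deterministic sketch $\text{enc}(N^+(v))$ of its out-neighborhood, built from an explicit cover-free family of dimension $O(d \log n)$, so that the bilinear membership pairing $\langle e_v, \text{enc}(S) \rangle = c \cdot \one{v \in S}$ holds (up to a fixed separation) for every $S \subseteq [n]$ of size $\leq d$. I then choose $Q, K$ so that the query of $v$ is $(\text{enc}(N^+(v)), e_v)$ and the key of $u$ is $(e_u, \text{enc}(N^+(u)))$; the induced attention score unfolds as $s(v,u) = \langle \text{enc}(N^+(v)), e_u\rangle + \langle e_v, \text{enc}(N^+(u))\rangle = c\,(A_{v,u} + A_{u,v})$, reaching its maximum $2c$ exactly on 2-cycle pairs and taking values $\leq c$ otherwise. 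Appending a sentinel token with constant score $3c/2$ and value $0$ (while real tokens take value $1$), and scaling $Q, K$ by a polynomial factor in $n$ so softmax acts as an effectively hard argmax, forces the attention output at $v$ to concentrate near $1$ iff $v$ is in a 2-cycle; a small MLP rounds it to a clean bit.

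For the lower bound, I would adapt the reduction behind \cref{thm: 2 cycle lower bound} to the $d$-sparse set-disjointness problem $\mathsf{DISJ}_{n,d}$, which has randomized communication complexity $\Omega(d)$ by Razborov. Given Alice's $d$-sparse $x$ and Bob's $d$-sparse $y$ in $\{0,1\}^n$, construct a graph on $n+1$ vertices with a distinguished node $a_0$: add $a_0 \to i$ iff $x_i = 1$ and $i \to a_0$ iff $y_i = 1$. The resulting graph has maximum degree at most $2d$ and contains a 2-cycle iff $\langle x, y\rangle \neq 0$. Alice simulates the token for $a_0$ and Bob simulates tokens for $[n]$; to execute the single-layer transformer jointly, Alice need only transmit her token's per-head keys, queries, and values, using $O(mpH)$ bits. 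Hence $mpH = \Omega(d)$, and in the standard regime $p, H = \mathrm{polylog}(n)$ this gives $m = \widetilde\Omega(d)$, matching the upper bound up to logarithmic factors.

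The main technical obstacle lies in the sketch used in the upper bound: producing a fixed, deterministic encoding of dimension $O(d \log n)$ that simultaneously supports well-separated bilinear membership queries for every $d$-sparse subset of $[n]$. Random Johnson--Lindenstrauss style sketches succeed on any fixed input in $O(d)$ dimensions but cannot be derandomized by a naive union bound because the number of $d$-sparse configurations is super-polynomial; the fix is to invoke known explicit selector or cover-free family constructions of the prescribed dimension. A secondary but routine concern is realizing the effectively hard softmax, which is handled by scaling $Q, K$ by a sufficiently large polynomial in $n$ within the transformer's $O(\log n)$-bit precision budget.
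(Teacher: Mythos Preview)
Your upper bound matches the paper's construction almost exactly: encode each node's identifier and out-neighborhood via a low-dimensional sketch so that the bilinear attention score reads $A_{v,u}+A_{u,v}$, insert a sentinel token at an intermediate score level, scale to harden the softmax, and threshold with an MLP. One correction: $d$-cover-free families provably require $\Omega\!\left(\tfrac{d^{2}}{\log d}\log n\right)$ rows, so they cannot realize the $O(d\log n)$ sketch you want. The paper instead uses the Restricted Isometry Property (its \cref{lemma:rip}), and the probabilistic construction already succeeds after a union bound over the $\binom{n}{d}$ coordinate $d$-subspaces, so your stated concern about derandomization does not actually bite here; just swap ``cover-free family'' for ``RIP matrix'' and your argument goes through.

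Your lower bound takes a genuinely different route. You reduce from $d$-sparse set disjointness on $[n]$ via a star graph with a single hub token held by Alice and $n$ leaf tokens held by Bob, so Alice's $O(mpH)$ bits of projected keys, queries, and values already let Bob finish the single-layer computation, yielding $mpH=\Omega(d)$. The paper instead retains the bipartite template from \cref{thm: 2 cycle lower bound} but restricts attention to a fixed $d$-regular edge pattern $\bar E_d$ between the two sides, embedding \emph{full} disjointness on $s=nd$ bits across $2n$ tokens and concluding $mpHL=\Omega(d)$ from $nmpHL\geq nd$. Your reduction is more economical for the single-layer statement (one Alice token, one round) but relies on the sparse-disjointness bound; the paper's version extends uniformly to arbitrary depth $L$ and uses only the classical $\Omega(s)$ disjointness lower bound. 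Both routes are valid and deliver the same optimality claim up to logarithmic factors.
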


The proof reduces the dimensionality of the input adjacency matrix by incorporating vector embeddings from \cite{sanford2024representational} into the key, query, and value matrices to produce a ``sparse attention unit,'' whose activations are large when a respective cycle exists.

\section{The expressive power of sub-quadratic embedding dimension}\label{sec:super linear}
Thus far, we showed that there are graph problems that can be solved with fixed depth and linear width. 
A natural question is if there are problems where larger width is necessary? Intuitively, quadratic width should suffice for solving any task, since it can be used to record the entire graph, and then a sufficiently expressive MLP can solve the task. But are there problems where quadratic width is necessary. In \secref{sec:eulerian} we show that for the problem of Eulerian cycle detection, quadratic width is necessary. 
This leads to the interesting question of whether there are problems that require super-linear but sub-quadratic width (i.e., between linear and quadratic). We offer a first result in this direction by showing in \secref{sec:sub_detection} that such width is sufficient for the problem of sub-graph counting.

%In this section, we explore transformers with super-linear (but sub-quadratic) embedding dimension.
% In the previous section we have shown the power of having a transformer with an embedding dimension that is linear in the graph size. Here, we consider transformers with a super-linear embedding dimension. 
%The latter distinction is due to the fact that an embedding dimension of $\Omega(n^2)$ trivializes the question of expressive power in our setting by permitting the entire graph can be encoded into a single embedding.%\rgb{if the bit count of a single token is O(log n) bits and the graph is dense, how can encode the entire graph in a single token?}%, which delegates any algorithmic demands to the computationally-unbounded MLP.
% As a result, we restrict this section's focus to embedding dimensions that are super-linear, but sub-quadratic, in the number of nodes.

\subsection{
% Limitations of sub-quadratic memory
Necessity of nearly-quadratic width for Eulerian cycle detection
\label{sec:eulerian}
}
As mentioned above, quadratic width trivially suffices for solving graph problems. It is therefore interesting to understand whether this upper bound is tight for some graph problems.
%Under the computational model presented herein, a quadratic embedding dimension trivializes all graph tasks; a transformer could encode the entire graph in a single embedding and solve the task with an arbitrary MLP.
%It remains to determine the tightness of this worst-case bound: 
%Is quadratic width necessary for any algorithmic task?
% We now turn to the boundary of the possible embedding dimension for which these kinds of questions are interesting.
% A natural question is whether there is a width that suffices for solving all graph problems.
% Intuitively, this width should be $n^2$ as this is what is required for storing the entire graph. 
We affirmatively answer this question with the \emph{Eulerian cycle verification} problem. 
% Here we show that this is indeed an upper bound on needed depth for an arbitrary problem, and that this bound is tight for a particular problem: the \emph{Eulerian cycle verification} problem.
Given a graph and a list of ``path fragments,'' where each fragment consists of a pair of subsequent edges in a path, the goal of the Eulerian cycle verification problem is to determine whether the properly ordered fragments comprise an Eulerian cycle\footnote{Recall that a Eulerian cycle is a cycle over the entire graph that uses each edge exactly once.}. 
We establish the hardness of this problem on multi-graphs under the well-accepted ``$1$ vs. $2$ cycle'' conjecture.
% We show that this problem on multigraphs with self loops cannot be solved (conditioned on the $1$ vs. $2$ cycle conjecture) with a transformers unless its embedding dimensions is quadratic in $n$ or its depth is $\log(n)$.

\begin{restatable}{theorem}{thmeulerian}\label{thm: eulerian cycle}
    Under Conjecture 2.4 from \citet{sanford2024transformers}, the Eulerian cycle verification problem on multigraphs with self loops cannot be solved by transformers with adjacency matrix inputs if $m = O\left(n^{2-\epsilon}\right)$  for any constant $\epsilon > 0$, unless $L = \Omega (\log(n))$.
\end{restatable}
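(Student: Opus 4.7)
The plan is to reduce the 1-vs-2 cycle problem on $N = n^2$ nodes to the Eulerian cycle verification problem on an $n$-vertex multigraph with self-loops, in a way that quadratically amplifies the memory demands. Conjecture 2.4 (the transformer analogue of the MPC 1-vs-2 cycle conjecture) rules out transformers with adjacency tokenization from solving 1-vs-2 cycle on $N$-vertex instances with $m = O(N^{1-\delta})$ and $L = o(\log N)$, for every $\delta > 0$. Hence if Eulerian verification on an $n$-vertex multigraph could be solved with $m = O(n^{2-\epsilon}) = O(N^{1-\epsilon/2})$ and $L = o(\log n) = o(\log N)$, composing it with an $O(1)$-layer reduction would yield a 1-vs-2 cycle solver at $\delta = \epsilon/2$, contradicting the conjecture.

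For the reduction, fix any bijection $\phi : [N] \to [n] \times [n]$ and let $i(v)$ denote the first coordinate of $\phi(v)$. Given a 1-vs-2 cycle instance $G$ on $[N]$, define the multigraph $H$ on vertex set $[n]$ by replacing each edge $\{u, v\}$ of $G$ with the (possibly self-looping) edge $\{i(u), i(v)\}$ of $H$. Since every vertex of $G$ has degree exactly $2$, every super-node of $H$ has degree exactly $2n$, so the parity condition for being Eulerian is automatically satisfied. To build the fragment list, for each $v \in [N]$ with $G$-neighbors $u_1, u_2$, emit a fragment pairing the two corresponding $H$-edges, labeling each $H$-edge by its $G$-preimage $\{u, v\}$ so that parallel copies remain distinguishable.

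Correctness follows because the fragments stitch every closed walk in $G$ to an equal-length closed walk in $H$. If $G$ is a single $N$-cycle, then the $N$ fragments assemble into one closed walk that traverses every edge of $H$, which is by definition an Eulerian cycle of $H$. If $G$ is two $N/2$-cycles, the fragments assemble into two edge-disjoint closed walks that together cover $E(H)$ but cannot be linked into a single cycle, so $(H, \text{fragments})$ is a NO-instance of Eulerian verification.

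The main obstacle is realizing the reduction itself inside a transformer using only $O(1)$ attention layers and width at most $O(n^{2-\epsilon})$ on the original $N$-token input. Each token of $G$'s adjacency tokenization is a length-$N$ row with only two nonzero entries, hence encodable in $O(\log N)$ bits via the identities of its two neighbors. Appending $i(v)$ to each token as a positional feature, standard constant-depth attention constructions in the style of the MPC-transformer equivalence can gather, at each of the first $n$ tokens (representing the super-nodes of $H$), the sparse encodings of the $n$ original vertices mapped to it, and assemble both the adjacency row of $H$ at super-node $i$ and the $n$ fragments incident to $i$, all within $O(n \log n) \ll n^{2-\epsilon}$ bits. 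Feeding these $n$ super-node tokens into the hypothetical Eulerian verification transformer then produces the answer in $o(\log n) + O(1) = o(\log N)$ total layers, contradicting Conjecture 2.4 and completing the proof.
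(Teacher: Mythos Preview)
Your high-level reduction is the same as the paper's: project the $N\approx n^2$ vertices of a 1-vs-2 cycle instance onto $n$ super-nodes, carry each original edge to a (possibly self-loop) multigraph edge, and let the fragment list record, at each point of the original cycle(s), which two multigraph edges are consecutive. A hypothetical sub-quadratic-width, sub-logarithmic-depth Eulerian verifier on $n$ vertices then yields a sub-linear-width, sub-logarithmic-depth 1-vs-2 cycle solver on $N$ vertices, contradicting the conjecture. The transformer portion of your reduction (compress each degree-2 adjacency row to $O(\log N)$ bits, route the $n$ preimages of each super-node into a single token via one MPC-style attention round) is likewise what the paper does, with the paper citing the MPC--transformer equivalence to justify width $O(n^{1.1})$.

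The one place your writeup diverges from the paper and becomes delicate is orientation. The paper defines Eulerian cycle verification on \emph{directed} multigraphs, with fragments being \emph{ordered} pairs of edges and the successor relation $p^i_2=p^j_1$. Under that definition, your ``one fragment per $G$-vertex, pairing its two incident $H$-edges'' does not work as stated: whatever fixed local rule you use to order the pair at each vertex, the chain can break even when $G$ is a single cycle (try $1\!-\!2\!-\!3\!-\!1$ with ``smaller neighbor first''). Getting a globally consistent orientation is exactly the cycle problem you are trying to avoid solving. The paper's fix is to double every undirected $\mathfrak G$-edge into two oppositely directed $H$-edges and to designate one distinguished \emph{turnaround} edge, replaced by two self-loops; this splices the two directed copies of each $\mathfrak G$-cycle into one, so a single $\mathfrak G$-cycle produces a single directed Eulerian cycle while two $\mathfrak G$-cycles produce three. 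If you instead intend the problem with unordered fragments (so ``verifies'' means the fragment graph on $E(H)$ is a single cycle), then your simpler construction is correct; but to match the paper's stated problem you need the doubling-plus-turnaround device or an equivalent.
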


The proof relies on a novel reduction to the $1$ vs. $2$ cycle problem. The difficulty with the reduction is that the one vs. two cycles problem consists of a sparse graph with only $n$ edges and nodes, and to show a sub-quadratic lower bound we need a dense graph with $\Omega(n^2)$ edges and $n$ nodes. To this end, we do a random projection of an instance of the one vs. two cycles problem with $n^2$ nodes to a multigraph with $n$ nodes, where each node represents $n$ nodes in the original graph, but keeping all the edges. This allows us to produce a dense multigraph, for which solving the Eulerian cycle verification problem will solve the one vs. two cycles problem in the original graph, before the projection.

\subsection{
Sub-quadratic solutions to sub-graph counting 
% Detecting and counting subgraphs
}\label{sec:sub_detection}

Subgraph detection and counting are central tasks for fields as diverse as biology, organic chemistry, and graph kernels (see \cite{jin2020composing,jiang2010finding,pope2018discovering,shervashidze2009efficient}; see also the discussion in \citet{chen2020graphneuralnetworkscount}).
In what follows we show that this problem can be solved with transformers with fixed depth, but width that is between linear and quadratic. 

\begin{restatable}{theorem}{thmsubgrahs}\label{thm: subgraphs}
    Let $k,n\in\naturals$, and let $G'$ be a graph with $k$ nodes. There exists a transformer with $O(1)$ self-attention layers and embedding dimension $O\left(n^{2-1/k}\right)$ that, for any graph $G$ of size $n$, counts the number of occurrences of $G'$ as a subgraph of $G$.
\end{restatable}

In contrast, known GNN constructions \citep[e.g., see][]{chen2020graphneuralnetworkscount} entail higher-order constructions with $k$-IGN models, which require computing practically-infeasible $k$-order tensors. These models require a depth of $k$ to recognize subgraph of size $k$, where each layer uses a $k$-tensor instead of standard matrices, which requires $n^k$ parameters each. Even for $k=3$ or $k=4$ this may be too large for graphs with a few hundreds of nodes.

% \amirg{need to more clearly relate this to our results. What depth/width are these? It's a bit out of place as is}

Our proof implements the seminal ``tri-tri-again'' algorithm \citep{dolev2012tri} using transformers. Given a graph with $n$ nodes, we partition the nodes into $n^{1/k}$ disjoint sets, each containing $n^{1 - 1/k}$ nodes. %, where $k$ is the size of the subgraph that should be detected or counted. 
% We use a distinct MLP unit to count the occurrences in each disjoint 
For each possible combination of $k$ such sets, we use an MLP to count the given subgraph in it. 
There are $\binom{n^{1/k}}{k} \leq n^{k\cdot 1/k} = n$ such combinations of sets, each one containing at most $n^{2-2/k}$ edges. Thus, each token with a large enough embedding dimension can simulate one combination of subsets, cumulating all the relevant edges. Note that subgraph detection is a sub-task of counting, where the number of occurrences is larger than $0$.

\Cref{thm: subgraphs} can be compared to Theorem 23 in \citet{sanford2024understanding} that provides a construction for counting $k$-cliques using transformers with sub-linear memory and additional blank tokens. There, it was shown that it is possible to count $k$-cliques with a transformer of depth $O(\log\log(n))$, however the number of blank tokens is $O(n^{k-1})$ in the worse case. Here, \textit{blank tokens} refer to empty tokens that are appended to the input and used for scratch space as defined in \cite{sanford2024understanding}. In our result, we require a depth of $O(1)$, and the total number of tokens is $n$, while the embedding dimension is super-linear (but sub-quadratic). Thus, our solution has better memory usage, since the increase in width is only polynomial, but the number of tokens is $n$ instead of $O(n^{k})$.

\begin{remark}
We note that the overall computation time of our depth $O(1)$ model above is still exponential in $k$ due to the size of the MLP. This is also the case in Theorem 23 from \citet{sanford2024understanding}.  
    In fact, it is a common conjecture in the literature that no (possibly randomized) algorithm can detect $k$-cliques in time less than $O(n^{2k/3 - \epsilon})$ for any $\epsilon > 0$ (see Hypothesis 6 in \citet{williams2018some}).
\end{remark}

\begin{figure*}[t]
    \centering
      \subfigure[]{\label{figure:100_train_infer_time} \includegraphics[width=0.35\textwidth]{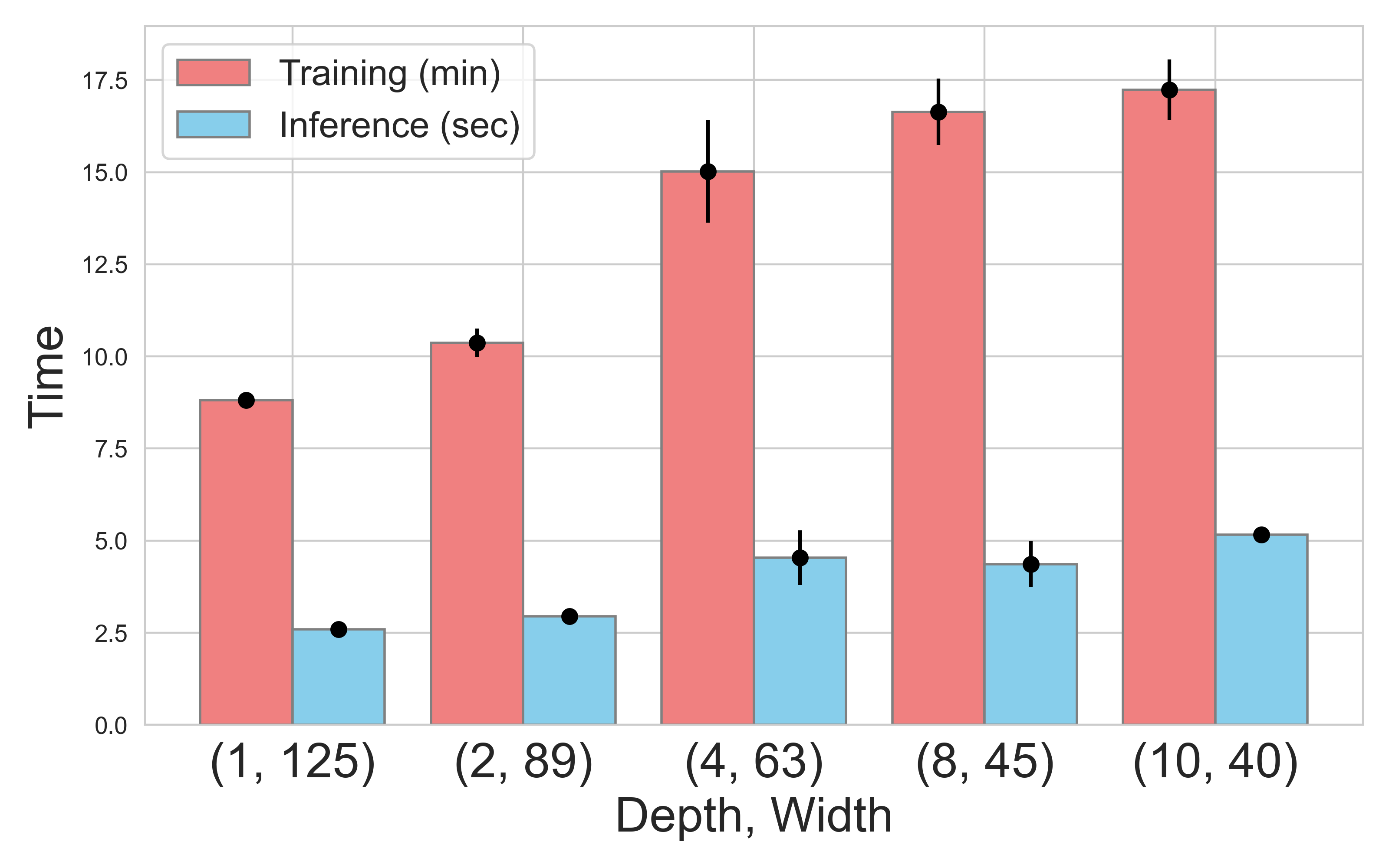}}\quad
      \subfigure[]{\label{figure:100_train_loss} \includegraphics[width=0.29\textwidth]{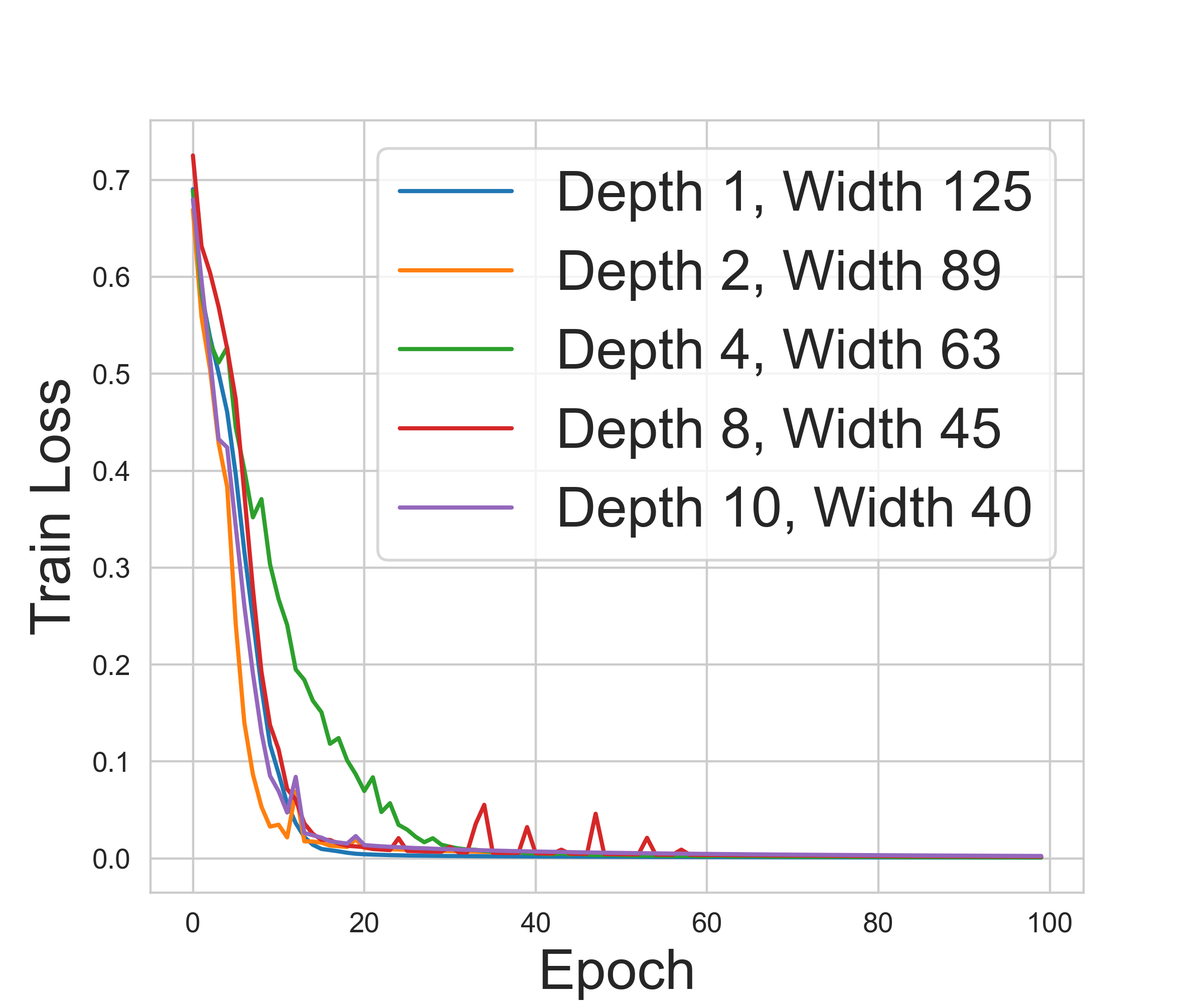}}\quad
       \subfigure[]{\label{figure:100_test_acc}
       \includegraphics[width=0.29\textwidth]{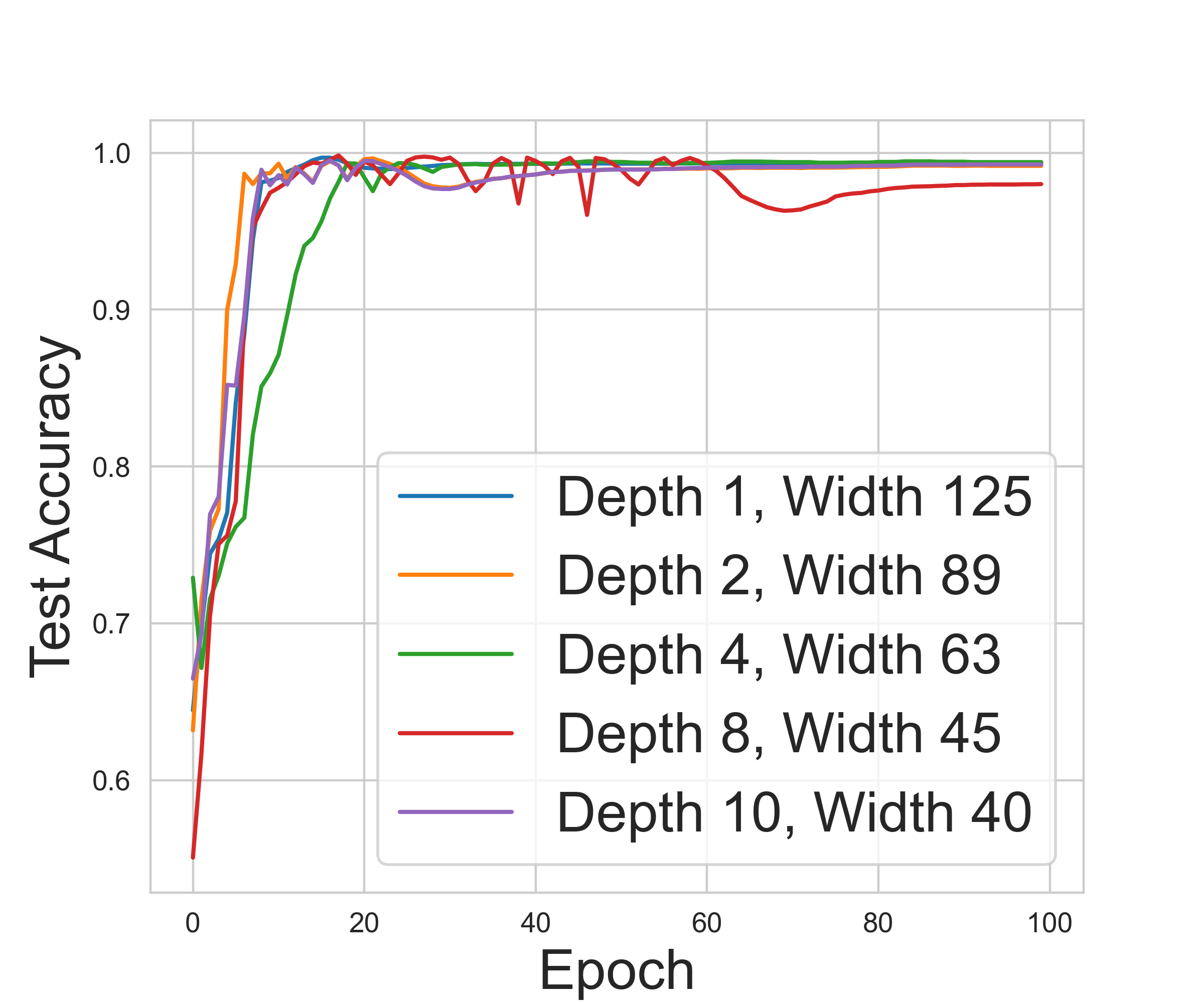}}

    \caption{Training and inference times (a),  training loss curves (b), and accuracy curves (c) for the connectivity task over graphs with $100$ nodes, across transformers with approximately 100k parameters, varying in width and depth. \newline While the loss and accuracies remain consistent, shallow and wide transformers demonstrate significantly faster training and inference times.}
    \label{figure:width_vs_depth}
\end{figure*}

\section{Experiments}\label{sec:experiments}

We showcase the effect of width growth in transformers by training a variety of models on synthetic graph algorithmic tasks. 
\Cref{sec:trade_off_exps} establishes the relative advantages of scaling width over depth by training a family of models with similar parameter counts and variable network topologies; our results show that the shallower and wider models yield the same accuracy as the deeper models, but with faster training and inference. 
\Cref{sec:critical_width_exp} evaluates the \textit{critical width} at which the substructure counting task becomes solvable, and shows that it is roughly linear. 
%It turns out to be roughly linear, again demonstrating that linear width may suffice to identifies a roughly linear \textit{critical width scaling} for the substructure counting task, by finding the minimum width necessary to solve task instances of increasing length.
\Cref{sec:tokenization_exps} demonstrates the trade-offs of each graph encoding scheme (node-adjacency versus edge-list versus Laplacian-eigenvector) on different tasks.\footnote{Code is provided in the Supplementary Material.}  

% \clayton{Outline of experiments: threshold between width and depth; roughly linear critical width; interplay of task and representation}

In our experiments, we used a standard transformer architecture using Pytorch's transformer encoder layers \citep{paszke2019pytorch}. Specifically, each layer is composed of Multi-Head Self-Attention,
Feedforward Neural Network,
Layer Normalization and Residual Connections.
More experimental details are provided in Appendix~\ref {sec:experimental_details}.

For all experiments in \Cref{sec:trade_off_exps} and \Cref{sec:critical_width_exp}, we used the adjacency rows tokenization as described in \Cref{sec:sub_detection}. Details of the  implementation are described in \Cref{sec:more_experiments}.
We considered the tasks of connectivity, triangles count, and 4-cycle count.
For the counting tasks, we used the substructure counting dataset from~\citet{chen2020graphneuralnetworkscount}, where each graph was labeled with the number of pre-defined substructures it contains, as a graph regression task. For the connectivity tasks, we generated synthetic graphs, and the label indicates whether the graph is connected or not.  All the datasets information is described in detail in ~\Cref{sec:experimental_details}.

\subsection{Empirical trade-offs between width and depth}\label{sec:trade_off_exps}
In this subsection, we examine the empirical trade-offs between depth and width in transformers. We show that when using transformers that are shallow and wide, the training and inference times are significantly lower than when using deep and narrow transformers. This is while test error and training convergence rates are empirically similar. 

We trained a transformer with a fixed amount of 100k parameters split between varying depth and width. We examine how the running time, loss, and generalization depend on the width and depth. We examine the following pairs of (depth, width): $(1, 125), (2, 89), (4, 63), (8, 45), (10, 40)$. We train each model for $100$ epochs and examine the following: the total training time, the total inference time, the training loss and test performance (accuracy for classification and MAE for regression).
We repeat this experiment with graph sizes 50 and 100.
We report the averages over $3$ runs with random seeds. 
The hyper-parameters we tuned are provided in \Cref{sec:experimental_details}.

The results for the connectivity task over $100$ nodes are presented in \Cref{figure:width_vs_depth}. Additional results for the counting tasks and different graph sizes, present the same trends, and are provided in \Cref{sec:more_experiments} due to space limitations.
As shown in \Cref{figure:100_train_loss} and \Cref{figure:100_test_acc}, the training loss and accuracy remain consistent across all depth and width configurations. However, \Cref{figure:100_train_infer_time} reveals that shallow and wide transformers significantly reduce the total training and inference time compared to their deeper and narrower counterparts. This may be due to the ability of GPUs to parallelize the computations across the width of the same layer, but not across different layers.

\subsection{Minimum width for substructure counting}\label{sec:critical_width_exp}

% \begin{figure}[t]
\begin{wrapfigure}{R}{0.5\textwidth}
    \centering
      \subfigure[]{\label{figure:training_time} \includegraphics[width=0.45\textwidth]{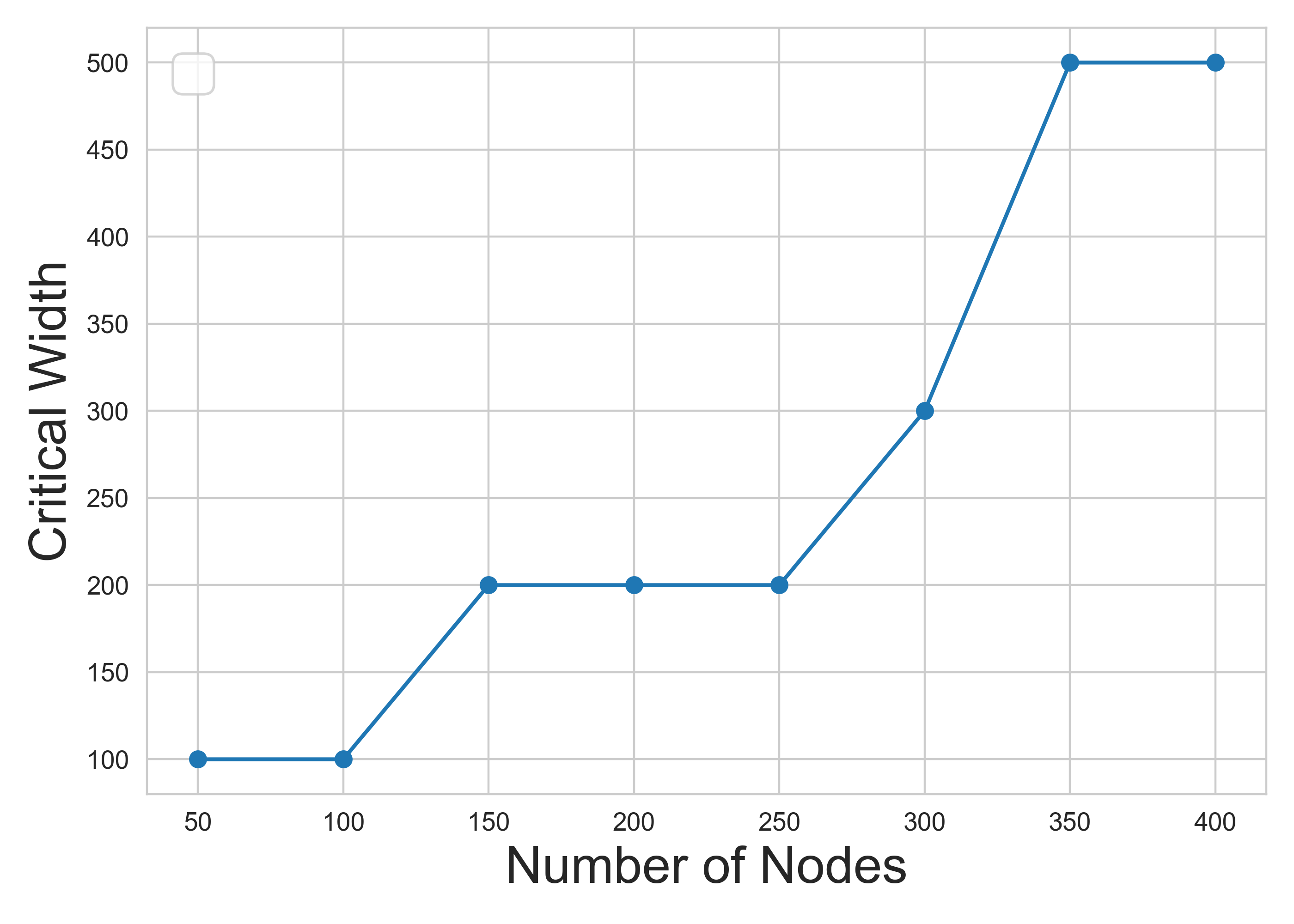}}
    \caption{Critical width evaluation for the 4-Cycle Count task, using a transformer with 1 layer. The points indicate the critical width at which the model fails to fit the data.}
    \label{fig:phase_4cycle}
% \end{figure}
\end{wrapfigure}

In this subsection, we show that the  transformer width required to fit the substructure counting tasks increases sub-quadratically with the number of nodes, as argued in ~\Cref{sec:sub_detection}.
The experiments considered graphs with increasing numbers of nodes, ranging from 50 to 400 in increments of 50, and transformer widths varying from 100 to 800 in increments of 100. 
For each combination of graph size and transformer width, we determined the critical width at which the model failed to fit the data. The \textit{critical width} is defined as the width where the training loss plateaued at more than
0.05. To determine the critical width, we conducted a grid search over each combination of graph size and model width, and selected the model that fitted the data best.  The hyper-parameters we considered are provided in \Cref{sec:experimental_details}.

To isolate the effect of width, we used a transformer model with one layer. We used two attention heads to ensure there exists a width for which the model can fit the data for all the evaluated graph sizes. 
The results for the 4-Cycle count task are presented in \Cref{fig:phase_4cycle}. Due to space limitations, the results for the triangle count task are deferred to \Cref{sec:more_experiments}.
\Cref{fig:phase_4cycle} shows that the critical width increases roughly linearly with the graph size.

\subsection{Comparison of graph encodings}\label{sec:tokenization_exps}
In this paper, we focus on a graph tokenization where each row of the graph's adjacency matrix is treated as a token for the model. This tokenization offers significant efficiency advantages for dense graphs, as the edge-list representation requires $O(n^2)$ tokens, whereas the adjacency-row representation reduces this to $O(n)$.
To validate the effectiveness of this tokenization approach in practice, we evaluate it on real graph datasets.

We compared the adjacency-row representation to the edge-list representation by training a transformer model on three Open Graph Benchmark (OGB) \cite{hu2020open} datasets: \textbf{ogbg-molhiv}, \textbf{ogbg-molbbbp}, and \textbf{ogbg-molbace}. In ogbg-molhiv, the task is to predict whether a molecule inhibits HIV replication, a binary classification task based on molecular graphs with atom-level features and bond-level edge features.   ogbg-bbbp involves predicting blood-brain barrier permeability, a crucial property for drug development.   ogbg-bace focuses on predicting the ability of a molecule to bind to the BACE1 enzyme, associated with Alzheimer’s disease. 
We also evaluated a tokenization using the eigenvectors and eigenvalues of the graph Laplacian, as commonly used in the literature \citep{dwivedi2021generalizationtransformernetworksgraphs, kreuzer2021rethinkinggraphtransformersspectral}. 
More experimental details, including the dataset statistics can be found in \Cref{sec:experimental_details}.

% \begin{wrapfigure}{R}{0.5\textwidth}
\begin{table}[ht]

    \centering
        \caption{ROC-AUC performance metrics for different graph representations: Edge List, Adjacency Rows, and Laplacian Eigenvectors (LE), averaged over 3 random seeds.}% The results are reported as the average ROC-AUC over three random seeds, along with the corresponding standard deviations.}
    \label{tab:ogbg-compare}
    \vskip 0.15in
    \begin{tabular}{lccc}
    \toprule
     & \multicolumn{3}{c}{\textbf{Dataset}} \\
\cmidrule(lr){2-4}
          \textbf{Tokenization} & \textsc{molhiv} 
          & \textsc{molbbbp} 
          & \textsc{molbeca} \\ 
    \midrule
    \textbf{EdgeList}     &54.01$_{\pm1.38}$
  &
  64.73$_{\pm1.66}$ &66.06$_{\pm3.89}$

  \\
    \textbf{AdjRows}      &61.87$_{\pm1.10}$
  &67.63$_{\pm2.57}$ &68.64$_{\pm2.34}$

  \\
    \textbf{LE}  &68.11$_{\pm1.52}$
  & 55.31$_{\pm4.79}$& 63.61$_{\pm2.31}$\\
    \bottomrule
    \\
    \end{tabular}

\end{table}
% \end{wrapfigure}

The results of our evaluation are summarized in Table~\ref{tab:ogbg-compare}. In all three tasks, the adj-rows representation achieved better ROC-AUC  scores than the edge-list representation.  In two out of the three, it also improved upon the commonly used Laplacian eigenvectors representations.  The results suggest that the adjacency representation we use in this paper is empirically effective, and should be considered alongside the commonly used Laplacian eigenvector representation.

\section{Discussion and Future Work}

This paper uses a collection of graph algorithmic tasks---including subgraph detection, one vs two cycles, and Eulerian cycle verification---to demonstrate the powers of width bounded-depth transformers that take as input node adjacency encodings.
These results include sharp theoretical thresholds that demonstrate the trade-offs between constant, linear, quadratic, and intermediate width regimes.
Our empirical results validate the efficiency and accuracy of our choice of scaling regime and embedding strategy. 

There are numerous possible extensions of this work. 
One future direction is to study different graph tokenization schemes, beyond the node-adjacency encoding of this work and edge-list encoding of \cite{sanford2024understanding}.
A particularly notable alternative is the smallest eigenvectors of the graph Laplacian, presented as a vector of components for each node. 
This spectral embedding is a standard embedding scheme for GNNs, and the techniques and tasks developed in this paper would likely be relevant to proving similar bounds. 
We provide a preliminary exploration of the trade-offs between the intrinsically local characteristics of adjacency-based tokenization schemes and more global spectral approaches in \Cref{sec:alt-tok}.
Another future direction is to study the optimization and generalization capabilities of transformers to solve graph problems, beyond the expressiveness results presented in this work.

\section*{Acknowledgments}
This work was supported in part by the Tel Aviv University Center for AI and Data Science (TAD), the Israeli Science Foundation grants 2247/23 and 1437/22. OF was partially supported by the Israeli Science Foundation (grant No. 1042/22 and 800/22). We thank Joan Bruna for the helpful discussions while this work was being completed.

\bibliography{mybib}
\bibliographystyle{abbrvnat}

\newpage
\appendix
\onecolumn

\section{Alternative tokenization approaches}\label{sec:alt-tok}

While the primary aim of this paper is to study the properties of the node-adjacency tokenization scheme in terms of its width and depth trade-offs, we also establish clear trade-offs between this scheme and other encoding schemes. 
The \textit{Laplcian-eigenvector tokenization} passes as input to the transformer each node's components of the most significant eigenvectors. 

Concretely, let $A \in \reals^{n \times n}$ be the adjacency matrix of a graph and $D$ the diagonal degree metrix. 
The \textit{Laplacian matrix} is defined as $\mathcal{L} = D - A$.
Denote the eigenvectors of $\mathcal{L}$ as $\bv_1, \dots, \bv_n$ with respective eigenvalues $0 = \lambda_1 \leq \dots \leq \lambda_n$. For some embedding dimension $m$, we let the $m$-dimensional Laplacian-eigenvector tokenization be $\by_1, \dots \by_n$, where $\by_i = (\bv_{1, i}, \dots, \bv_{m, i})$; we encode the eigenvalues as well as $\by_0 = (\lambda_1, \dots, \lambda_m)$.
We contrast this with node-adjacency encodings of embedding dimension $m$, whose $i$th input is $\bx_i = \phi(A_i)$.

We note several illustrative toy tasks that demonstrate trade-offs between the two graph tokenization schemes.

\paragraph{Node-adjacency advantage at local tasks}
The node-adjacency tokenization is amenable for analyzing local structures around each node.
Most simply, the degree of each node can be computed in a sequence-wise manner with node-adjacency tokenization with embedding dimension $m = 1$ by simply computing the inner products $\langle \pmb{1}_n, \bx_i\rangle$.
Constructions like \Cref{thm:powers of adj,thm:adj-cycle-bounded-degree} further demonstrate the abilities of adjacency encodings to aggregate local structures.

In contrast, choosing the smallest eigenvectors in the alternative encoding makes it impossible to even compute each node degree without having embedding dimension $m$ growing linearly in the node count $n$\footnote{
Consider the task of computing the degree of a particular node of a graph consisting of $\frac{n}3$ disconnected linear subgraphs, each with three nodes connected by two edges. The zero eigenvalue thus has multiplicity $\frac{n}3$, and hence the eigenvectors $\bv_1, \dots, \bv_{n / 3}$ exist solely as indicators of connected components. Therefore, if nodes $i, j, k$ comprise a cluster and $m \leq \frac{n}3$, then their embeddings $\by_i, \by_j, \by_k$ are identical.  
}. 

\paragraph{Laplacian-eigenvector advantage at global tasks}
In contrast, the most significant graph Laplacian provide high-level information about the global structure of the graph. 
Most notably, the tokenization trivializes the connectivity task because a graph is disconnected if and only if its second-smallest eigenvalue is zero; transformers with the node-adjacency tokenization require either depth $\Omega(\log n)$ or width $\Omega(n)$ to solve the same problem.

Other properties of structured graphs reveal themselves with low-dimensional Laplacian-eigenvector tokenizations. 
For instance, the relative position of a node in a lattice or ring graph are encoded in the most significant eigenvectors. 
Graph clustering algorithms could be inferred by transformers that take spectral encodings as input and simulate algorithms like $k$-means. 
The hardness of graph connectivity with the adjacency encoding translates to hardness results for efficiently simulating clustering algorithms.

\paragraph{Quadratic embedding equivalence}
Critically, the above trade-offs occur in small embedding dimensions. 
In the regime where $m  = \Omega(n^2)$ and MLPs are universal approximators, both tokenization schemes are universal.
The entire graph can be encoded in a single token, which can then convert between $A$ and the spectrum of $\mathcal{L}$.

\section{Proofs from \cref{sec:beyond sub-linear}}\label{appen: proofs from beyond sub-linear}

 \blue{\subsection{The $1$ vs. $2$ cycle conjecture}\label{subsec: 1vs2}
 The most notable conjecture in distributed computing is the $1$ vs. $2$ cycle conjecture, which is a common method for providing conditional lower bounds in the MPC model. See Definition 2 in \cite{sanford2024understanding} for a $(\gamma,\delta)$-MPC protocol. In simple words, this is a distributed computing protocol, where the input has length $n$, which is distributed to $\Theta(n^{1+\gamma-\delta})$ machines, each having memory of $s:=\Theta(n^\delta)$. Each communication round, each machine calculates an arbitrary local message bounded by size $s$, and all the messages are sent simultaneously. The $1$ vs. $2$ cycle conjecture is the following:
 \begin{conjecture}[Conjecture 13 in \cite{sanford2024understanding}]
     For any $\gamma > 0$ and $\delta\in(0,1)$, any $(\gamma,\delta)$-MPC protocol that distinguishes a single length-$n$ cycle from two disjoint length-$n/2$ uses $\Omega(n)$ communication rounds.
 \end{conjecture}
 }

\subsection{Proof of \Cref{thm: 1 vs 2}}

\thmonevstwo*
\begin{proof}
    The proof idea is to embed all the information about the graph into a single token, and then offload the main bulk of the solution to the MLP. For that, the first layer will transform the input of each node from adjacency rows to only indicate its two neighbors. The second layer will embed all the information over the entire graph into a single token.

    We now define the construction of the transformer. The input to the transformer are adjacency rows, where we concatenate positional encodings that include the row number. Namely, the $i$-th input token is equal to $\begin{pmatrix}
        \bx_i \\ i
    \end{pmatrix}$, where $\bx_i$ is the $i$-th row of the adjacency matrix of the graph.
    The first layer of self-attention will not effect the inputs. This can be thought of as choosing $V=\pmb{0}$ (while $K$ and $Q$ are arbitrary), and using the residual connection so that the tokens remain the same as the input. 
    We now use \Cref{lem:adj to edge} to construct a $3$-layer MLP that changes the embedding of each token such that it includes for each node its neighbors. The MLP does not change the positional encoding, this can be done since ReLU networks can simulate the identity function by $z\mapsto \sigma(z) - \sigma(-z)$, where $\sigma$ represents the ReLU function. We add another layer to the MLP that maps $\reals^3\ni \bv_i\mapsto\bu_i\in \reals^{3n}$, where $(\bu_i)_{3(i-1)+1:3i} = \bv_i$ and all the other entries of $\bu_i$ are equal to $0$.

    The second layer of self-attention will have the following matrices: $K=Q = \pmb{0}_{3n\times 3n}, V = n\cdot I_{3n}$. Since we used the zero attentions, all the tokens attend in the same way and using the exact same weight to all other tokens. The Softmax will normalize the output by the number of tokens, namely by $n$. Hence, after applying the $V$ matrix, all the output tokens of the second layer of self-attention will be equal to the sum of all the tokens that were inputted to the second layer. 

    In total, we get that the output of the second layer of attention is a vector with $3n$ coordinates, where each $3$ coordinates of the form $\begin{pmatrix}
        i \\ j \\ k
    \end{pmatrix}$ represent the two edge $(i,k),(j,k)$. Thus, the entire information of the graph is embedded in this vector. 
    
    Finally, we use the MLP to determine whether the input graph, whose edges are embedded as a list of edges, is connected. This can be done by an MLP since it has the universal approximation property \cite{cybenko1989approximation,leshno1993multilayer}. Although we don't specify the exact size of this MLP, it can be bounded since there are efficient deterministic algorithms for determining connectivity. These algorithms can be simulated using ReLU networks.

    Note that the output of the connectivity problem is either $0$ or $1$, thus it is enough to approximate a solution of this task up to a constant error (say, of $\frac{1}{4}$), and then use another layer to threshold over the answer.

\end{proof}

\begin{lemma}\label{lem:adj to edge}
    There exists a $3$-layer MLP $\Ncal:\reals^{n}\rightarrow\reals^{2}$ such that for every vector $\bv$ where there are $i,j\in[n]$ with $(\bv)_i = (\bv)_j=1$ and $(\bv)_k=0$ for every other entry we have that either $\Ncal(\bv) = \begin{pmatrix}
        i \\ j
    \end{pmatrix}$ or $\Ncal(\bv) = \begin{pmatrix}
        j \\ i
    \end{pmatrix}$.
\end{lemma}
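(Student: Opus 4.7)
The plan is to construct an explicit three-layer ReLU MLP that first builds a one-hot ``pair indicator'' feature over all $\binom{n}{2}$ candidate edges in the penultimate layer and then reads off the two endpoints with a single linear combination in the output layer. Because the lemma permits either ordering of $(i,j)$, I will always output the pair in increasing order, i.e. $(\min\{i,j\}, \max\{i,j\})$, which keeps the bookkeeping clean.

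The first layer applies the identity, which a ReLU realizes since $\sigma(v_k) = v_k$ for the nonnegative Boolean inputs; this layer is included only to match the stated three-layer depth count and carries no algorithmic content. The second layer consists of $\binom{n}{2}$ hidden neurons, one per pair $(a,b)$ with $1 \le a < b \le n$, computing
\[
h_{a,b}(\bv) \;=\; \sigma(v_a + v_b - 1).
\]
Since $v_a, v_b \in \{0,1\}$, a direct four-case check shows that $h_{a,b}=1$ precisely when $v_a = v_b = 1$ and $h_{a,b}=0$ otherwise. The promise that exactly two entries of $\bv$ equal $1$---at positions $i$ and $j$ with, say, $i < j$---then forces $h_{i,j}(\bv)=1$ and $h_{a,b}(\bv)=0$ for every other pair, so the hidden layer is a one-hot indicator of the active edge. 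The third (linear output) layer finally produces
\[
\Ncal(\bv)_1 \;=\; \sum_{1 \le a<b \le n} a \cdot h_{a,b}(\bv) \;=\; i, \qquad \Ncal(\bv)_2 \;=\; \sum_{1 \le a<b \le n} b \cdot h_{a,b}(\bv) \;=\; j,
\]
as required.

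The construction uses width $\binom{n}{2}$, which the statement does not restrict. I do not anticipate a genuine obstacle: the only substantive verification is the truth table of the ReLU AND gate $\sigma(v_a + v_b - 1)$ on $\{0,1\}^2$, and the remainder is linear bookkeeping. The argument relies essentially on two features of the input: the Boolean promise $\bv \in \{0,1\}^n$, which prevents the preactivation of any $h_{a,b}$ from accidentally becoming positive in an unintended case, and the uniqueness of the active pair, which ensures that the final linear readout sees no interference from other indicators. If one later wanted to reduce the hidden width below $\binom{n}{2}$, one could instead try reading off $i+j$ and $i^2+j^2$ via $\sum_k k\, v_k$ and a ReLU approximation of $\sum_k k^2 v_k$ and then inverting the symmetric functions, but this introduces nontrivial nonlinearity and is not needed for the lemma as stated.
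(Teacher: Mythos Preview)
Your proof is correct and takes a genuinely different route from the paper. You build a direct one-hot encoding of the active pair via $\binom{n}{2}$ ReLU AND gates $h_{a,b}(\bv)=\sigma(v_a+v_b-1)$ and then read off the two indices linearly; the first identity layer is pure padding, so your construction really only uses two nontrivial layers. The paper instead first compresses $\bv$ to the pair of symmetric polynomials $(i+j,\,i^2+j^2)=\bigl(\sum_k k\,v_k,\,\sum_k k^2\,v_k\bigr)$, notes that these determine $\{i,j\}$ uniquely, and then invokes a two-layer $O(n^2)$-width memorization network to invert this map---exactly the alternative you sketch in your closing paragraph. Both constructions land at $O(n^2)$ width, so the symmetric-function route does not in fact buy a smaller hidden layer; what it does buy is a narrow two-dimensional bottleneck after the first layer, though this is not exploited in the lemma or its applications. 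Your argument is the more elementary of the two, since it avoids any appeal to a separate memorization lemma.
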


\begin{proof}
    The first layer of the MLP will implement the following function:
    \[
    \bv\mapsto 
    \begin{pmatrix}
        \sum_{i=1}^n i\cdot \mathds{1}((\bv)_i = 1) \\
            \sum_{i=1}^n i^2\cdot \mathds{1}((\bv)_i = 1)
    \end{pmatrix}~.
    \]
    This is a linear combination of indicators, where each indicator can be implemented by the function $f(z) = \sigma(x) - \sigma(x-1)$. Note that the input to the MLP is either $0$ or $1$ in each coordinate, thus the output of this function will be $\begin{pmatrix}
        i + j \\
        i^2 + j^2
    \end{pmatrix}$. We have that $i+j$ and $i^2+j^2$ determine the values of $i$ and $j$. This means that if $i_1+j_1 = i_2 + j_2$ and $i_1^2+j_1^2 = i_2^2 + j_2^2$ then $i_1=i_2$ or $i_1=j_2$ and similarly for $i_2$. Since there are $O(n^2)$ different possible values, we can construct a network with $2$-layers and  $O(n^2)$ width that outputs $\begin{pmatrix}
        i \\ j
    \end{pmatrix}$ up to changing the order of $i$ and $j$.

\end{proof}

\subsection{Proof of \Cref{thm: 2 cycle lower bound}}
\thmcyclelb*
\begin{proof}

    Our proof relies on a communication complexity lower bound for the set disjointness problem, and is similar to the arguments from \cite{sanford2024representational,yehudai2024can}. The lower bound for communication complexity is the following: Alice and Bob are given inputs $a,b\in\{0,1\}^s$ respectively, and their goal is to find $\max a_ib_i$ by sending single bit-messages to each other in a sequence of communication rounds. The lower bound says that any deterministic protocol for solving such a task requires at least $s$ rounds of communication.

    We set $s = n^2 $, and design a graph $G = (V,E)$ that has a directed $2$-cycle iff $\max a_ib_i = 1$. The graph has $|V| = 2n$, we partition the vertices into $2$ disjoint sets $V_1,V_2$, and number the vertices of each set between $1$ and $n$. The inputs $a$ and $b$ encode the adjacency matrices between vertices in $V_1$ and $V_2$, and between vertices in $V_2$ and $V_1$ respectively. Now, there exists a directed $2$-cycle iff there is some $i\in[s]$ for which both $a_i=1$ and $b_i=1$ meaning that $\max a_ib_i = 1$.

    Assume there exists a transformer of depth $L$ with $H$ heads, embedding dimension $m$ and bit precision $p$ that successfully detects $2$-cycles in a directed graph. 
    Denote the weights of head $i$ in layer $\ell$ by $Q_i^\ell,K_i^\ell,V_i^\ell\in\reals^{m\times m}$ for each $i\in[H]$, and assume w.l.o.g.\ that they are of full rank, otherwise our lower bound would include the rank of these matrices instead of the embedding dimension (which can only strengthen the lower bound). We  design a communication protocol for Alice and Bob to solve the set disjointness problem. The communication protocol will depend on whether $T$ has residual connections or not. We begin with the case that it does have them, the protocol works as follows:

    \begin{enumerate}
        \item Given input sequences $a,b\in\{0,1\}^s$ to Alice and Bob respectively, they calculate the input tokens $x_1^0,\dots,x_{n}^0$ and $x_{n+1}^0,\dots,x_{2n}^0$, respectively. Note that the adjacency matrix have a block shape, thus both Alice and Bob can calculate the  rows of the adjacency matrix corresponding the the edges which are known to them. 
        \item Bob calculates $K_j^1x_i^0, Q_j^1x_i^0, V_j^1x_i^0$ for every head $j\in[H]$ and transmits them to Alice. The number of transmitted bits is $O(nmHp)$
        \item Alice can now calculate the output of the $r$-th token after the first layer. Namely, for every head $j\in[H]$, she calculates:
        \begin{align*}
            & s_j^{r} = \sum_{i=1}^{2n} \exp(x_i^{0^\top} K_j^{1^\top} Q_j^1 x_r^0) \\
            & t_{j}^{r} = \sum_{i=1}^{2n}\exp(x_i^{0^\top} K_j^{1^\top} Q_j^1 x_r^0)V_j^1x_i^0~.
        \end{align*}
        The output of the $j$-th head on the $r$-th token is equal to $\frac{t_j^r}{s_j^r}$. For the first $n$ tokens, Alice use the residual connection which adds the tokens that are known only to her. She now passes the tokens through the MLP to calculate $x_1^1,\dots,x_n^1$, namely the output of the tokens known to her after the first layer.
        \item Similarly to the previous $2$ steps, Bob calculates the tokens $x_{n+1}^1,\dots,x_{2n}^1$ which are known only to him.
        \item For any additional layer, the same calculations are done so that Alice calculates $x_1^\ell,\dots,x_n^\ell$ and Bob calculates $x_{n+1}^\ell,\dots,x_{2n}^\ell$.

    \end{enumerate}

    In case there are no residual connections, after the third step above Alice have the information about all the tokens. Hence, there is no need for more communication rounds, and Alice can finish the rest of the calculations of the transformers using the output tokens of the first layer.
    
    By the equivalence between the set disjointness and the directed $2$-cycle that was described above, Alice returns $1$ iff the inputs $\max_i a_ib_i=1$, and $0$ otherwise. The total number of bits transmitted in this protocol in the case there are residual connection is $O(nmpHL)$, since there are $O(nmpH)$ bits transferred in each layer. The lower bound is determined by the size of the input which is $s=n^2$, hence $mpHL = \Omega(n)$. In the case there are no residual connections there is no dependence on $L$, hence the lower bound becomes $mpH = \Omega(n)$.
    \end{proof}

\subsubsection{An extension to bounded degree graphs}

In order to prove the optimality result in \Cref{thm:adj-cycle-bounded-degree} for the task of determining the existence of a 2-cycle in \textbf{bounded-degree graphs}, we state the following theorem. 

\begin{theorem}\label{thm:cycle-opt-bounded-degree}
        Let $T$ be a transformer with embedding dimension $m$ depth $L$, bit-precision $p$ and $H$ attention heads in each layer. If $T$ can detect $2$-cycles on $d$-degree directed graphs, then:
    \begin{enumerate}
        \item If $T$ has residual connections then $mpHL = \Omega(d)$.
        \item If $T$ doesn't have residual connections then $mpH = \Omega(d)$.
    \end{enumerate}

\end{theorem}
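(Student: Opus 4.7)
The plan is to adapt the set-disjointness reduction of Theorem~\ref{thm: 2 cycle lower bound} so that the adversarial graph instances have maximum degree at most $d$, while the encoded disjointness instance still has $\Theta(nd)$ bits. Once this is done, the Alice--Bob communication protocol from that proof goes through verbatim: each layer requires $O(nmHp)$ bits to be exchanged between Alice (who controls the tokens of $V_1$) and Bob (who controls those of $V_2$), so with residual connections $L$ layers incur $O(nmHpL)$ total bits, while without residuals a single exchange of $O(nmHp)$ bits suffices for one party to complete the computation. Matching this against the $\Omega(s)$ communication lower bound for set-disjointness on $s=\Theta(nd)$ bits yields the two claimed bounds $mpHL=\Omega(d)$ and $mpH=\Omega(d)$.

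The new ingredient is a ``template'' bipartite graph that carves out the bounded-degree subfamily of instances. I would fix a $\lfloor d/2\rfloor$-regular bipartite graph $H=(V_1\cup V_2,E_H)$ with $|V_1|=|V_2|=n$, which exists for every $d\leq n$ (e.g., as a union of $\lfloor d/2\rfloor$ disjoint perfect matchings). Alice receives $a\in\{0,1\}^{|E_H|}$ and, for each $(u_i,v_j)\in E_H$, places the directed edge $u_i\to v_j$ iff the corresponding bit of $a$ equals $1$; Bob analogously controls the reverse edges $v_j\to u_i$ via his bitstring $b\in\{0,1\}^{|E_H|}$. No edges outside the template are allowed. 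Each vertex of $G$ then has in-degree and out-degree each bounded by $\lfloor d/2\rfloor$, giving total degree at most $d$, and $G$ contains a directed $2$-cycle iff $a$ and $b$ share a $1$-coordinate. This is a reduction from set-disjointness on $|E_H|=n\lfloor d/2\rfloor=\Theta(nd)$ bits.

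With the template fixed, I would replay the simulation argument of Theorem~\ref{thm: 2 cycle lower bound} step by step. Alice and Bob locally form the rows of the adjacency matrix for their respective vertices, knowing exactly the template positions they control (and that all non-template entries are $0$). At layer $\ell$, Bob transmits to Alice the per-head vectors $K_j^\ell x_i^{\ell-1}$, $Q_j^\ell x_i^{\ell-1}$, $V_j^\ell x_i^{\ell-1}$ for his $n$ tokens, at cost $O(nmHp)$ bits; Alice then performs the softmax aggregation and MLP update for her tokens, using the residual connection to inject the parts of the hidden state private to her. The roles swap to update Bob's tokens. After $L$ layers, one party reads off the transformer's output bit and decides the disjointness instance.

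The main obstacle is the bookkeeping that ensures every vertex degree remains within the advertised budget while keeping the disjointness domain of size $\Theta(nd)$; this reduces to the standard fact that $r$-regular bipartite graphs on $n+n$ vertices exist for every $r\leq n$. Edge cases with very small $d$ make the statement vacuous or trivial, since a graph of maximum total degree at most $1$ cannot contain a directed $2$-cycle at all. Every other ingredient---the $\Omega(s)$ lower bound on set-disjointness and the per-layer communication accounting---is imported unchanged from the proof of Theorem~\ref{thm: 2 cycle lower bound}.
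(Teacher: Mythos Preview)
Your proposal is correct and follows essentially the same approach as the paper: both restrict the adversarial instances to a fixed $d$-bounded bipartite ``template'' so that the set-disjointness instance has $\Theta(nd)$ bits, and then replay the per-layer communication simulation from Theorem~\ref{thm: 2 cycle lower bound} unchanged. Your use of a $\lfloor d/2\rfloor$-regular template (to bound total degree rather than out-degree) and your remark on the vacuous $d\leq 1$ case are minor bookkeeping refinements, but the argument is otherwise identical to the paper's.
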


\begin{proof}
    The proof is nearly identical to that of \Cref{thm: 2 cycle lower bound}, except that we alter the reduction to ensure that that the graph possessed by Alice and Bob is of degree $d$.

    As before, we design a graph $G = (V, E)$ with vertices partitioned into two sets $V_1, V_2$ satisfying $|V_1| = |V_2| = n$.
    Let $\bar{E}_d$ denote the edges of a bipartite graph between $V_1$ and $V_2$ such that (1) every node has $d$ incident outgoing edges; and (2) $(i, j) \in \bar{E}_d$ if and only if $(j, i) \in \bar{E}_d$.

    Consider some instance of set disjointness with $a, b \in \{0, 1\}^s$ for $s = nd$. 
    We index the $2s$ edges in $\bar{E}_d$ as $e^a_1 = (v^1_1, v^2_1), \dots, e^b_s = (v^1_s, v^2_s)$ and $e^b_1 = (v^2_1, v^1_1), \dots, e^b_s = (v^2_s, v^2_s)$.
    Then, we embed the instance by letting $e^a_i \in E$ if $a_i = 1$ and $e^b_i \in E$ if $b_i = 1$.
    As before, there exists a directed 2-cycle in $G$ if and only if $\max_i a_i b_i = 1$.

    The analysis of the transformer remains unchanged. 
    An $L$-layer transformer with embedding dimension $m$, heads $H$, and bit precision $p$ transmits $O(nmpHL)$ bits between Alice and Bob.
    The hardness of set disjointness requires that at least $s = nd$ bits be transmitted, which means that it must be the case that $mphL = \Omega(d)$.
\end{proof}

    \subsection{Proof of \Cref{thm:powers of adj}}

    We will need the following lemma for the proof.

\begin{lemma}\label{lem:2-layer memorization}
    Let $a_1,\dots,a_k,b_1,\dots,b_k\in\reals$ where the $a_i$'s are distinct. There exists a $2$-layer fully-connected neural network $N:\reals\rightarrow\reals$ with width $O(k)$ such that $N(a_i) = b_i$.
\end{lemma}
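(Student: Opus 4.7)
The plan is to reduce the problem to the classical fact that any continuous piecewise-linear function $\reals\to\reals$ with at most $k-1$ breakpoints is realizable as a single-hidden-layer ReLU network of width $O(k)$. Without loss of generality I would relabel the data so that $a_1 < a_2 < \cdots < a_k$, and then construct an explicit piecewise-linear interpolant that passes through all the points $(a_i,b_i)$.

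For the construction, I would use the ansatz
\[
N(x) \;=\; b_1 \;+\; \sum_{i=1}^{k-1}\alpha_i\,\sigma(x - a_i),
\]
where $\sigma$ denotes ReLU and the scalar $b_1$ is carried as the bias of the output neuron. This is already a $2$-layer fully-connected network with exactly $k-1$ hidden units, so the width bound is immediate as soon as I can exhibit the required $\alpha_1,\dots,\alpha_{k-1}$. Because $\sigma(a_1-a_i)=0$ for every $i\ge 1$, one automatically has $N(a_1)=b_1$. For each $m\in\{2,\dots,k\}$ the key structural identity is
\[
N(a_m) \;=\; b_1 + \sum_{j=1}^{m-1}\alpha_j\,(a_m - a_j),
\]
which uses the fact that $\sigma(a_m-a_j)=0$ whenever $j\ge m$. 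Since this expression is affine in $\alpha_{m-1}$ with nonzero coefficient $a_m - a_{m-1} > 0$, I can solve for $\alpha_{m-1}$ (given the previously-chosen $\alpha_1,\dots,\alpha_{m-2}$) to enforce $N(a_m)=b_m$. Performing this greedy assignment in the order $m=2,3,\dots,k$ finishes the construction.

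The only real subtlety---and the nearest thing to an obstacle in an otherwise elementary argument---is checking that fixing $\alpha_{m-1}$ at stage $m$ does not perturb the matches already achieved at $a_1,\dots,a_{m-1}$. This is immediate from the same triangular zero-pattern: $\alpha_{m-1}$ only appears in $N(a_\ell)$ for $\ell \ge m$, so the inductive procedure never undoes earlier work. The resulting network has width $k-1 = O(k)$, as required.
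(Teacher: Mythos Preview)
Your proof is correct, but it takes a genuinely different route from the paper's. The paper constructs, for each data point $a_i$, a localized ``bump'' function
\[
f_i(x)=\tfrac{1}{\delta}\bigl(\sigma(x-(a_i-2\delta))-\sigma(x-(a_i-\delta))+\sigma(a_i+2\delta-x)-\sigma(a_i+\delta-x)\bigr),
\]
where $\delta=\min_{i\neq j}|a_i-a_j|$, so that $f_i(a_i)=1$ and $f_i(a_j)=0$ for $j\neq i$; the network is then $N(x)=\sum_i b_i f_i(x)$, using $4k$ hidden units. Your construction instead builds the piecewise-linear interpolant directly via the triangular system in the $\alpha_j$, using only $k-1$ hidden units plus an output bias. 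Your approach is tighter in width and arguably more elementary; the paper's approach buys modularity (each $b_i$ affects exactly one coefficient, and no sorting is required), which is convenient when such bumps are reused elsewhere in the paper as building blocks for indicator functions.
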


\begin{proof}
    Let $\delta = \min_i\in[k] |a_i - a_j|$, by the assumption $\delta >0$. Let:
    \[
    f_i(x) = \frac{1}{\delta}\left(\sigma(x - (a_i - 2\delta)) - \sigma(x - (a_i - \delta) + \sigma(a_i + 2\delta - x) - \sigma(a_i +\delta - x)\right)~.
    \]
    It is clear that $f_i(a_i) = 1$ and $f_i(a_j) = 0$ for any $j\neq i$. Thus, we define the network: $N(x) = \sum_{i=1}^k b_i f_i(x)$.
\end{proof}

    We are now ready to prove the theorem.   

    \thmpoweradj*
    \begin{proof}
    We will first show the construction for the case of $L=1$, and then show inductively for general $L$. We define the input for the transformer as $X = \begin{pmatrix}
        A \\  I \\ \pmb{0}_{d\times d} \\ \pmb{0}_{d\times d}
    \end{pmatrix}\in \reals^{3d\times d}$, namely, there are $d$ tokens, each token contains a column of the adjacency matrix concatenated with a positional embedding. The self-Attention layer contains one head with the following matrices:
    \begin{align*}
    &K = c\cdot\begin{pmatrix}
        I  & &  \\ 
        & \pmb{0}_{d\times d} &  \\
        & & \pmb{0}_{d\times d}  
    \end{pmatrix},~~ Q =  \begin{pmatrix}
        \pmb{0}_{d\times d}  & I &  \\ 
        \pmb{0}_{d\times d}& \pmb{0}_{d\times d} &  \\
        & & \pmb{0}_{d\times d}  
    \end{pmatrix}~, V = \begin{pmatrix}
        I & & \\
        & I &\\
        & & \pmb{0}_{d\times d}
    \end{pmatrix}~.
    \end{align*}
    where $c > 0$ is some sufficiently large constant the determines the temperature of the softmax.
    We first have that $X^\top K^\top Q X = A$. Since all the values of $A$ are either $0$ or $1$, for a sufficiently large $c> 0 $, the softmax behave similarly to the hardmax and we get that: 
    \[
    VX\sm(A) = \begin{pmatrix}
        A^2 \cdot \text{deg}(A)^{-1} \\ A \cdot \text{deg}(A)^{-1} \\ \pmb{0}_{d\times d}
    \end{pmatrix}~,
    \]
    where $\text{deg}(A)$ is a diagonal matrix, where its $i$-th diagonal entry is equal to the degree of node $i$. Finally, we apply an MLP $\Ncal:\reals^{3d\times d}\rightarrow \reals^{3d\times d}$ that operates on each token separately. We define the MLP such that:
    \[
    \Ncal\left(\begin{pmatrix}
        A^2 \cdot \text{deg}(A)^{-1} \\ A \cdot \text{deg}(A)^{-1} \\ \pmb{0}_{d\times d}
    \end{pmatrix}\right) = \begin{pmatrix}
    \pmb{0}_{d\times d} \\ \pmb{0}_{d\times d}\\
        A^2 
    \end{pmatrix}~.
    \]
    Constructing such an MLP can be done by calculating the degree of each token from $A \cdot \text{deg}(A)^{-1}$ and multiplying the first $d$ coordinates of each token by this degree. This can be done since the entries of this matrix is either $0$ or the inverse of the degree of node $i$, thus it requires only inverting an integer between $1$ and $n$. By \Cref{lem:2-layer memorization} this can be done by a $2$-layer MLP with width $n$. This finishes the construction for calculating $A^2$.
    
    For general $L > 2$ we use the residual connection from the inputs. That is, the input to the $L$-th layer of the transformer is equal to $\begin{pmatrix}
    A \\ I\\
        A^L 
    \end{pmatrix}$. We use a similar construction as the above, except that we use the matrix $V = \begin{pmatrix}
        \pmb{0}_{d\times d} & & \\
        & I &\\
        & & I
    \end{pmatrix}$. This way, the output of the self-attention layer is $\begin{pmatrix}
        \pmb{0}_{d\times d}  \\ A \cdot \text{deg}(A)^{-1} \\ A^{L+1} \cdot \text{deg}(A)^{-1}
    \end{pmatrix}$, and we employ a similar MLP as before to eliminate the $\text{deg}(A)^{-1}$ term.
    \end{proof}

\subsection{Proof of \Cref{thm:adj-cycle-bounded-degree}}

\thmcycledegree*

The optimality result is proved using the same methodology as \Cref{thm: 2 cycle lower bound}. 
It is stated and proved formally as \Cref{thm:cycle-opt-bounded-degree}.

The proof of the construction adapts an argument from Theorem~2 of \cite{sanford2024representational}, which shows that a sparse averaging task can be solved with bounded-width transformers. We make use of the following fact, which is a consequence of the Restricted Isometry Property analysis of \cite{c2005decoding,mendelson2005reconstruction}.

\begin{lemma}\label{lemma:rip}
    For any $d \leq n \in \naturals$ and $p = \Omega(d \log n)$, there exist vectors $\by_1, \dots, \by_n \in \reals^p$ such that for any $\bx \in \{0, 1\}^n$ with $\sum_i \bx_i \leq d$, there exists $\phi(\bx) \in \reals^p$ such that
    \begin{align*}
        \langle \phi(\bx), \by_i \rangle &= 1, \quad \text{if $\bx_i = 1$}, \\
        \langle \phi(\bx), \by_i \rangle &\leq \frac12, \quad \text{if $\bx_i = 0$.}
    \end{align*}
\end{lemma}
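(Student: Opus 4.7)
The plan is to realize $\by_1, \dots, \by_n$ as columns of a normalized subgaussian random matrix and to take $\phi(\bx)$ to be the classical dual certificate from the sparse-recovery literature, so that the two required properties become the standard on-support/off-support conditions for a Fuchs-type certificate.

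First I would set $Y = [\by_1 \mid \cdots \mid \by_n] \in \reals^{p \times n}$ with i.i.d.\ entries drawn from a normalized subgaussian distribution, for instance $N(0, 1/p)$, so that each column satisfies $\E \|\by_i\|_2^2 = 1$ and $\E \langle \by_i, \by_j \rangle = 0$ for $i \neq j$. Then by the RIP analysis of \cite{c2005decoding} and the generic-chaining bounds of \cite{mendelson2005reconstruction} cited directly in the lemma, when $p = \Omega(d \log n)$ the matrix $Y$ satisfies the Restricted Isometry Property of order $2d$ with constant $\delta < 1/4$ (say), with high probability over the draw. Given $\bx \in \{0,1\}^n$ with support $S$ of size at most $d$, I would define the dual certificate
\[
\phi(\bx) \;=\; Y_S \, (Y_S^\top Y_S)^{-1} \mathbf{1}_{|S|},
\]
where $Y_S \in \reals^{p \times |S|}$ collects the columns of $Y$ indexed by $S$; RIP guarantees $\sigma_{\min}(Y_S^\top Y_S) \geq 1 - \delta > 0$, so the inverse is well-defined.

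The on-support condition $\langle \by_i, \phi(\bx) \rangle = 1$ for each $i \in S$ then holds by construction, since $Y_S^\top \phi(\bx) = \mathbf{1}_{|S|}$ by direct computation. For the off-support condition I would exploit independence: for any $j \notin S$ the column $\by_j$ is independent of $\{\by_i\}_{i \in S}$ and hence of $\phi(\bx)$, so conditional on $Y_S$ the scalar $\langle \by_j, \phi(\bx) \rangle$ is zero-mean subgaussian with variance proxy $\|\phi(\bx)\|_2^2 / p \leq d / [p(1-\delta)]$, where the norm bound $\mathbf{1}_{|S|}^\top (Y_S^\top Y_S)^{-1} \mathbf{1}_{|S|} \leq d / (1-\delta)$ is again immediate from RIP. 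A standard subgaussian tail bound, combined with a union bound over $j \notin S$ and over all sparsity patterns $S$, then delivers $|\langle \by_j, \phi(\bx) \rangle| \leq 1/2$ simultaneously for every admissible $\bx$ and every $j$ off its support, with high probability.

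The main obstacle will be shaving the naive $\Omega(d^2 \log n)$ sample complexity that arises from a black-box union bound over the $\binom{n}{d}$ possible supports down to the advertised $\Omega(d \log n)$. Removing this extra factor of $d$ is exactly what the sharper covering-number, restricted-orthogonality, and matrix-level concentration arguments in \cite{c2005decoding,mendelson2005reconstruction} are designed to deliver, and I would invoke those bounds as a black box rather than rederive them; everything else in the argument then reduces to plugging the RIP estimates into the explicit dual-certificate formula above.
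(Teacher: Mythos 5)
Your high-level plan (random subgaussian dictionary, RIP, dual certificate) is the natural route and matches the paper's intent --- the paper gives no proof beyond the citation --- but the specific certificate you write down does not work at the claimed scaling, and the step you defer to the references is precisely the step they do \emph{not} supply in the form you need. With $\phi(\bx) = Y_S(Y_S^\top Y_S)^{-1}\mathbf{1}$, the deterministic consequence of RIP is only $|\langle \by_j, \phi(\bx)\rangle| \le \theta_{1,d}\,\|(Y_S^\top Y_S)^{-1}\mathbf{1}\|_2 \le \delta\sqrt{d}/(1-\delta)$ for $j \notin S$, which forces $\delta = O(1/\sqrt{d})$ and hence $p = \Omega(d^2 \log n)$. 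Your probabilistic alternative hits the same barrier, and this is not an artifact of a loose union bound: for i.i.d.\ $N(0,1/p)$ columns with $p = \Theta(d\log n)$, fix $j$ and let $S$ be the $d$ indices $i$ maximizing $\langle \by_j, \by_i\rangle$; then $\langle \by_j, Y_S \mathbf{1}\rangle \approx d\sqrt{\log n / p} = \Theta(\sqrt{d})$, and since $(Y_S^\top Y_S)^{-1}\mathbf{1} \approx \mathbf{1}$ under RIP, the certificate value $\langle\by_j,\phi(\bx)\rangle$ is also $\Theta(\sqrt{d}) \gg 1/2$ for this adversarial support. So the one-shot least-squares certificate genuinely fails uniformly over supports, and RIP at $p = O(d\log n)$ (which is what \cite{c2005decoding,mendelson2005reconstruction} actually establish) cannot be invoked as a black box to rescue it.

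What closes the gap is the \emph{iterative} certificate construction of Cand\`es--Tao (the lemmas underlying their exact recovery result in \cite{c2005decoding}): start from $\phi^{(1)} = Y_S(Y_S^\top Y_S)^{-1}\mathbf{1}$, note that the off-support leakage $Y_{S^c}^\top \phi^{(1)}$ has $\ell_2$ norm at most $\theta_{d,d}\|\mathbf{1}\|_2/(1-\delta_d)$ on \emph{every} $d$-subset of $S^c$ (so all but its top $d$ entries are individually at most $\theta_{d,d}/(1-\delta_d)$), and iteratively subtract certificates cancelling the top-$d$ leakage while preserving $Y_S^\top\phi = \mathbf{1}$; the corrections decay geometrically because $\theta_{d,d}/(1-\delta_d) < 1$, and the limiting $\phi$ satisfies $\max_{j\notin S}|\langle\by_j,\phi\rangle| \le \theta_{d,d}/(1-\delta_d-\theta_{d,d}) \le 1/2$ already when the order-$2d$ RIP constant is a small constant, i.e.\ at $p = O(d\log n)$. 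Separately, if you only need existence and are unconcerned with coordinate magnitudes, there is a fully elementary construction with $p = 2d+1$: take $\by_i = (1, i, i^2,\dots,i^{2d})$ and let $\phi(\bx)$ be the coefficient vector of $q_S(t) = 1 - \tfrac12\prod_{i\in S}(t-i)^2$, which equals $1$ on $S$ and is at most $\tfrac12$ off $S$ because a product of squared nonzero integers is at least $1$; its coordinates require $\Theta(d\log n)$ bits, however, which is why the RIP route is the one compatible with the paper's precision constraints.
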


We use this fact to prove \Cref{thm:adj-cycle-bounded-degree}.

\begin{proof}
Concretely, we prove that some transformer $T$ exists that takes as input 

\[X  = \begin{pmatrix} \bx_1& \dots & \bx_n \\ 1 & \dots & n \end{pmatrix} \in \reals^{(n + 1 )\times n} \] 
and returns $T(X) \in \{0, 1\}^n$, where $T(X_i) = 1$ if and only if the $i$th node in the graph whose adjacency matrix is $A$ belongs to a directed 2-cycle. We assume that no self-edges exist.

We first configure the input MLP to incorporate the above vectors for node identifiers and adjacency rows and produce the tokens $\tilde{X} = (\tilde\bx_1, \dots \tilde\bx_n) \in \reals^{m \times n}$ for $m = 2p+2$:
\[\tilde\bx_i = \begin{pmatrix} \phi(\bx_i) \\ \by_i \\ 1 \\ 0 \end{pmatrix}.\]
We also introduce a constant-valued ``dummy node'' $\tilde\bx_{n+1}$, which has no edges and does not appear in the output:
\[\tilde\bx_{n+1} = \begin{pmatrix} \pmb{0}_{p} \\ \pmb{0}_{p} \\ 0 \\ 1 \end{pmatrix}.\]
We define linear transforms $Q, K, V \in \reals^{d \times n}$ that satisfy the following, for any $i \in [n]$ and some sufficiently large temperature constant $c$:
\[Q\tilde\bx_i = c \begin{pmatrix}\phi(\bx_i) \\ \by_i \\ \frac74 \\ 0\end{pmatrix}, \ 
K\tilde\bx_i = \begin{pmatrix} \by_i \\ \phi(\bx_i) \\ 0 \\ 0\end{pmatrix}, \ 
K\tilde\bx_{n+1} = \begin{pmatrix}  \pmb{0}_{p} \\ \pmb{0}_{p} \\ 1 \\ 0\end{pmatrix},  \ 
V\tilde\bx_i = \begin{pmatrix} \pmb{0}_{p} \\ \pmb{0}_{p} \\ 0 \\ 1\end{pmatrix}, 
V\tilde\bx_{n+1} = \pmb{0}_{m}, \]

Then, for any $i, j \in [n]$ with $i \neq j$, the individual elements of the query-key product are exactly
\[(\tilde{X}^\top K^\top Q \tilde{X})_{j, i} = c\left(\langle \phi( \bx_i), \by_j\rangle + \langle \phi( \bx_j). \by_i\rangle\right).\]

By applying \Cref{lemma:rip}, we find that
\begin{align*}
    (\tilde{X}^\top K^\top Q \tilde{X})_{j, i} &= 2c, \quad \text{if $\bx_{i, j} =1$ and $\bx_{j, i} = 1$};\\
    (\tilde{X}^\top K^\top Q \tilde{X})_{j, i} &\leq \frac32c, \quad \text{otherwise.}
\end{align*}
In contrast, $(\tilde{X}^\top K^\top Q \tilde{X})_{n+1, i} = \frac74c$ for any $i$.

Thus, for sufficiently large $c$, all nonzero elements (after rounding) of $\sm(\tilde{X}^\top K^\top Q \tilde{X})_{\cdot, i}$ belongs to indices $j \in [n]$ if there exist at least one 2-cycle containing node $i$; if not, then $\sm(\tilde{X}^\top K^\top Q \tilde{X})_{n+1, i}  = 1$ and all others are zero.

By our choice of value vectors, the $i$th output of the self-attention unit is $\be_{m}$ if there exists a 2-cycle and $\pmb{0}_m$ otherwise.
\end{proof}

\section{Proofs from \Cref{sec:super linear}}

\subsection{Proof of \Cref{thm: subgraphs}}

\thmsubgrahs*

\begin{proof}
The main bulk of the proof will use the transformer to prepare the inputs. We will first explain the layout of the construction, and then present it formally. Each input node is represented as a row of the adjacency matrix. We will split the nodes into $n^{1/k}$ sets, where each set contains $n^{1-1/k}$ nodes. The first layer will prepare the adjacency rows so that each token will include only edges of other nodes from the same set. The second layer will combine all the nodes of each set into a separate token. This will use $n^{1/k}$ tokens, where each of them will contain at most $n^{2-2/k}$ edges. The last layer will use each token to represent each possible combination of $k$ such sets. There are at most $\binom{n^{1/k}}{k} \leq n$ such combinations, and each of them contains at most $n^{2-2/k}$ edges. We need an additional $n^{1/k}$ entries for technical reasons to do this embedding into all possible combinations of sets.

We now turn to the formal construction. Assume that the nodes are numbered as $v_1,\dots,v_n$, and denote by $\bx_1,\dots,\bx_n$ the row of the adjacency matrix corresponding to the nodes. The input to the transformer of the node $v_i$ will be $\begin{pmatrix}
    \bx_i \\ i
\end{pmatrix}\in\reals^{n+1}$, where $\be_i$ is the $i$-th standard unit vector.

Throughout the proof we assume that $n^{1/k}$ and $n^{1-1/k}$ are integers. Otherwise, replace them by their integral value.

\paragraph{Layer 1:} We begin the construction with an MLP that operates on each token separately. This can be viewed as if we use the self-attention layer to have no effect on the inputs, by setting $V=0$ and using the residual connection. The MLP will implement the following function:
\begin{align*}
    &\reals^{n+1}\ni\begin{pmatrix}
        \bx_i \\
        i
    \end{pmatrix}\mapsto \begin{pmatrix}
        \tilde{\bx}_i \\ \bw_i \\\bz_i \\i
    \end{pmatrix}\in \reals^{n^{2-2/k} + 2n^{1/k} + 1}~.
\end{align*}
Intuitively, we split the nodes into $n^{1/k}$ sets, each one containing $n^{1-1/k}$ nodes. $\tilde{\bx}_i$ will include a pruned adjacency row for node $i$ with only edges from its own set. $\bw_i$ indicates to which set each node belongs to, and $\bz_i$ indicate on the tokens that will store these sets.
We first introduce the vectors $\bar{\bx}_i\in\reals^{n^{1-1/k}}$ that are equal to:
\begin{equation*}
      (\bar{\bx}_i)_j = \sum_{r=1}^{n^{1/k}} \mathds{1}((\bx_i)_{(r-1)n^{1-1/k} + j} = 1)\cdot \mathds{1}((r-1)n^{1-1/k} + 1\leq i \leq rn^{1-1/k})\cdot \mathds{1}((r-1)n^{1-1/k} + 1\leq j \leq rn^{1-1/k})  
\end{equation*}
These vectors can be constructed using a $3$-layer MLP. First, note that this function in our case operates only on integer value inputs, since $i,j$ and all the entries of $\bx_i$ are integers, hence it is enough to approximate the indicator function up to a uniform error of $\frac{1}{2}$ and it will suffice for our purposes. To this end, we define the function:
\[
f_{r,s}(z) = \sigma(x - (r-1)) - \sigma(x- r) + \sigma(x-(s+1)) - \sigma(x-s)~.
\]
Here $\sigma = \max\{0,z\}$ is the ReLU function. If $s \geq r+2$ We get that $f_{r,s}(z) = 1$ for $r\leq z \leq s$, and $f_{r,s}(z) = 0$ for $z \leq r-1$ or $z \geq s+1$. This shows that the functions inside the indicators can be expressed (for integer valued inputs) using a $2$-layer MLP. Expressing the multiplication of the indicators can be done using another layer:
\[
g(z_1,z_2,z_3) = \sigma(z_1 + z_2 +z_3 - 2) = \mathds{1}(z_1 = 1)\cdot \mathds{1}(z_2 = 1)\cdot \mathds{1}(z_3 = 1)~,
\]
where $z_1,z_2,z_3\in \{0,1\}$. The width of this construction is $O(n)$ since for each of the $n^{1-1/k}$ coordinates of the output we sum $n^{1/k}$ such functions as above. 

We define $\tilde{\bx}_i\in\reals^{n^{2-2/k}}$ for $i\equiv j (\text{mod } n^{1-1/k})$ to be equal to $\bar{\bx}_i$ in the coordinates $(j-1)n^{1-1/k} + 1$ until $jn^{1-1/k}$ and all the other coordinates are $0$. These vectors will later be summed together across all nodes in the same set, which provides an encoding of all the edges in the set. We also define $\bw_i = \be_j\in\reals^{n^{1/k}}$ for $(j-1)n^{1/k} + 1\leq i \leq jn^{1/k}$ and $\bz_i = \be_i\in\reals^{n^{1/k}}$ for $i=1,\dots,n^{1/k}$ and $\bz_i = \pmb{0}$ otherwise.

\paragraph{Layer 2:} We define the weights of the second layer of self-attention in the following way:
\begin{align*}
    &K = \begin{pmatrix}
        \pmb{0}_{n^{2-2/k}\times n^{2-2/k}} & & \\
        & \pmb{0}_{n^{1/k}\times n^{1/k}} & & \\
        & &I_{n^{1/k}} & \\
        & & & 0
    \end{pmatrix},\\ 
    &Q = \begin{pmatrix}
        \pmb{0}_{n^{2-2/k}\times n^{2-2/k}} & & \\
        & I_{n^{1/k}} & & \\
        & & \pmb{0}_{n^{1/k}\times n^{1/k}} & \\
        & & & 0
    \end{pmatrix},\\ 
    &V = \begin{pmatrix}
        n^{1/k}I_{n^{2-2/k}} & \\
        & \pmb{0}_{(2n^{1/k} + 1)\times (2n^{1/k} + 1)}
    \end{pmatrix}~.
\end{align*}
Given two vectors $\begin{pmatrix}
        \tilde{\bx}_i \\ \bw_i \\\bz_i \\i
    \end{pmatrix},\begin{pmatrix}
        \tilde{\bx}_j \\ \bw_j \\\bz_j \\j
    \end{pmatrix}$, which are outputs of the previous layer, we have that: $\begin{pmatrix}
        \tilde{\bx}_i \\ \bw_i \\\bz_i \\i
    \end{pmatrix}^\top K^\top Q \begin{pmatrix}
        \tilde{\bx}_j \\ \bw_j \\\bz_j \\j
    \end{pmatrix} = \inner{\bz_i,\bw_j}$. This shows that the first $n^{1/k}$ tokens, which represent each of the $n^{1/k}$ sets, will attend with a similar weight to every node in their set. After applying the $V$ matrix we use the residual connection only for the positional embedding vectors\footnote{It is always possible to use the residual connection to affect only a subset of the coordinates. This can be done by doubling the number of unaffected coordinates, using the $V$ matrix to move the unaffected entries to these new coordinates, and then using a $1$-layer MLP to move the unaffected entries to their previous coordinates (which now include what was added through the residual connection). We omit this construction from here for brevity and since it only changes the embedding dimension by a constant factor. } (namely, the last $2n^{1/k}+1$ coordinates). Thus, the output of the self-attention layer for the first $n^{1/k}$ tokens encodes all the edges in their set in their first $n^{2-2/k}$ coordinates. This encoding is such that there is $1$ in the $i$-th coordinates if there is an edge between nodes $v_s$ and $v_r$ in the set for where $r$ and $s$ are the unique integers such that $i\equiv r(\text{mod } n^{1-1/k})$ and $(s-1)n^{2-2/k} + 1<i<sn^{2-2/k}$. Thus, the output of the self-attention layer can be written as $\begin{pmatrix}
        \by_i \\ \bw_i \\ \bz_i \\ i
    \end{pmatrix}\in\reals^{n^{2-1/k}+2n^{1/k} +1}$, where $\by_i$ is either an encoding as described above (for $i \leq n^{1/k}$) or some other vector (for $i \geq n^{1/k}$) for which its exact value will not matter. The vector $\bw_i$ is a positional embedding that is not needed anymore and will be removed by the MLP, and $\bz_i = \be_i$ for $i\leq n^{1/k}$ and $\bz_i = \pmb{0}$ otherwise.

    We will now construct the MLP of the second layer. First, note that given $n^{1/k}$ sets, the number of all $k$ combinations of such sets is bounded by $\binom{n^{1/k}}{k}\leq n^{k \cdot 1/k} = n$. Denote all possible combinations by $B_1,\dots,B_n$ and let $\bv_1,\dots,\bv_n\in\reals^{n^{1/k}}$ such that $(\bv_i)_j= 1$ if $B_i$ includes the $j$-th set, and $0$ otherwise. These vectors encode all the possible combinations of such sets. The MLP will apply the following map: 
    \[
    \reals^{n^{2-1/k}+2n^{1/k} +1}\ni\begin{pmatrix}
        \by_i \\ \bw_i \\ \bz_i \\ i
    \end{pmatrix}\mapsto \begin{pmatrix}
        \by_i \\ \bv_i \\ \bz_i 
    \end{pmatrix}\in \reals^{n^{2-1/k}+2n^{1/k} +1}~.
    \]
    This map can be implemented by a $3$-layer MLP. Specifically, the only part coordinates that changes are those of $\bw_i$ which are replaced by $\bv_i$. This can be done using the function $f(i) = \sum_{j=1}^n \mathds{1}(i = j)\cdot \bv_j$, and its construction is  similar to the construction of the MLP in the previous layer.

    \paragraph{Layer 3:} The last self-attention layer will include the following weight matrices:
    
    \begin{align*}
        &K = \begin{pmatrix}
            \pmb{0}_{n^{2-1/k}\times n^{2-1/k}} & & \\
            & I_{n^{1/k}}  & \\
            & & \pmb{0}_{n^{1/k}\times n^{1/k}} 
        \end{pmatrix},\\ 
        &Q = \begin{pmatrix}
            \pmb{0}_{n^{2-1/k}\times n^{2-1/k}} & & \\
            & \pmb{0}_{n^{1/k}\times n^{1/k}} & \\
            & & I_{n^{1/k}}
        \end{pmatrix},\\ 
        &V = \begin{pmatrix}
            kI_{n^{2-1/k}} & \\
            & \pmb{0}_{(2n^{1/k} + 1)\times (2n^{1/k} )}
        \end{pmatrix}~.
    \end{align*}
    After applying this layer to the outputs of the previous layer, each token $i$ will attend, with similar weight, to all the sets (out of the $n^{1/k}$ sets of nodes) that appear in its positional embedding vector $\bv_i$. Thus, after applying this layer, The first $n^{2-1/k}$ contain an encoding (as described in the construction of the previous layer) of all the edges in the $i$-th combination of $k$ sets $B_i$. 

    Finally, the MLP will be used to detect whether the given subgraph of size $k$ appears as a subgraph in the input graph (which is an encoding of the edges). The output of the MLP will be $1$ if the subgraph appears and $0$ otherwise. 

    Note that any subgraph of size $k$ must appear in one of those combination of sets. Thus, by summing all the tokens, if their sum is greater than $0$ the subgraph of size $k$ appears as a subgraph of $G$.
    
\end{proof}
\subsection{Proof of \Cref{thm: eulerian cycle}}

    We first define the \textit{Eulerian cycle verification problem} on multi-graphs.
    
    Consider some directed multi-graph $G = (V, E)$ for $V = \{v_1, \dots, v_n\}$ and $E = \{e_1, \dots, e_{N}\},$ where each edge is labeled as 
    \[e_j = (e_{j,1}, e_{j,2}, j) 
    % = ( v_i, v_j, j) 
    \in V \times V \times [N].\] 
    We say that $e_j$ is a \textit{successor edge} of $e_i$ if $e_{i,2} = e_{j,1}$.
    A problem instance also contains a fragmented path, which is expressed as a collection of ordered pairs of edges $P = \{p^1, \dots, p^{N}\}$ with
    \[p^j = (p^j_1, p^j_2) \in E \times E,\] 
    where 
    $p^j_1$ and $p^j_2$ are successive edges (i.e. $p^j_{1, 2} = p^j_{2, 1}$).
    Let $p^j$ be a \textit{successor path fragment} of $p^i$ if $p^i_2 = p^j_1$.

    We say that $P$ \textit{verifies an Eulerian cycle} if 
    \begin{enumerate}
        \item every edge in $E$ appears in exactly two pairs in $P$; and
        \item there exists a permutation over pairs $\sigma: [N] \to [N]$ such that each $p^{\sigma(j+1)}$ is a successor of $p^{\sigma(j)}$ (and $p^{\sigma(0)}$ is a successor of $p^{\sigma(N)}$).
    \end{enumerate} 

    We treat Eulerian cycle detection as a sequential task on adjacency-node tokenization inputs by setting the $i$th embedding to $\phi(v_i, P_i)$, where $P_i$ encodes all pairs incident to node $v_i$, i.e., \[P_i = \{p \in P: p_{1, 2} = p_{2, 1} = v_i\}.\]
    Now, we prove that---conditional on the hardness of distinguishing one-cycle and two-cycle graphs---no transformer can solve the Eulerian cycle verification problem of degree-$n$ multi-graphs without sufficient width or depth.

    \thmeulerian*
    \begin{proof}
    Consider some transformer $T$ with depth $L$ and embedding dimension $m$ that solves the Eulerian cycle verification problem for any \textit{directed} multi-graph with $n$ nodes and at most $n^2$ edges. 
    We use this to construct a transformer $\mathfrak{T}$ with depth $L + O(1)$ and embedding dimension $O(m + n^{1.1})$ that distinguishes between an \textit{undirected}\footnote{
    The $1$ vs. $2$ cycle conjecture applies only to undirected graphs.
    } cycle graph of size $N$ and two cycles of size $\frac{N}2$ for $N = \frac{n^2}2$.
    The claim of the theorem follows as an immediate consequence of the $1$ vs. $2$ cycle conjecture (as stated in Conjecture 13 of \citet{sanford2024understanding}).

    We prove that a transformer with $O(1)$ layers and embedding dimension $O(n)$ can convert an $N$-node cycle graph instance $\mathfrak{G} = (\mathfrak{V}, \mathfrak{E})$ into a multi-graph $G = (V, E)$ with paths $P$ such that $\mathfrak{G}$ is a single cycle if and only if $P$ represents an Eulerian path on $G$.
    We first define the transformation and then show that it can be implemented by a small transformer.

    \begin{itemize}
        \item Assume without loss of generality that $\mathfrak{V} = [N]$ and $V = [n]$. 
        Let $\phi_n(i) = i \pmod {\frac{n}2}$ be a many-to-one mapping from vertices in $\mathfrak{V}$ to half of the vertices in $V$.

        \item For each undirected edge $\mathfrak{e}_i = \{\mathfrak{v}_{1}, \mathfrak{v}_{2}\} \in \mathfrak{E}$, we add two directed edges to $E$: \[e_i = (\phi_n(\mathfrak{v}_{1}), \phi_n(\mathfrak{v}_{2}), i), \ \text{and} \ e_{-i} = (\phi_n(\mathfrak{v}_{2}), \phi_n(\mathfrak{v}_{1}), -i).\]
        For an arbitrary \textit{turnaround edge} edge $\mathfrak{e}_{i} = \mathfrak{e}^* \in \mathfrak{E}$, we replace $\mathfrak{e}_{i^*}, \mathfrak{e}_{-i^*}$ with two self edges: \[e_{i} = (\phi_n(\mathfrak{v}_{2}), \phi_n(\mathfrak{v}_{2}), i), \ \text{and} \ e_{-i} = (\phi_n(\mathfrak{v}_{1}), \phi_n(\mathfrak{v}_{1}), -i).\]
        \item 
        For edge $\mathfrak{e}_i\in \mathfrak{E}$ as above with unique neighbors $\mathfrak{e}_j = \{\mathfrak{v}_{0}, \mathfrak{v}_{1}\}$ and $\mathfrak{e}_k = \{\mathfrak{v}_{2}, \mathfrak{v}_{3}\}$, we add paths $p^i = (e_i, e_{ak})$ and $p^{-i} = (e_{-i}, e_{bj})$, where $a$ and $b$ are chosen such that $e_{ak}$ and $e_{bj}$ succeed $e_i$ and $e_{-i}$ respectively.

    \end{itemize}

    We remark on a few properties of the constructed graph $G$, which satisfies $|V| = n$, and $|E| = |P|= n^2$.
    Any two adjacent edges in $\mathfrak{G}$ create two ``successor relationships'' between pairs of path segments in $P$.
    Then, if a cyclic subgraph of $\mathfrak{G}$ \textit{does not} contain $\mathfrak{e}^*$, the path segments in $P$ produced by the edges in the subgraph comprise two disjoint directed cycle paths.
    On the contrary, if the subgraph contains $\mathfrak{e}^*$, then its path segments comprise a single cycle path.

    Therefore, if $\mathfrak{G}$ is contains a single cycle of length $N$, then $P$ verifies an Eulerian cycle in $G$ that includes all $n^2$ edges
    Otherwise, if $\mathfrak{G}$ has two cycles of length $2N$, then $P$ represents \textit{three} cyclic paths, one of length $\frac{n^2}2$ and two of length $\frac{n^2}4$.
    Hence, there is a one-to-one correspondence between the $1$ vs. $2$ cycle detection problem on $\mathfrak{G}$ and the Eulerian cycle verification problem on $G$ and $P$.

    We conclude by outlining the construction of transformer $\mathfrak{T}$ that solves the cycle distinction problem.
    This can be implemented using elementary constructions or the existing equivalence between transformers and MPC. 
    \begin{itemize}
        \item $\mathfrak{T}$ takes as input a stream of edges, $\mathfrak{e}_1, \dots \mathfrak{e}_N \in \mathfrak{E}$, in no particular order. 
        These are expressed in the \textit{edge tokenization}, which means that the $i$th input to the transformer is $(\mathfrak{e}_{i, 1}, \mathfrak{e}_{i, 2}, i)$.
        We arbitrarily denote $\mathfrak{e}_1 = \mathfrak{e}^*$ with its positional embedding.
        \item In the first attention layer, $\mathfrak{T}$ retrieves the two adjacent edges of each edge embedding. 
        
        It does so with two attention heads. The first encodes $\mathfrak{e}_{i, 1}$ as a query vector (which is selected to be nearly orthogonal to those of each of the other $N$ edge embeddings), $\mathfrak{e}_{i, 1} + \mathfrak{e}_{i, 2}$ as a key vector, and $(\mathfrak{e}_{i, 1}, \mathfrak{e}_{i, 2}, i)$ as the value vector. The second does the same with $\mathfrak{e}_{i, 2}$.
        $O(\log n)$ embedding dimension suffices for this association.
        
        Then, the $i$th output of this layer is processed by an MLP that computes $p^i$ and $p^{-i}$.
        \item The second attention layer collects all path tokens incident to node $j \in [n]$ (i.e., every $p^i \in P_j$) in the $j$th embedding.
        This can be treated as a single communication operation in the Massively Parallel Computation (MPC) model of \citet{mpc}, where $N$ machines each send $O(\log n)$ bits of information to $n$ machines, where each machine receives at most $O(n \log n)$ bits.
        Due to Theorem~1 of \citet{sanford2024understanding}, this attention layer can complete the routing task with embedding dimension $m = O(n^{1.1})$.
        \item Now, the $j$th element computes $\phi(v_j, P_j)$ and passes the embedding as input to $T$.
        \item If $T$ verifies an Eulerian cycle, let $\mathfrak{T}$ output that $\mathfrak{G}$ is a single-cycle graph.
    \end{itemize}

    Therefore, the existence of a transformer $T$ with depth $o(\log n)$ or width $O(N^{2-\epsilon})$ that solves the Eulerian cycle verification problem contradicts the $1$ vs. $2$ cycle conjecture.
    This completes the proof.
    \end{proof}

\section{Additional Experiments}\label{sec:more_experiments}

 Here, we provide additional results of the experiment described in \Cref{sec:trade_off_exps}. The results for the 4-Cycle and Triangle Count with 50 and 100 nodes, as well as the connectivity task with 50 nodes, are presented in Figures ~\ref{figure:width_vs_depth_connecivity50}, \ref{figure:width_vs_depth_4cycle100}, 
 \ref{figure:width_vs_depth_4cycle50}, \ref{figure:width_vs_depth_cycle50}, \ref{figure:width_vs_depth_cycle100}.
 These experiments present the same trend discussed in \Cref{sec:trade_off_exps}, where the training loss and test accuracy and loss are similar, while training and inference times are drastically better for shallow-wide networks. 

In \Cref{fig:phase_4cycle_appen} we present the critical width chart as described in \Cref{sec:trade_off_exps}, for the Triangle Count task. 
 \begin{figure*}[h]
    \centering
      \subfigure[]{\label{figure:50_connectivity_train_infer_time} \includegraphics[width=0.35\textwidth]{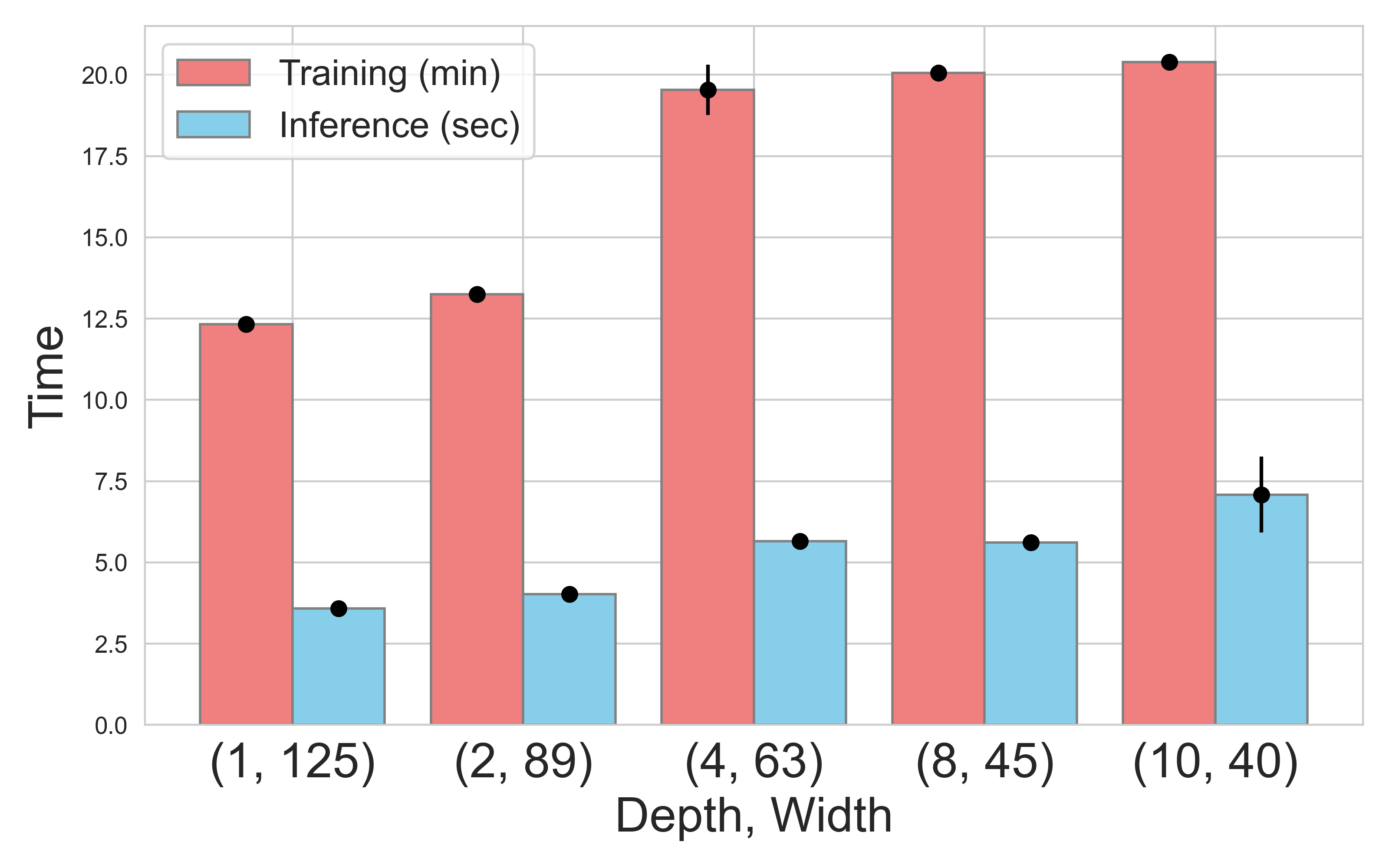}}\quad
      \subfigure[]{\label{figure:50_connectivity_train_loss} \includegraphics[width=0.29\textwidth]{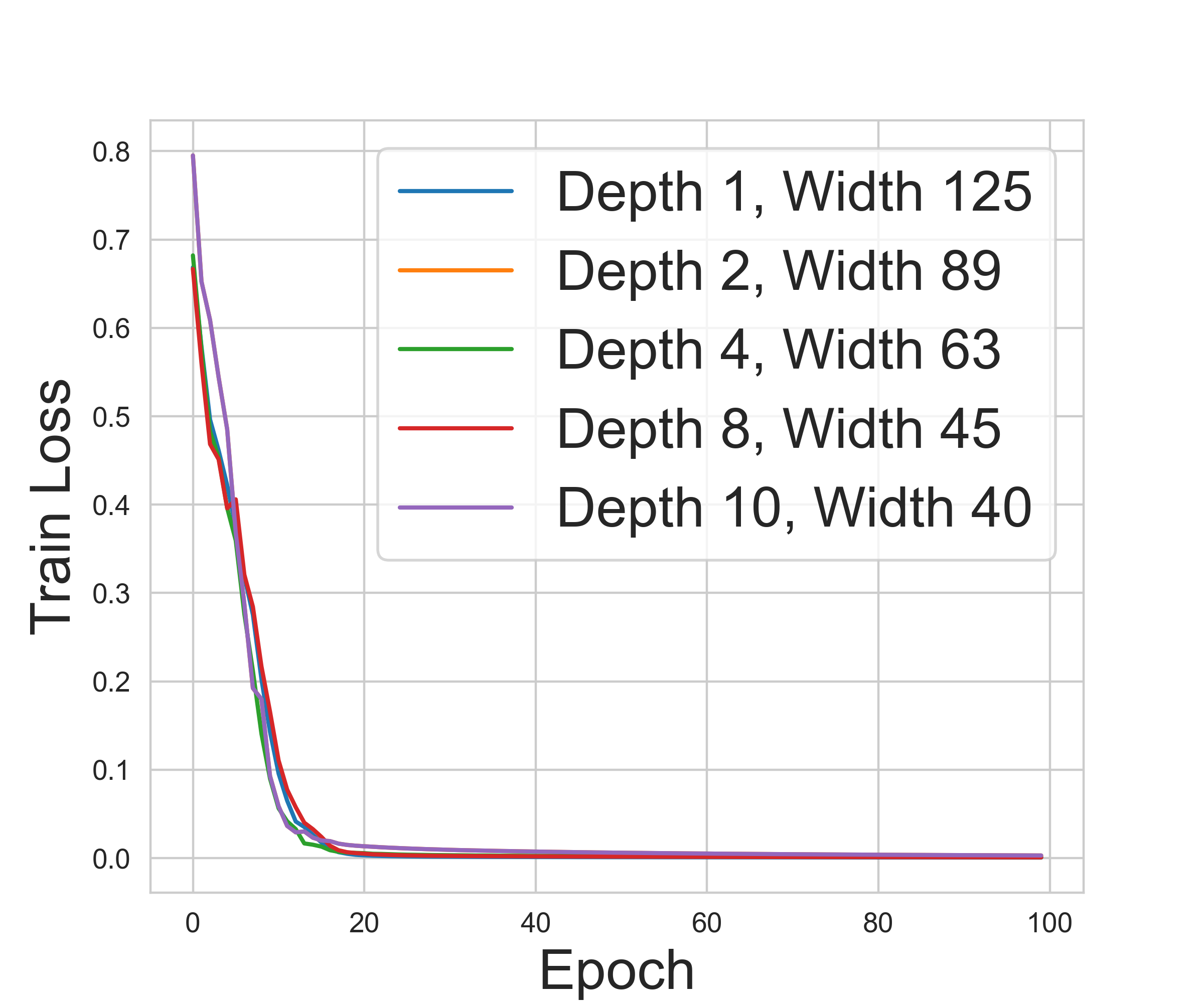}}\quad
       \subfigure[]{\label{figure:50_connectivity_test_acc}
       \includegraphics[width=0.29\textwidth]{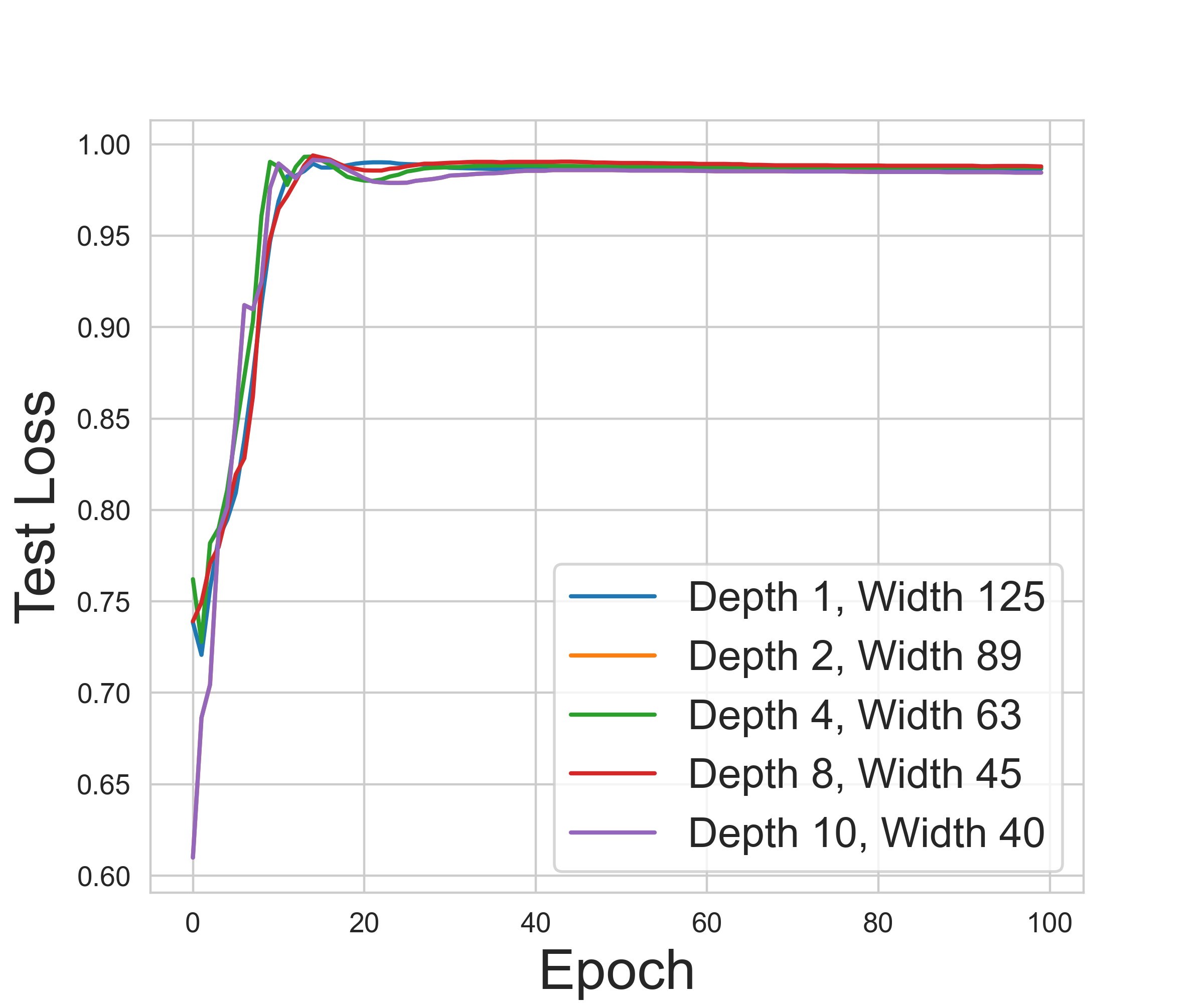}}

    \caption{Training and inference times (a),  training loss curves (b), and accuracy curves (c) for the 4-Cycle count task over graphs with $100$ nodes, across transformers with approximately 100k parameters, varying in width and depth. While the loss and accuracies remain consistent, shallow and wide transformers demonstrate significantly faster training and inference times.}
    \label{figure:width_vs_depth_connecivity50}
\end{figure*}

 \begin{figure*}[h]
    \centering
      \subfigure[]{\label{figure:100_4cycle_train_infer_time} \includegraphics[width=0.35\textwidth]{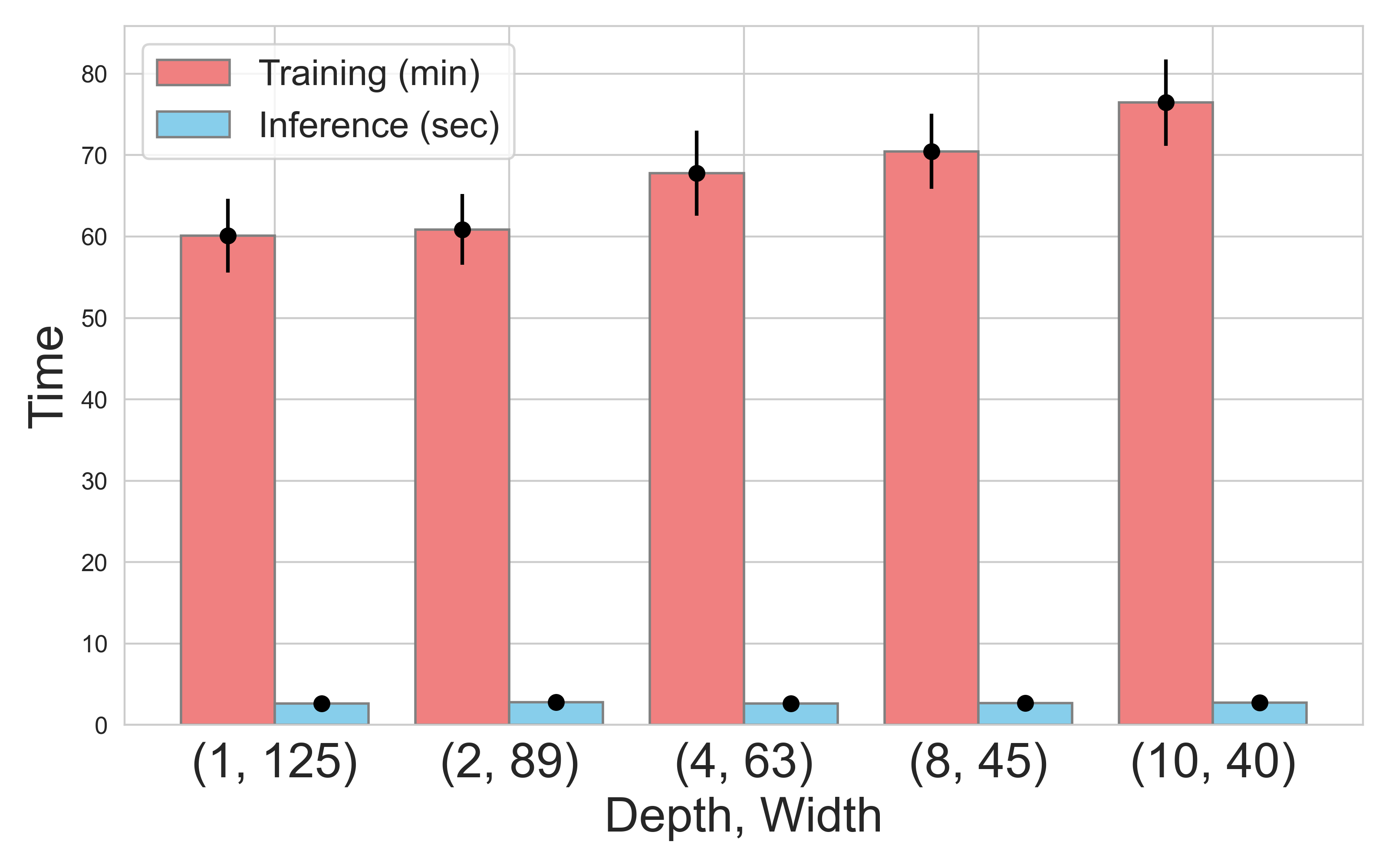}}\quad
      \subfigure[]{\label{figure:100_4cycle_train_loss} \includegraphics[width=0.29\textwidth]{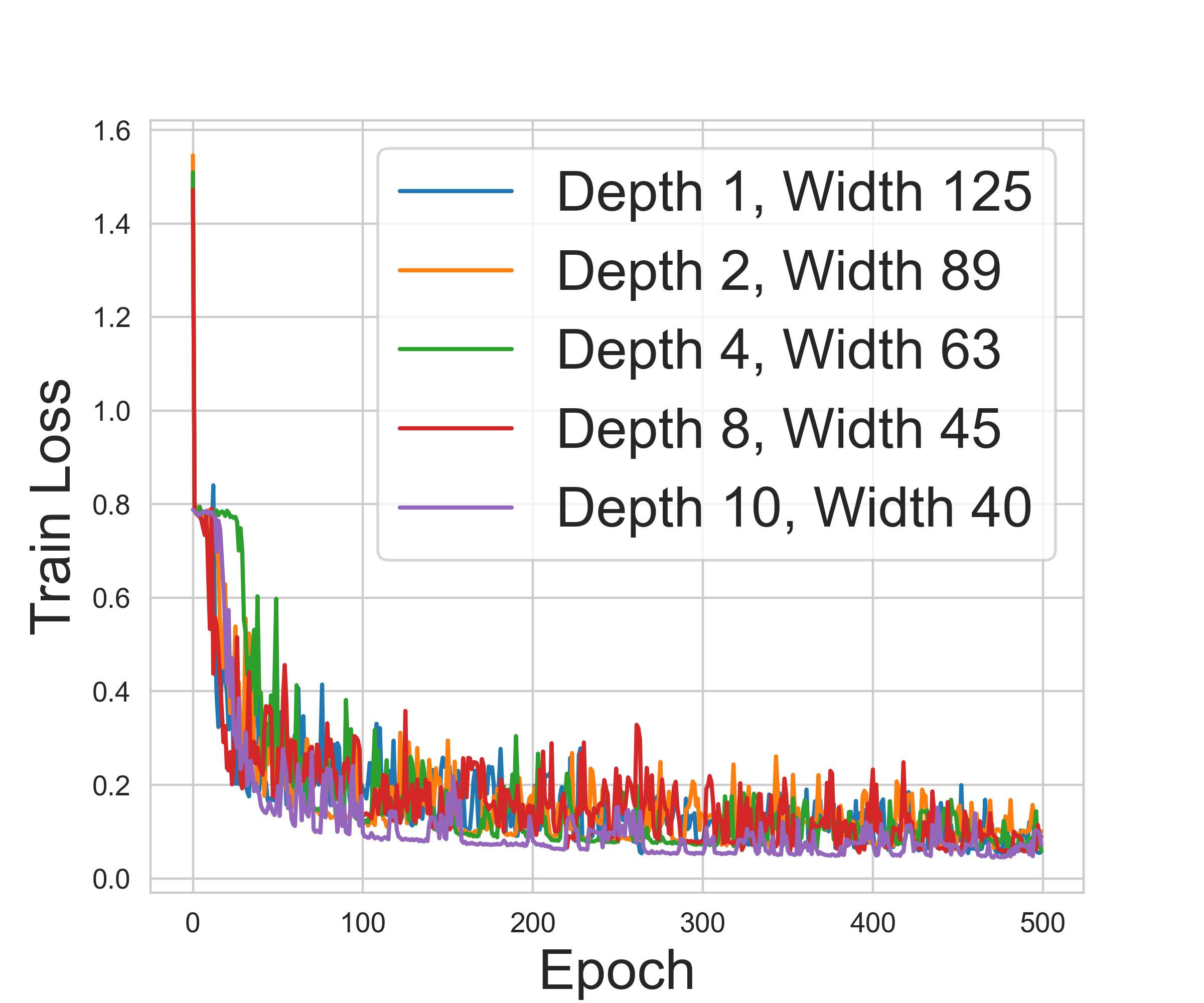}}\quad
       \subfigure[]{\label{figure:100_4cycle_test_acc}
       \includegraphics[width=0.29\textwidth]{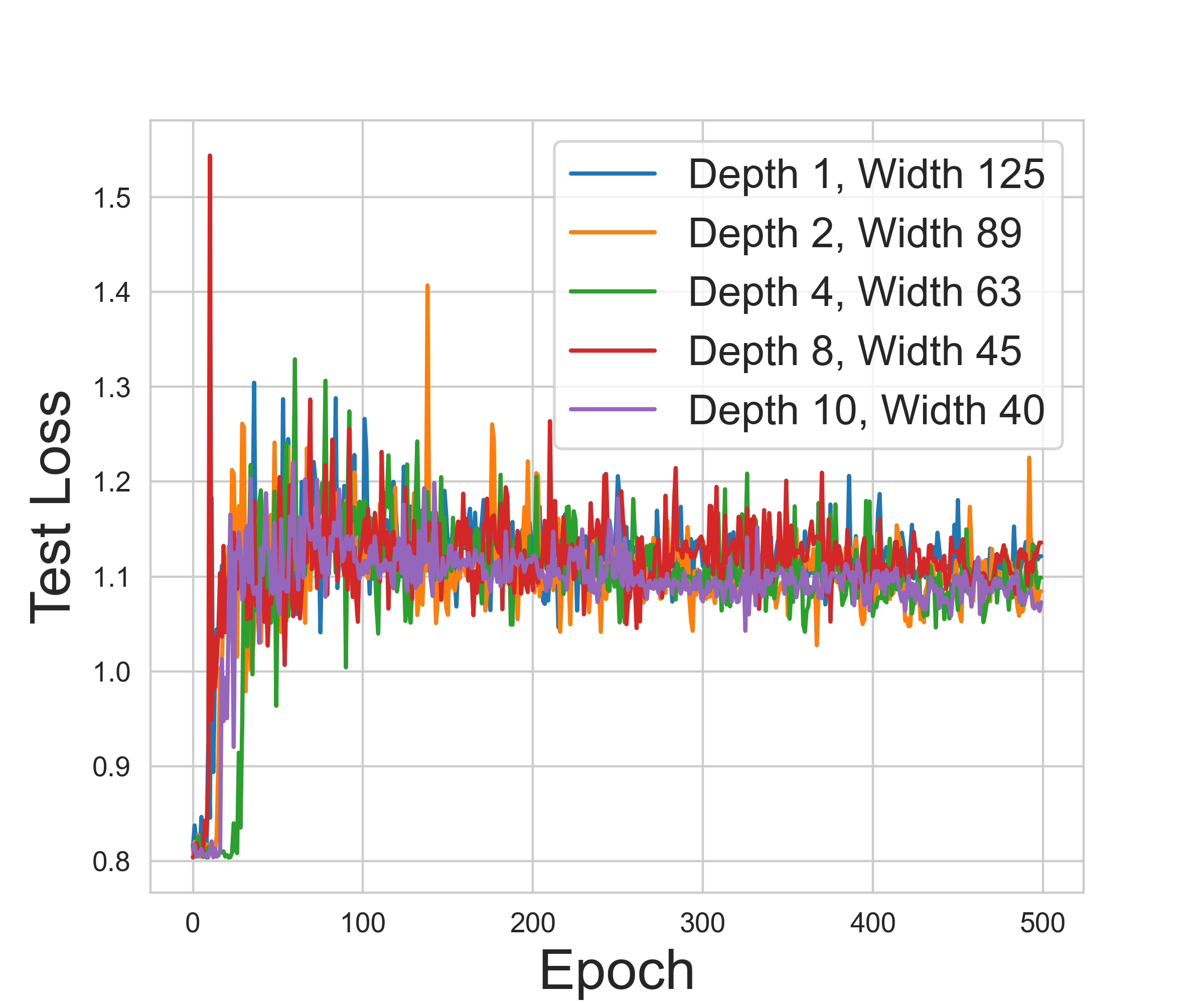}}

    \caption{Training and inference times (a),  training loss curves (b), and accuracy curves (c) for the 4-Cycle count task over graphs with $100$ nodes, across transformers with approximately 100k parameters, varying in width and depth. While the loss and accuracies remain consistent, shallow and wide transformers demonstrate significantly faster training and inference times.}
    \label{figure:width_vs_depth_4cycle100}
\end{figure*}

 \begin{figure*}[h]
    \centering
      \subfigure[]{\label{figure:50_4cycle_train_infer_time} \includegraphics[width=0.35\textwidth]{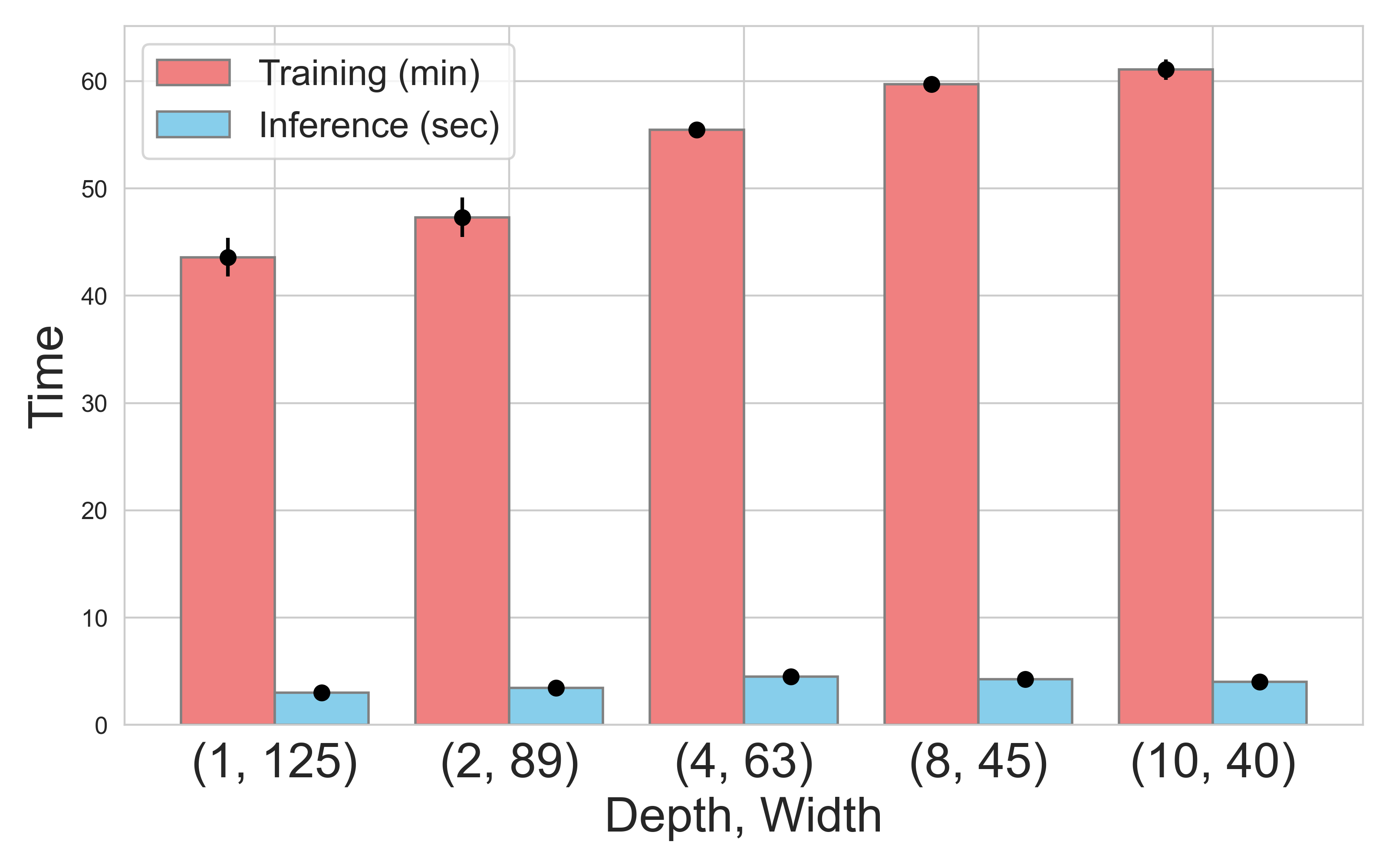}}\quad
      \subfigure[]{\label{figure:50_4cycle_train_loss} \includegraphics[width=0.29\textwidth]{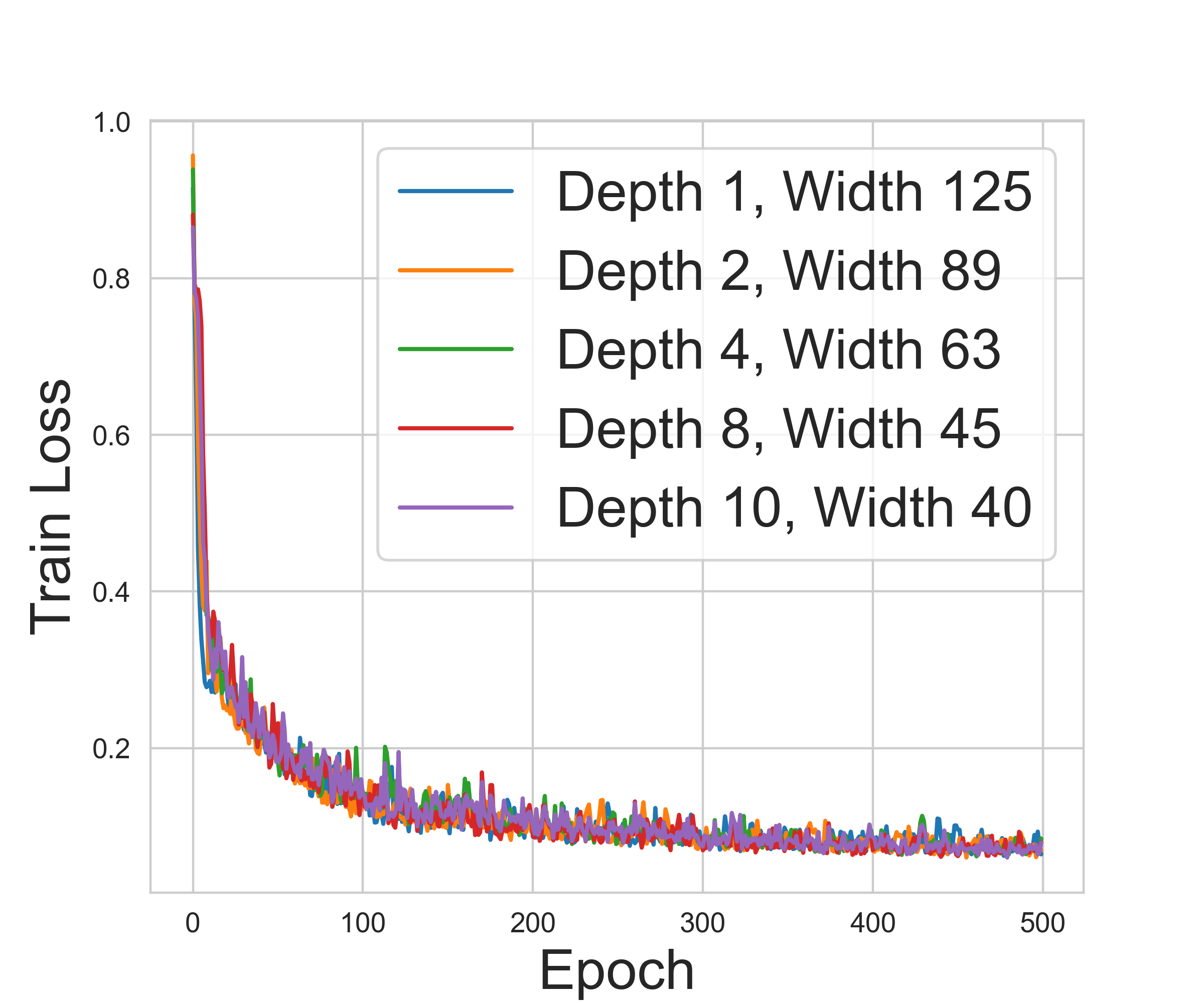}}\quad
       \subfigure[]{\label{figure:50_4cycle_test_acc}
       \includegraphics[width=0.29\textwidth]{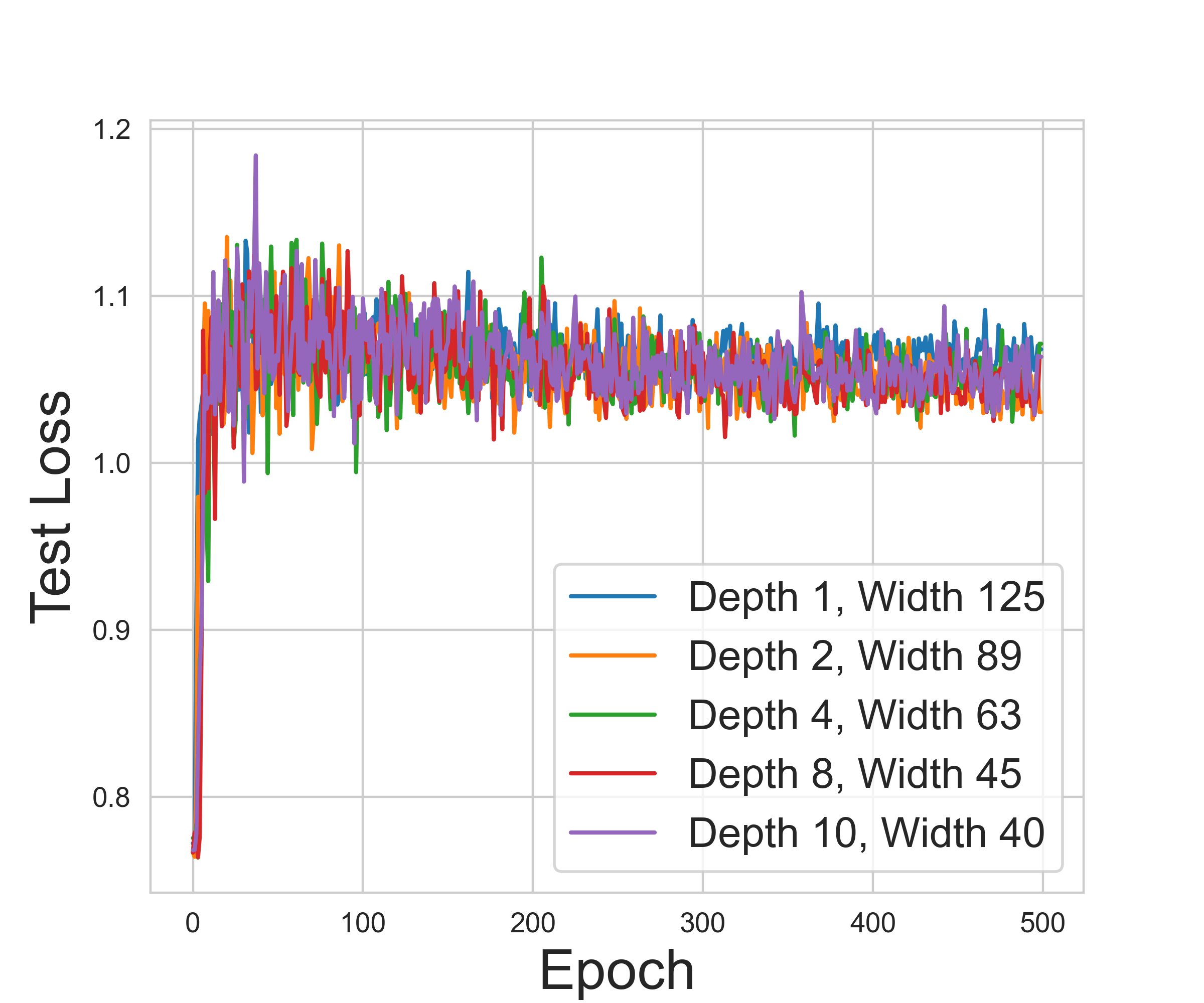}}

    \caption{Training and inference times (a),  training loss curves (b), and accuracy curves (c) for the 4-Cycle count task over graphs with $50$ nodes, across transformers with approximately 100k parameters, varying in width and depth. While the loss and accuracies remain consistent, shallow and wide transformers demonstrate significantly faster training and inference times.}
    \label{figure:width_vs_depth_4cycle50}
\end{figure*}

 \begin{figure*}[h]
    \centering
      \subfigure[]{\label{figure:50_cycle_train_infer_time} \includegraphics[width=0.35\textwidth]{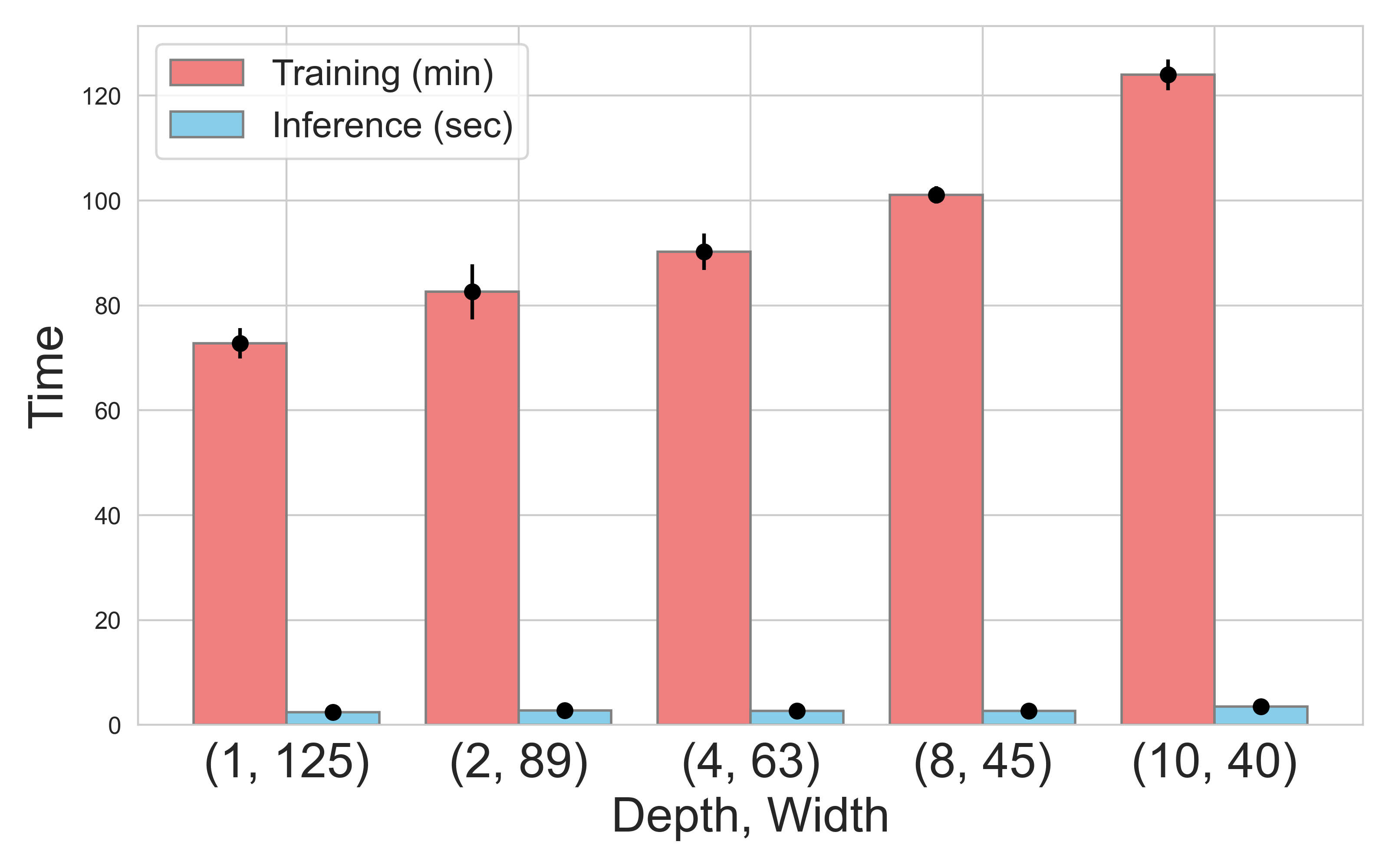}}\quad
      \subfigure[]{\label{figure:50_cycle_train_loss} \includegraphics[width=0.29\textwidth]{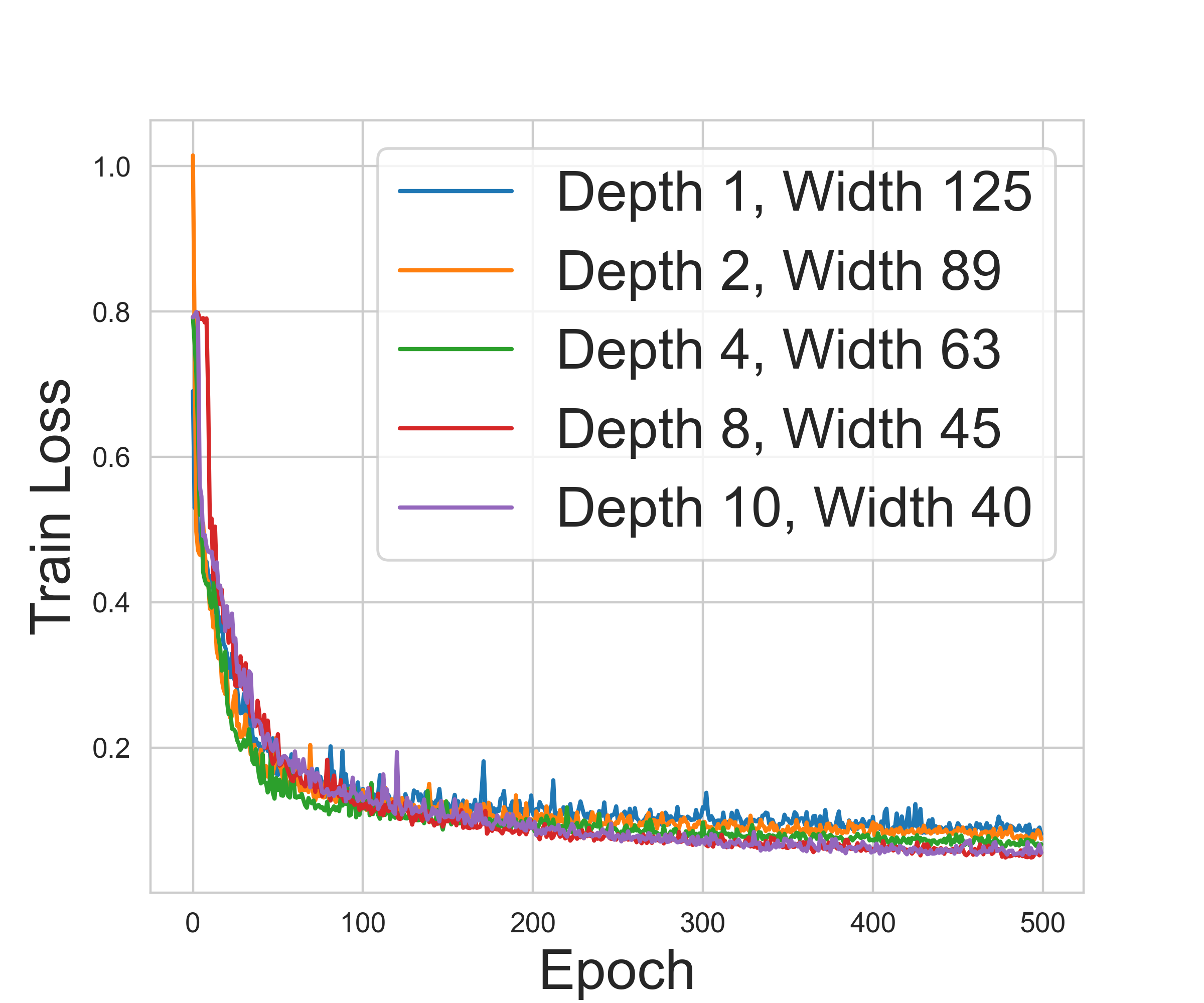}}\quad
       \subfigure[]{\label{figure:50_cycle_test_acc}
       \includegraphics[width=0.29\textwidth]{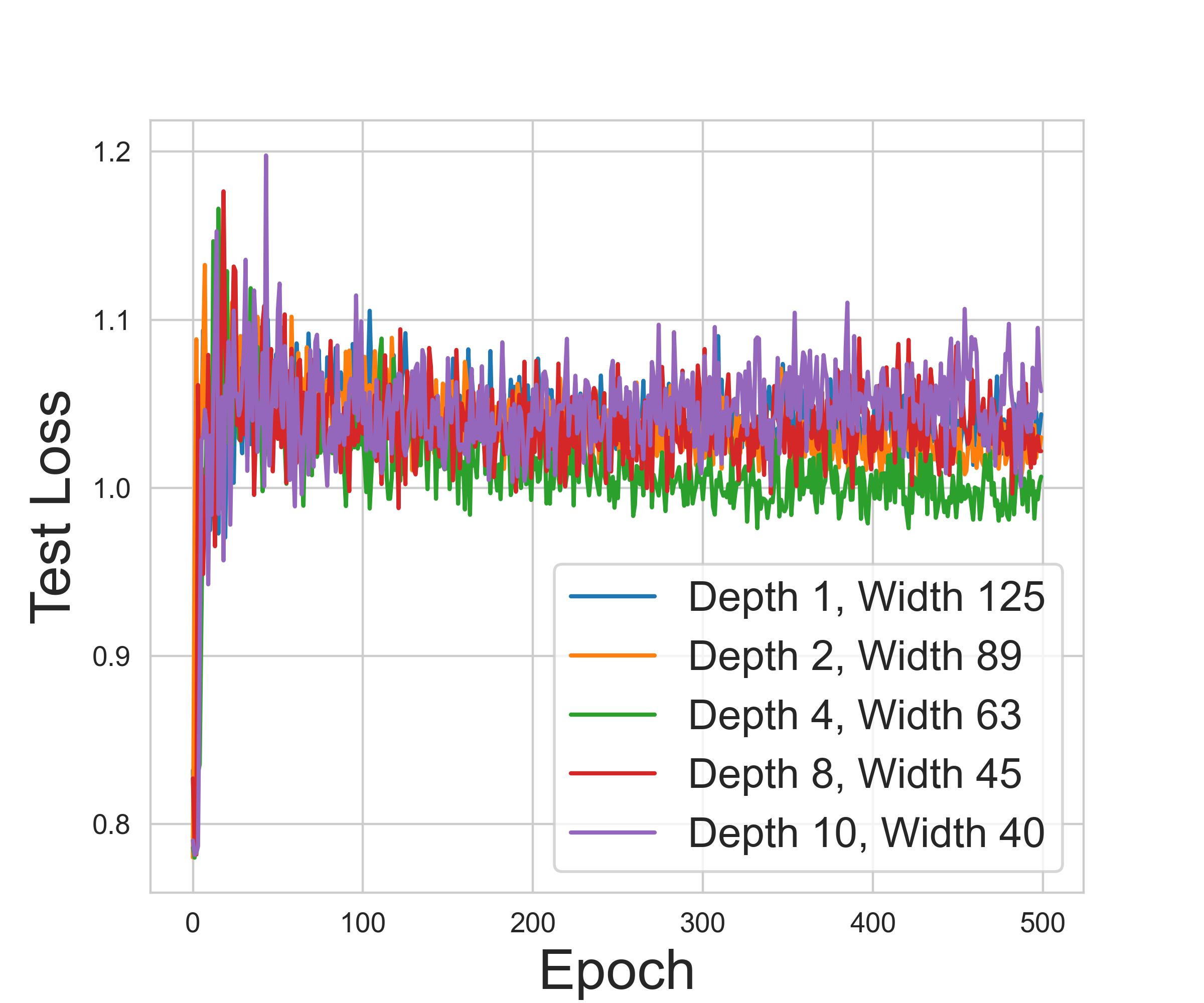}}

    \caption{Training and inference times (a),  training loss curves (b), and accuracy curves (c) for the Triangle Count task over graphs with $50$ nodes, across transformers with approximately 100k parameters, varying in width and depth. While the loss and accuracies remain consistent, shallow and wide transformers demonstrate significantly faster training and inference times.}
    \label{figure:width_vs_depth_cycle50}
\end{figure*}

 \begin{figure*}[h]
    \centering
      \subfigure[]{\label{figure:100_cycle_train_infer_time} \includegraphics[width=0.35\textwidth]{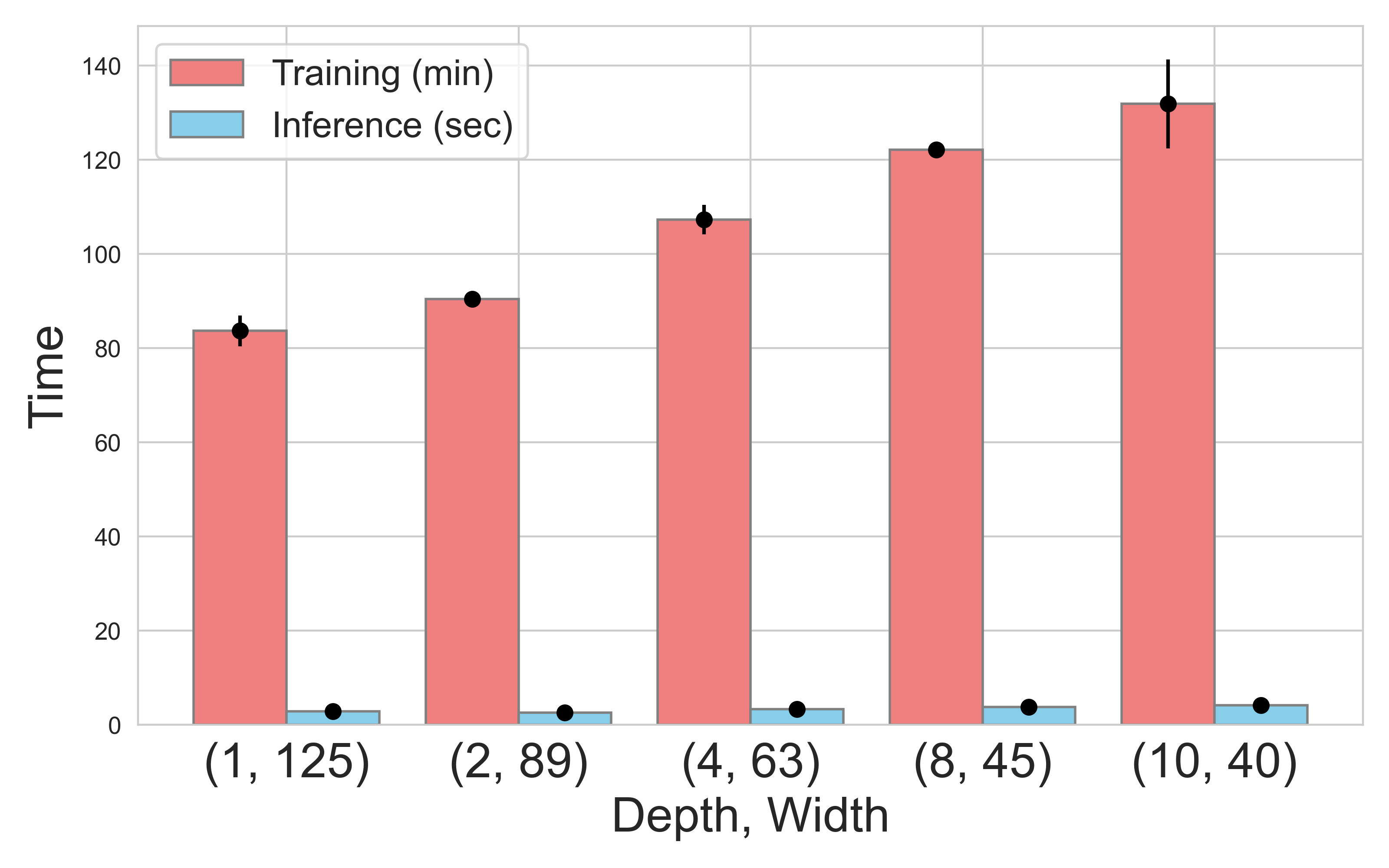}}\quad
      \subfigure[]{\label{figure:100_cycle_train_loss} \includegraphics[width=0.29\textwidth]{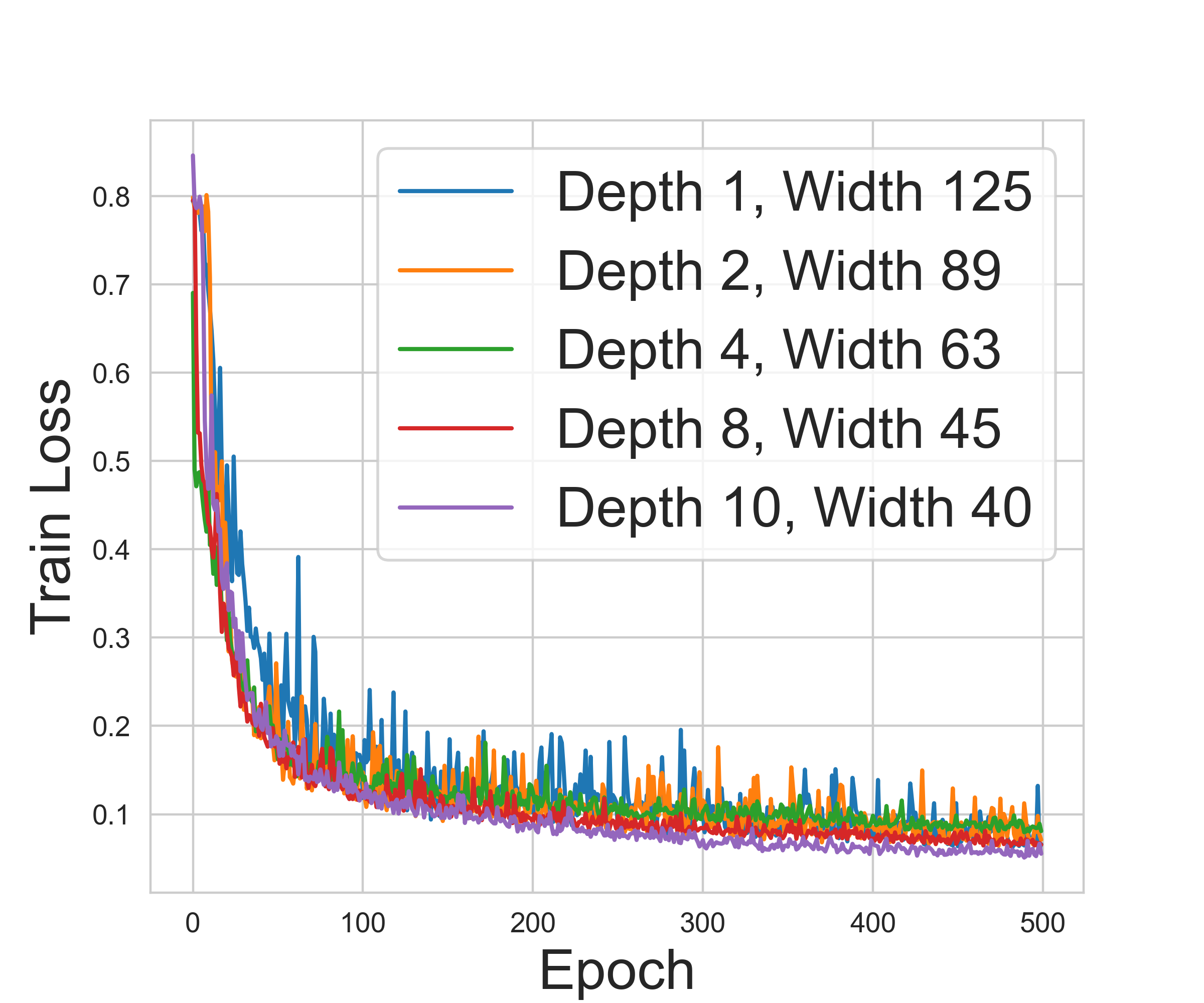}}\quad
       \subfigure[]{\label{figure:100_cycle_test_acc}
       \includegraphics[width=0.29\textwidth]{ICML_2025/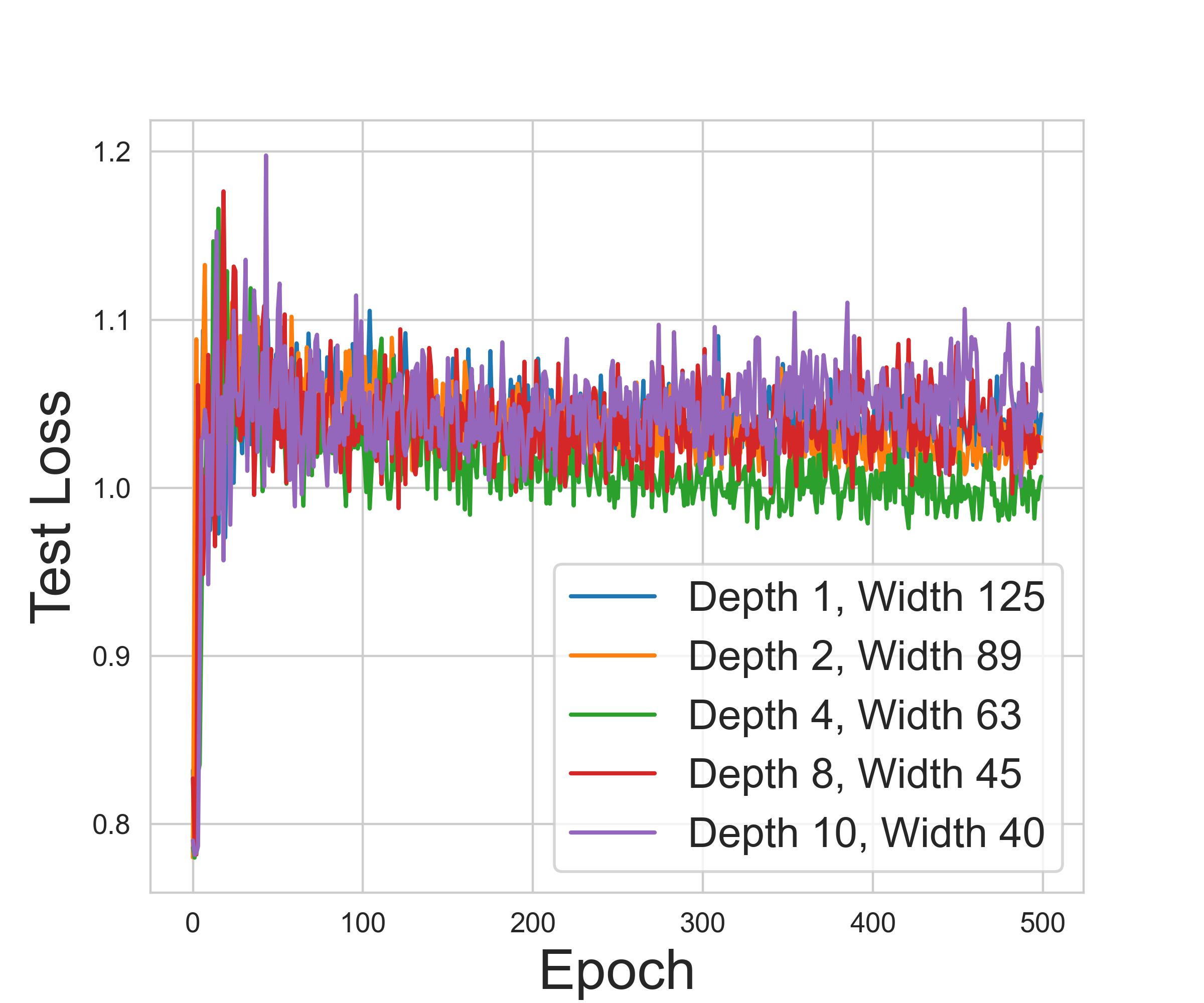}}

    \caption{Training and inference times (a),  training loss curves (b), and accuracy curves (c) for the Triangle Count task over graphs with $100$ nodes, across transformers with approximately 100k parameters, varying in width and depth. While the loss and accuracies remain consistent, shallow and wide transformers demonstrate significantly faster training and inference times.}
    \label{figure:width_vs_depth_cycle100}
\end{figure*}

\begin{figure}[t]
    \centering
      \subfigure[]{\label{figure:training_time_appen} \includegraphics[width=0.4\textwidth]{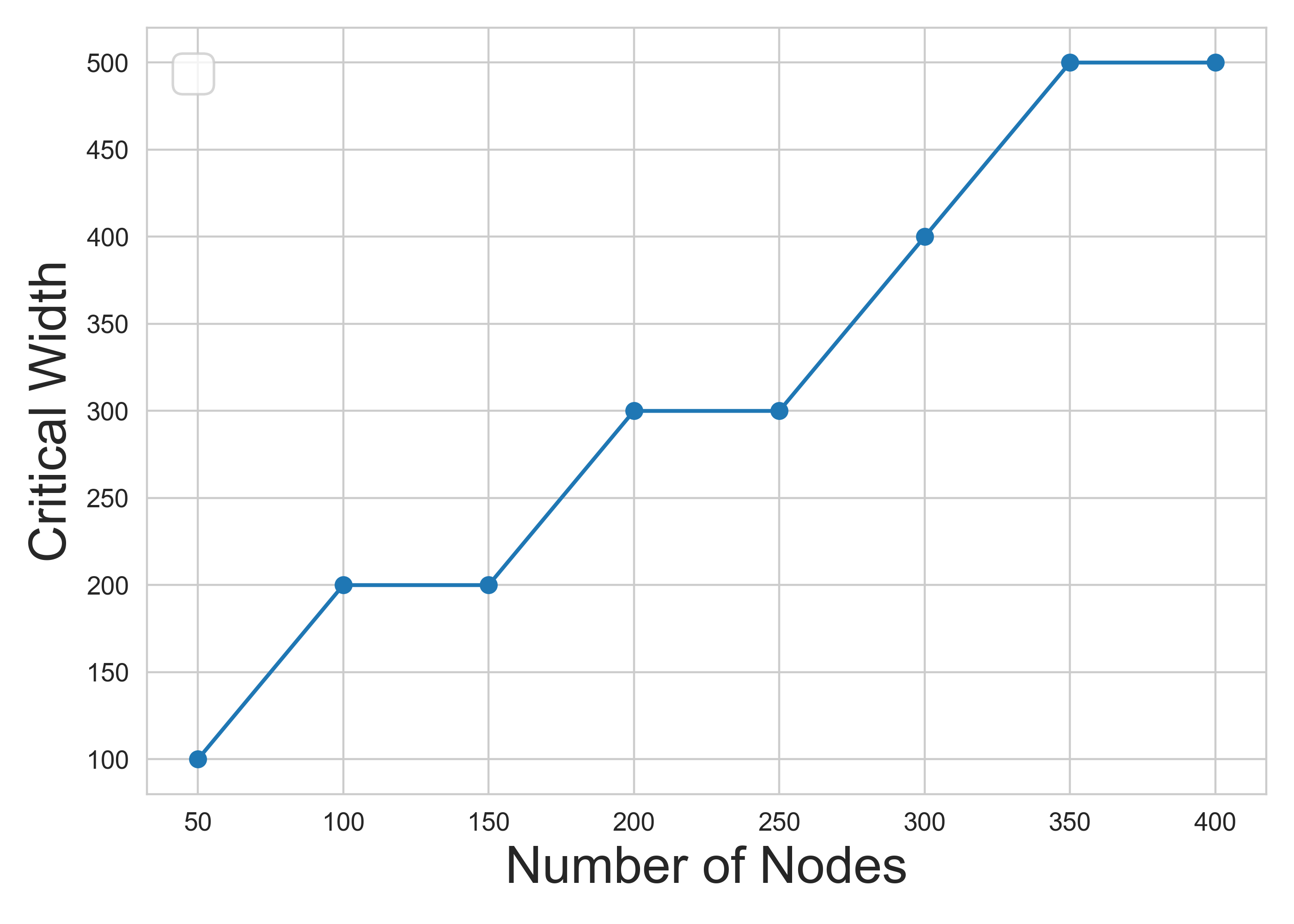}}
    \caption{Critical width evaluation for the Triangle count Task. The points indicate the critical width at which the model fails to fit the data.}
    \label{fig:phase_4cycle_appen}
\end{figure}

\section{Experimental Details}\label{sec:experimental_details}

\paragraph{Dataset information}

In \Cref{sec:trade_off_exps} we used three synthetic datasets, including connectivity, Triangle Count and 4-Cycle Count.
The Triangle Count and 4-Cycle Count were presented in \cite{chen2020graphneuralnetworkscount}.
Each of these datasets contains $5000$ graphs, and the number of nodes is set according to the configuration, as we tested graphs with increasing numbers of nodes.
The counting datasets are generated using Erdős–Rényi graphs with an edge probability of  $0.1$.

For the connectivity dataset, to avoid a correlation between connectivity and edge probability as exists in Erdős–Rényi graphs, we generated the datasets using diverse graph distributions, each with multiple distribution parameters.
The dataset consists of graphs that are either connected or disconnected, generated using different random graph models to ensure diversity. The Erdős–Rényi model \( G(n, p) \) is used, where each edge is included independently with probability \( p \), and connected graphs are ensured by choosing \( p \geq \frac{\ln n}{n} \), while disconnected graphs use a lower \( p \). Random Geometric Graphs (RGGs) are also employed, where nodes are placed randomly in a unit space, and edges are formed if the Euclidean distance is below a certain threshold \( r \); connected graphs use a sufficiently high \( r \), whereas disconnected graphs are created with a lower \( r \). Additionally, Scale-Free networks generated using the Barabási–Albert model are included, where new nodes attach preferentially to high-degree nodes, ensuring connectivity when enough edges per node (\( m \)) are allowed, while disconnected graphs are produced by limiting interconnections between components. Lastly, the Stochastic Block Model (SBM) is used to generate community-structured graphs, where intra-community connection probabilities (\( p_{\text{intra}} \)) are set high for connected graphs, and inter-community probabilities (\( p_{\text{inter}} \)) are set to zero to ensure disconnected graphs. Each type of graph is sampled in equal proportions, shuffled, and split into training, validation, and test sets to maintain class balance.

In \Cref{sec:tokenization_exps} we used three molecular property prediction datasets from Open Graph Benchmark (OGB) \cite{hu2020open}. In ogbg-molhiv, the task is to predict whether a molecule inhibits HIV replication, a binary classification task based on molecular graphs with atom-level features and bond-level edge features. ogbg-bbbp involves predicting blood-brain barrier permeability, a crucial property for drug development, while ogbg-bace focuses on predicting the ability of a molecule to bind to the BACE1 enzyme, associated with Alzheimer’s disease. Dataset statistics are presented in \Cref{tab:ogb_summary}.

\paragraph{Hyper-Parameters}
For all experiments, we use a fixed drouput rate of 0.1 and Relu activations.
In \Cref{sec:trade_off_exps} we tuned the learning rate in $\{10^{-4}, 5\cdot 10^{-5}\}$, batch size in $\{32, 64\}$. 
In \Cref{sec:tokenization_exps} we tuned the learning rate in $\{10^{-3}, 5\cdot 10^{-3}\}$, number of layers in $\{3, 5, 6, 10, 12\}$, hidden dimensions in $\{32, 64\}$. We used batch size of size 64.

\paragraph{Edge List tokenization}
Tokenization of the graph as a list of edges is done as follows. Assume a graph over $n$ nodes. Each node is represented by a one-hot encoding vector $b_i\in \mathbb{R}^n$ concatenated with the node input features $x_i\in \mathbb{R}^d$. Then, each edge is represented by concatenating its node representations. Each edge representation is fed as an independent token to the transformer.
As graphs vary in size, we pad each node representation with zeros to match the maximal graph size in the dataset.

\paragraph{Adjacency Rows tokenization}
Tokenization of the graph as an adjacency rows is done as follows. Assume a graph over $n$ nodes and adjacency matrix $A$. Each node is associated with a vector of features $x_i\in \mathbb{R}^d$. We concatenate to each row of $A$ the node's corresponding feature vector. This results in a vector of size $n+d$ for each node. 
As graphs vary in size, we pad each node vector with zeros to match the maximal graph size in the dataset.
Each such vector is used as an input token to the transformer

\paragraph{Laplacian Eigenvectors tokenization}
Tokenization of the graph as Laplacian eigenvectors is done as follows. Assume a graph over $n$ nodes and graph Laplacian $L$. Each node is associated with a vector of features $x_i\in \mathbb{R}^d$. We compute the eigenvector decomposition of the graph's Laplacian. Then, each node is associated an eigenvector and an eigenvalue. We concatenate these two for each node, resulting in a vector of size $n+1$.
We then concatenate to each of these spectral vectors the node's corresponding feature vector $x_i$. This results in a vector of size $n+d + 1$ for each node. 
As graphs vary in size, we pad each node vector with zeros to match the maximal graph size in the dataset.
Each such vector is used as an input token to the transformer

\begin{table}[ht]
    \centering
    \caption{Summary statistics of datasets used in \Cref{sec:experiments}.}
    \label{tab:ogb_summary}
    \vskip 0.15in
    \begin{tabular}{lccccc} 

\toprule
Dataset  & \# Graphs & Avg \# Nodes & Avg \# Edges & \# Node Features &\# Classes  \\ 
\midrule
ogbg-molhiv & 41,127	& 25.5 & 27.5 & 9&2\\

ogbg-molbace &1,513 &34.1 &36.9 & 9&2 \\
ogbg-molbbbp & 2,039& 24.1& 26.0& 9& 2\\ 

\bottomrule
    \end{tabular}
\end{table}

\paragraph{Small and medium size graph commonly used datasets}
In the main paper we mentioned that many commonly used graph datasets contain graphs of relatively small size. Therefore in many real-world cases, the embedding dimension of the model is larger than the size of the graph. Here we list multiple such datasets from the Open Graph Benchmark (OGB) \cite{hu2020open} as well as TUdatasets \cite{morris2020tudatasetcollectionbenchmarkdatasets}.
The datasets, including their statistics, are listed in \Cref{tab:small_graph_data_modified}.
\begin{table}[ht]
    \centering
    \caption{Summary of commonly used graph datasets, where the average number of nodes is relatively small}
    \label{tab:small_graph_data_modified}
    \vskip 0.15in
    \begin{tabular}{lccc} 

\toprule
Dataset  & \# Graphs & Avg \# Nodes & Avg \# Edges \\ 
\midrule
ogbg-molhiv & 41,127	& 25.5 & 27.5 \\

ogbg-molbace &1,513 &34.1 &36.9 \\
ogbg-molbbbp & 2,039& 24.1& 26.0\\ 
ogbg-tox21       & 7,831    & 18.6 & 19.3 \\
ogbg-toxcast     & 8,576    & 18.8 & 19.3 \\
ogbg-muv         & 93,087   & 24.2 & 26.3 \\
ogbg-bace        & 1,513    & 34.1 & 36.9 \\
ogbg-bbbp        & 2,039    & 24.1 & 26.0 \\
ogbg-clintox     & 1,477    & 26.2 & 27.9 \\
ogbg-sider       & 1,427    & 33.6 & 35.4 \\
ogbg-esol        & 1,128    & 13.3 & 13.7 \\
ogbg-freesolv    & 642      & 8.7  & 8.4  \\
ogbg-lipo        & 4,200    & 27.0 & 29.5 \\
IMDB-Binary    & 1000     & 19          & 96          \\ 
IMDB-Multi    & 1500     & 13          & 65          \\
Proteins   & 1113     & 39.06       & 72.82       \\ 
NCI1       & 4110     & 29.87       & 32.3        \\ 
Enzymes        & 600      & 32.63          & 62.14        \\

\bottomrule
    \end{tabular}
\end{table}

%%%%%%%%%%%%%%%%%%%%%%%%%%%%%%%%%%%%%%%%%%%%%%%%%%%%%%%%%%%%

\newpage
\section*{NeurIPS Paper Checklist}

\begin{enumerate}

\item {\bf Claims}
    \item[] Question: Do the main claims made in the abstract and introduction accurately reflect the paper's contributions and scope?
    \item[] Answer: \answerYes{} % Replace by \answerYes{}, \answerNo{}, or \answerNA{}.
    \item[] Justification: 
    \item[] Guidelines:
    \begin{itemize}
        \item The answer NA means that the abstract and introduction do not include the claims made in the paper.
        \item The abstract and/or introduction should clearly state the claims made, including the contributions made in the paper and important assumptions and limitations. A No or NA answer to this question will not be perceived well by the reviewers. 
        \item The claims made should match theoretical and experimental results, and reflect how much the results can be expected to generalize to other settings. 
        \item It is fine to include aspirational goals as motivation as long as it is clear that these goals are not attained by the paper. 
    \end{itemize}

\item {\bf Limitations}
    \item[] Question: Does the paper discuss the limitations of the work performed by the authors?
    \item[] Answer: \answerYes{} % Replace by \answerYes{}, \answerNo{}, or \answerNA{}.
    \item[] Justification: 
    \item[] Guidelines:
    \begin{itemize}
        \item The answer NA means that the paper has no limitation while the answer No means that the paper has limitations, but those are not discussed in the paper. 
        \item The authors are encouraged to create a separate "Limitations" section in their paper.
        \item The paper should point out any strong assumptions and how robust the results are to violations of these assumptions (e.g., independence assumptions, noiseless settings, model well-specification, asymptotic approximations only holding locally). The authors should reflect on how these assumptions might be violated in practice and what the implications would be.
        \item The authors should reflect on the scope of the claims made, e.g., if the approach was only tested on a few datasets or with a few runs. In general, empirical results often depend on implicit assumptions, which should be articulated.
        \item The authors should reflect on the factors that influence the performance of the approach. For example, a facial recognition algorithm may perform poorly when image resolution is low or images are taken in low lighting. Or a speech-to-text system might not be used reliably to provide closed captions for online lectures because it fails to handle technical jargon.
        \item The authors should discuss the computational efficiency of the proposed algorithms and how they scale with dataset size.
        \item If applicable, the authors should discuss possible limitations of their approach to address problems of privacy and fairness.
        \item While the authors might fear that complete honesty about limitations might be used by reviewers as grounds for rejection, a worse outcome might be that reviewers discover limitations that aren't acknowledged in the paper. The authors should use their best judgment and recognize that individual actions in favor of transparency play an important role in developing norms that preserve the integrity of the community. Reviewers will be specifically instructed to not penalize honesty concerning limitations.
    \end{itemize}

\item {\bf Theory assumptions and proofs}
    \item[] Question: For each theoretical result, does the paper provide the full set of assumptions and a complete (and correct) proof?
    \item[] Answer: \answerYes{} % Replace by \answerYes{}, \answerNo{}, or \answerNA{}.
    \item[] Justification: 
    \item[] Guidelines:
    \begin{itemize}
        \item The answer NA means that the paper does not include theoretical results. 
        \item All the theorems, formulas, and proofs in the paper should be numbered and cross-referenced.
        \item All assumptions should be clearly stated or referenced in the statement of any theorems.
        \item The proofs can either appear in the main paper or the supplemental material, but if they appear in the supplemental material, the authors are encouraged to provide a short proof sketch to provide intuition. 
        \item Inversely, any informal proof provided in the core of the paper should be complemented by formal proofs provided in appendix or supplemental material.
        \item Theorems and Lemmas that the proof relies upon should be properly referenced. 
    \end{itemize}

    \item {\bf Experimental result reproducibility}
    \item[] Question: Does the paper fully disclose all the information needed to reproduce the main experimental results of the paper to the extent that it affects the main claims and/or conclusions of the paper (regardless of whether the code and data are provided or not)?
    \item[] Answer: \answerYes{} % Replace by \answerYes{}, \answerNo{}, or \answerNA{}.
    \item[] Justification: 
    \item[] Guidelines:
    \begin{itemize}
        \item The answer NA means that the paper does not include experiments.
        \item If the paper includes experiments, a No answer to this question will not be perceived well by the reviewers: Making the paper reproducible is important, regardless of whether the code and data are provided or not.
        \item If the contribution is a dataset and/or model, the authors should describe the steps taken to make their results reproducible or verifiable. 
        \item Depending on the contribution, reproducibility can be accomplished in various ways. For example, if the contribution is a novel architecture, describing the architecture fully might suffice, or if the contribution is a specific model and empirical evaluation, it may be necessary to either make it possible for others to replicate the model with the same dataset, or provide access to the model. In general. releasing code and data is often one good way to accomplish this, but reproducibility can also be provided via detailed instructions for how to replicate the results, access to a hosted model (e.g., in the case of a large language model), releasing of a model checkpoint, or other means that are appropriate to the research performed.
        \item While NeurIPS does not require releasing code, the conference does require all submissions to provide some reasonable avenue for reproducibility, which may depend on the nature of the contribution. For example
        \begin{enumerate}
            \item If the contribution is primarily a new algorithm, the paper should make it clear how to reproduce that algorithm.
            \item If the contribution is primarily a new model architecture, the paper should describe the architecture clearly and fully.
            \item If the contribution is a new model (e.g., a large language model), then there should either be a way to access this model for reproducing the results or a way to reproduce the model (e.g., with an open-source dataset or instructions for how to construct the dataset).
            \item We recognize that reproducibility may be tricky in some cases, in which case authors are welcome to describe the particular way they provide for reproducibility. In the case of closed-source models, it may be that access to the model is limited in some way (e.g., to registered users), but it should be possible for other researchers to have some path to reproducing or verifying the results.
        \end{enumerate}
    \end{itemize}

\item {\bf Open access to data and code}
    \item[] Question: Does the paper provide open access to the data and code, with sufficient instructions to faithfully reproduce the main experimental results, as described in supplemental material?
    \item[] Answer: \answerYes{} % Replace by \answerYes{}, \answerNo{}, or \answerNA{}.
    \item[] Justification: We will provide in the final version a Github project with the code.
    \item[] Guidelines:
    \begin{itemize}
        \item The answer NA means that paper does not include experiments requiring code.
        \item Please see the NeurIPS code and data submission guidelines (\url{https://nips.cc/public/guides/CodeSubmissionPolicy}) for more details.
        \item While we encourage the release of code and data, we understand that this might not be possible, so “No” is an acceptable answer. Papers cannot be rejected simply for not including code, unless this is central to the contribution (e.g., for a new open-source benchmark).
        \item The instructions should contain the exact command and environment needed to run to reproduce the results. See the NeurIPS code and data submission guidelines (\url{https://nips.cc/public/guides/CodeSubmissionPolicy}) for more details.
        \item The authors should provide instructions on data access and preparation, including how to access the raw data, preprocessed data, intermediate data, and generated data, etc.
        \item The authors should provide scripts to reproduce all experimental results for the new proposed method and baselines. If only a subset of experiments are reproducible, they should state which ones are omitted from the script and why.
        \item At submission time, to preserve anonymity, the authors should release anonymized versions (if applicable).
        \item Providing as much information as possible in supplemental material (appended to the paper) is recommended, but including URLs to data and code is permitted.
    \end{itemize}

\item {\bf Experimental setting/details}
    \item[] Question: Does the paper specify all the training and test details (e.g., data splits, hyperparameters, how they were chosen, type of optimizer, etc.) necessary to understand the results?
    \item[] Answer: \answerYes{} % Replace by \answerYes{}, \answerNo{}, or \answerNA{}.
    \item[] Justification: 
    \item[] Guidelines:
    \begin{itemize}
        \item The answer NA means that the paper does not include experiments.
        \item The experimental setting should be presented in the core of the paper to a level of detail that is necessary to appreciate the results and make sense of them.
        \item The full details can be provided either with the code, in appendix, or as supplemental material.
    \end{itemize}

\item {\bf Experiment statistical significance}
    \item[] Question: Does the paper report error bars suitably and correctly defined or other appropriate information about the statistical significance of the experiments?
    \item[] Answer: \answerYes{} % Replace by \answerYes{}, \answerNo{}, or \answerNA{}.
    \item[] Justification:    
    \item[] Guidelines:
    \begin{itemize}
        \item The answer NA means that the paper does not include experiments.
        \item The authors should answer "Yes" if the results are accompanied by error bars, confidence intervals, or statistical significance tests, at least for the experiments that support the main claims of the paper.
        \item The factors of variability that the error bars are capturing should be clearly stated (for example, train/test split, initialization, random drawing of some parameter, or overall run with given experimental conditions).
        \item The method for calculating the error bars should be explained (closed form formula, call to a library function, bootstrap, etc.)
        \item The assumptions made should be given (e.g., Normally distributed errors).
        \item It should be clear whether the error bar is the standard deviation or the standard error of the mean.
        \item It is OK to report 1-sigma error bars, but one should state it. The authors should preferably report a 2-sigma error bar than state that they have a 96\% CI, if the hypothesis of Normality of errors is not verified.
        \item For asymmetric distributions, the authors should be careful not to show in tables or figures symmetric error bars that would yield results that are out of range (e.g. negative error rates).
        \item If error bars are reported in tables or plots, The authors should explain in the text how they were calculated and reference the corresponding figures or tables in the text.
    \end{itemize}

\item {\bf Experiments compute resources}
    \item[] Question: For each experiment, does the paper provide sufficient information on the computer resources (type of compute workers, memory, time of execution) needed to reproduce the experiments?
    \item[] Answer: \answerYes{} % Replace by \answerYes{}, \answerNo{}, or \answerNA{}.
    \item[] Justification: 
    \item[] Guidelines:
    \begin{itemize}
        \item The answer NA means that the paper does not include experiments.
        \item The paper should indicate the type of compute workers CPU or GPU, internal cluster, or cloud provider, including relevant memory and storage.
        \item The paper should provide the amount of compute required for each of the individual experimental runs as well as estimate the total compute. 
        \item The paper should disclose whether the full research project required more compute than the experiments reported in the paper (e.g., preliminary or failed experiments that didn't make it into the paper). 
    \end{itemize}
    
\item {\bf Code of ethics}
    \item[] Question: Does the research conducted in the paper conform, in every respect, with the NeurIPS Code of Ethics \url{https://neurips.cc/public/EthicsGuidelines}?
    \item[] Answer: \answerYes{} % Replace by \answerYes{}, \answerNo{}, or \answerNA{}.
    \item[] Justification: 
    \item[] Guidelines:
    \begin{itemize}
        \item The answer NA means that the authors have not reviewed the NeurIPS Code of Ethics.
        \item If the authors answer No, they should explain the special circumstances that require a deviation from the Code of Ethics.
        \item The authors should make sure to preserve anonymity (e.g., if there is a special consideration due to laws or regulations in their jurisdiction).
    \end{itemize}

\item {\bf Broader impacts}
    \item[] Question: Does the paper discuss both potential positive societal impacts and negative societal impacts of the work performed?
    \item[] Answer: \answerNA{} % Replace by \answerYes{}, \answerNo{}, or \answerNA{}.
    \item[] Justification: 
    \item[] Guidelines:
    \begin{itemize}
        \item The answer NA means that there is no societal impact of the work performed.
        \item If the authors answer NA or No, they should explain why their work has no societal impact or why the paper does not address societal impact.
        \item Examples of negative societal impacts include potential malicious or unintended uses (e.g., disinformation, generating fake profiles, surveillance), fairness considerations (e.g., deployment of technologies that could make decisions that unfairly impact specific groups), privacy considerations, and security considerations.
        \item The conference expects that many papers will be foundational research and not tied to particular applications, let alone deployments. However, if there is a direct path to any negative applications, the authors should point it out. For example, it is legitimate to point out that an improvement in the quality of generative models could be used to generate deepfakes for disinformation. On the other hand, it is not needed to point out that a generic algorithm for optimizing neural networks could enable people to train models that generate Deepfakes faster.
        \item The authors should consider possible harms that could arise when the technology is being used as intended and functioning correctly, harms that could arise when the technology is being used as intended but gives incorrect results, and harms following from (intentional or unintentional) misuse of the technology.
        \item If there are negative societal impacts, the authors could also discuss possible mitigation strategies (e.g., gated release of models, providing defenses in addition to attacks, mechanisms for monitoring misuse, mechanisms to monitor how a system learns from feedback over time, improving the efficiency and accessibility of ML).
    \end{itemize}
    
\item {\bf Safeguards}
    \item[] Question: Does the paper describe safeguards that have been put in place for responsible release of data or models that have a high risk for misuse (e.g., pretrained language models, image generators, or scraped datasets)?
    \item[] Answer: \answerNA{} % Replace by \answerYes{}, \answerNo{}, or \answerNA{}.
    \item[] Justification: 
    \item[] Guidelines:
    \begin{itemize}
        \item The answer NA means that the paper poses no such risks.
        \item Released models that have a high risk for misuse or dual-use should be released with necessary safeguards to allow for controlled use of the model, for example by requiring that users adhere to usage guidelines or restrictions to access the model or implementing safety filters. 
        \item Datasets that have been scraped from the Internet could pose safety risks. The authors should describe how they avoided releasing unsafe images.
        \item We recognize that providing effective safeguards is challenging, and many papers do not require this, but we encourage authors to take this into account and make a best faith effort.
    \end{itemize}

\item {\bf Licenses for existing assets}
    \item[] Question: Are the creators or original owners of assets (e.g., code, data, models), used in the paper, properly credited and are the license and terms of use explicitly mentioned and properly respected?
    \item[] Answer: \answerYes{} % Replace by \answerYes{}, \answerNo{}, or \answerNA{}.
    \item[] Justification: 
    \item[] Guidelines:
    \begin{itemize}
        \item The answer NA means that the paper does not use existing assets.
        \item The authors should cite the original paper that produced the code package or dataset.
        \item The authors should state which version of the asset is used and, if possible, include a URL.
        \item The name of the license (e.g., CC-BY 4.0) should be included for each asset.
        \item For scraped data from a particular source (e.g., website), the copyright and terms of service of that source should be provided.
        \item If assets are released, the license, copyright information, and terms of use in the package should be provided. For popular datasets, \url{paperswithcode.com/datasets} has curated licenses for some datasets. Their licensing guide can help determine the license of a dataset.
        \item For existing datasets that are re-packaged, both the original license and the license of the derived asset (if it has changed) should be provided.
        \item If this information is not available online, the authors are encouraged to reach out to the asset's creators.
    \end{itemize}

\item {\bf New assets}
    \item[] Question: Are new assets introduced in the paper well documented and is the documentation provided alongside the assets?
    \item[] Answer: \answerNA{} % Replace by \answerYes{}, \answerNo{}, or \answerNA{}.
    \item[] Justification: 
    \item[] Guidelines:
    \begin{itemize}
        \item The answer NA means that the paper does not release new assets.
        \item Researchers should communicate the details of the dataset/code/model as part of their submissions via structured templates. This includes details about training, license, limitations, etc. 
        \item The paper should discuss whether and how consent was obtained from people whose asset is used.
        \item At submission time, remember to anonymize your assets (if applicable). You can either create an anonymized URL or include an anonymized zip file.
    \end{itemize}

\item {\bf Crowdsourcing and research with human subjects}
    \item[] Question: For crowdsourcing experiments and research with human subjects, does the paper include the full text of instructions given to participants and screenshots, if applicable, as well as details about compensation (if any)? 
    \item[] Answer: \answerNA{} % Replace by \answerYes{}, \answerNo{}, or \answerNA{}.
    \item[] Justification: 
    \item[] Guidelines:
    \begin{itemize}
        \item The answer NA means that the paper does not involve crowdsourcing nor research with human subjects.
        \item Including this information in the supplemental material is fine, but if the main contribution of the paper involves human subjects, then as much detail as possible should be included in the main paper. 
        \item According to the NeurIPS Code of Ethics, workers involved in data collection, curation, or other labor should be paid at least the minimum wage in the country of the data collector. 
    \end{itemize}

\item {\bf Institutional review board (IRB) approvals or equivalent for research with human subjects}
    \item[] Question: Does the paper describe potential risks incurred by study participants, whether such risks were disclosed to the subjects, and whether Institutional Review Board (IRB) approvals (or an equivalent approval/review based on the requirements of your country or institution) were obtained?
    \item[] Answer: \answerNA{}% Replace by \answerYes{}, \answerNo{}, or \answerNA{}.
    \item[] Justification: 
    \item[] Guidelines:
    \begin{itemize}
        \item The answer NA means that the paper does not involve crowdsourcing nor research with human subjects.
        \item Depending on the country in which research is conducted, IRB approval (or equivalent) may be required for any human subjects research. If you obtained IRB approval, you should clearly state this in the paper. 
        \item We recognize that the procedures for this may vary significantly between institutions and locations, and we expect authors to adhere to the NeurIPS Code of Ethics and the guidelines for their institution. 
        \item For initial submissions, do not include any information that would break anonymity (if applicable), such as the institution conducting the review.
    \end{itemize}

\item {\bf Declaration of LLM usage}
    \item[] Question: Does the paper describe the usage of LLMs if it is an important, original, or non-standard component of the core methods in this research? Note that if the LLM is used only for writing, editing, or formatting purposes and does not impact the core methodology, scientific rigorousness, or originality of the research, declaration is not required.
    %this research? 
    \item[] Answer: \answerNA{} % Replace by \answerYes{}, \answerNo{}, or \answerNA{}.
    \item[] Justification: 
    \item[] Guidelines:
    \begin{itemize}
        \item The answer NA means that the core method development in this research does not involve LLMs as any important, original, or non-standard components.
        \item Please refer to our LLM policy (\url{https://neurips.cc/Conferences/2025/LLM}) for what should or should not be described.
    \end{itemize}

\end{enumerate}

\end{document}